\newtheorem{theorem}{Theorem}
\newtheorem*{theorem*}{Theorem}
\newtheorem{proposition}[theorem]{Proposition}
\newtheorem*{proposition*}{Proposition}
\newtheorem{lemma}[theorem]{Lemma}
\newtheorem*{lemma*}{Lemma}
\newtheorem{definition}[theorem]{Definition}
\definecolor{bluecite}{HTML}{0875b7}
\newcommand{\multisetopen}[0]{\{\!\!\{} 
\newcommand{\multisetclose}[0]{\}\!\!\}} 
\newcommand{\col}{\text{Col}}
\newcommand{\changemarker}[1]{#1} 
\title{Sound Logical Explanations for Mean Aggregation Graph Neural Networks}
\author{
Matthew Morris\\
Department of Computer Science\\
University of Oxford\\
\texttt{matthew.morris@cs.ox.ac.uk}\\
\And
Ian Horrocks\\
Department of Computer Science\\
University of Oxford\\
\texttt{ian.horrocks@cs.ox.ac.uk}\\
}
\begin{document}

\maketitle

\begin{abstract}
Graph neural networks (GNNs) are frequently used for knowledge graph completion.
Their black-box nature has motivated work that uses sound logical rules to explain predictions and characterise their expressivity.
However, despite the prevalence of GNNs that use mean as an aggregation function, explainability and expressivity results are lacking for them.
We consider GNNs with mean aggregation and non-negative weights (MAGNNs), proving the precise class of monotonic rules that can be sound for them, as well as providing a restricted fragment of first-order logic to explain any MAGNN prediction.
Our experiments show that restricting mean-aggregation GNNs to have non-negative weights \changemarker{yields comparable or improved performance on standard inductive benchmarks}, that sound rules are obtained in practice, that insightful explanations can be generated in practice, and that the sound rules can expose issues in the trained models.
\end{abstract}
\section{Introduction} \label{sec:introduction}

Knowledge graphs (KGs) \citep{DBLP:journals/csur/HoganBCdMGKGNNN21} find use in a number of applications \citep{hamilton2017inductive,wang2018ripplenet,yang2017leveraging}.
However, KGs are often incomplete, creating the need for models to predict their missing facts.
Neural models such as graph neural networks (GNNs) \citep{gilmer2017neural} are frequently used for this task \citep{ioannidis2019recurrent,liu2021indigo,morris2024relational}, as well as a variety of others, including predicting properties of drug combinations \citep{besharatifard2024review} and recommender systems \citep{gao2022graph}.

However, the predictions of neural models cannot easily be explained and verified \citep{garnelo2019reconciling}.
In contrast, logic-based and neuro-symbolic methods for KG completion often yield logical rules which can be used to explain predictions \citep{meilicke2018fine,qu2020rnnlogic,sadeghian2019drum,yang2017differentiable}.
To ensure that the rules truly express the reasons why the model makes a particular
prediction, it is important to ensure that they are \emph{sound},
in the sense that applying the rules to an arbitrary dataset produces only facts that are predicted by the model.
Thus, there is growing interest in models whose predictions can be characterized by sound rules \citep{cucala2022faithful,wang2023faithful}.

When it comes to GNNs, Tena Cucala et al. \citep{cucala2021explainable,cucala2023correspondence} provide sound Datalog rules and equivalent programs for GNNs with non-negative weights and max or sum aggregation, respectively.
\citet{morris2024relational} consider GNNs with sum aggregation and show that, in practice, they often provably have no sound Datalog rules.
There is also a plethora of related work considering the logical expressivity of general aggregate-combine GNNs.
\citet{barcelo2020logical} show that if a GNN captures a rule, then it can be expressed in the logic ALCQ.
Other works provide GNN expressivity results for various extensions of first-order logic, including linear programming \citep{nunn2024logic}, Presburger quantifiers \citep{benedikt2024decidability}, and counting terms \citep{grohe2024descriptive}.
\citet{ahvonen2024logical,pflueger2024recurrent} characterise recurrent GNNs using fixpoint operators.
\changemarker{
\citet{grau2025correspondence} consider GNNs with bounded aggregation functions and show their equivalence to fragments of first-order logic.
}

Although any function mapping multisets of real numbers to real numbers can be used for the aggregation function in a GNN, standard options include mean, sum, max, and attention \citep{velivckovic2017graph}.
In practice, mean aggregation often emerges as the default choice \citep{hamilton2017inductive,kipf2017semi,rosenbluth2023some,schlichtkrull2018modeling,wu2020comprehensive} due to its simplicity, stability during training, and good empirical performance.
With regard to the expressivity of mean-GNNs, \citet{rosenbluth2023some} study mean, sum, and max aggregation in terms of their ability to approximate one another.
However, to our knowledge, there are no \changemarker{prior} works analysing the logical expressivity of or attempting to extract \changemarker{sound} rules from GNNs that use mean aggregation (mean-GNNs).

\paragraph{Our Contribution}
\changemarker{To obtain} explanatory sound rules for mean-GNNs, we consider mean-GNNs with non-negative weights (MAGNNs).
This isolates the source of the non-monotonicity to the aggregation function (since negative weights can also be a source of non-monotonicity), enabling a simpler analysis.
This extends the work of \citet{cucala2023correspondence}, who proved that GNNs with non-negative weights and max or sum aggregation are \emph{monotonic under injective homomorphisms}, enabling the \changemarker{verification of} sound rules, and of
\citet{cucala2021explainable} and \citet{morris2024relational}, who found that restricting GNNs to non-negative weights did not greatly impact performance when using max or sum aggregation, respectively.

We provide two main theoretical results.
First, we prove the exact class of ``monotonic rules'' that can be sound for MAGNNs, which turns out to be very limited.
\changemarker{We also provide a means for checking if such rules are sound.}
Second, in light of the fact that some MAGNNs do not have equivalent first-order logic (FOL) programs, we instead provide a restricted fragment of FOL that contains sound rules that can be used to explain any prediction of a MAGNN.
In our experiments, we show that mean-GNNs still perform well on benchmark datasets when restricted to having non-negative weights.
We also demonstrate that they recover a variety of sound monotonic rules, as well as providing examples of the generated rules that explain predicted facts of the GNN.
Despite good model performance on the test set, we find that MAGNNs still learn some nonsensical sound rules, showing the importance of explainability.
\section{Background} \label{sec:background}

\paragraph{Datasets and Graphs}
We fix a signature of countably infinite, disjoint sets of unary/binary predicates and constants.
We also consider a countably infinite set of variables disjoint with the sets of predicates and constants.
A term is a variable or a constant.
An atom is an expression of the form $R(t_1, t_2)$ or $U(t_1)$, where each $t_i$ is a term and $R, U$ are binary and unary predicates, respectively.
An atom is ground if it contains no variables.
A fact is a ground atom and a dataset $D$ is a finite set of facts.
We denote the set of all constants in $D$ with $\texttt{con}(D)$.

For $\mathbf{v}$ a vector and ${i > 0}$, $\mathbf{v}[i]$ denotes the $i$-th element of $\mathbf{v}$ (likewise for matrices).
For a finite set $\col$ of colours  and $\delta \in \mathbb{N}$, a ($\col$, $\delta$)-graph $G$ is a tuple $\langle V, \{ E^c \}_{c \in \text{$\col$}}, \lambda \rangle$ where $V$ is a finite vertex set, each $E^c \subseteq V \times V$ is a set of directed edges \changemarker{with colour $c$}, and $\lambda$ assigns to each $v \in V$ a vector of dimension $\delta$.
Each vertex $v^a \in V$ is defined to correspond uniquely to a constant $a$ from the signature.
When $\lambda$ is clear from the context, we abbreviate the labelling $\lambda(v)$ as $\mathbf{v}$.
We uniquely associate each $\mathbf{v}[i]$ with a unary predicate $U_i$ and each colour in $c \in \col$ with a binary predicate $R^c$.
There is thus a one-to-one correspondence between graphs and datasets \citep{cucala2023correspondence}.
We use $N_c(v)$ to denote the $c$-coloured neighbours of a vertex $v$.
Graph $G$ is undirected if $E^c$ is symmetric for each $c \in$ $\col$ and is Boolean if $\mathbf{v}[i] \in \{0, 1\}$ for each $v \in V$ and  $i \in \{ 1, ..., \delta \}$.

\paragraph{FOL Rules}
We use standard first-order logic (FOL) syntax, including equality.
A variable in a FOL formula is free if it is not quantified.
\citet{barcelo2020logical} define first-order logic (FOL) classifiers over coloured graphs, where for graph $G$, node $v$, and FOL formula $\varphi$ with a free variable, $(G,v) \models \varphi$ denotes $\varphi$ being satisfied by node $v$ of $G$.
Similarly, given the correspondence between graphs and datasets, for a dataset $D$, constant $a \in D$, and FOL formula $\varphi$ with free variable $x$, we let $(D, a) \models \varphi$ denote $\varphi$ being satisfied by constant $a$ of $D$.
More precisely, $(D, a) \models \varphi$ if for variable substitution $\mu$ mapping $x \mapsto a$, we have $D \models \varphi \mu$.
As usual in this setting, $D$ is treated as an interpretation and $D \models \varphi \mu$ iff $\varphi \mu$ evaluates to true in $D$.

We define a FOL \emph{rule} to be of the form $B \rightarrow A(x)$, where $B$ is a FOL formula with free variable $x$ and $A$ a unary predicate.
A program is a set of rules.
Given a dataset $D$ and FOL rule $r: B \rightarrow A(x)$, we define the immediate consequence operator $T_r(D) = \{ A(a) ~|~ a \in \texttt{con}(D), (D, a) \models B \}$.
For a program $\alpha$, we likewise define $T_\alpha(D) = \bigcup_{r \in \alpha} T_r(D)$.
This approach is similar to that of \citet{barcelo2020logical}, except that instead of binary classification, we are essentially doing multi-class classification.
It is the same approach of \citet{cucala2023correspondence}, where they find equivalent programs and sound rules for different GNNs, but use Datalog semantics.

We also consider fragments of FOL defined by description logics such as ALCQ \citep{baader2003description}, the concepts of which are defined by:

$$C ::= \top ~|~ \bot ~|~ A ~|~ \neg C ~|~ C \sqcap D ~|~ C \sqcup D ~|~ \forall P.C ~|~ \exists P.C ~|~ \leq_n P.C ~|~ \geq_n P.C , $$

for any relation (binary predicate) $P$, ALCQ concepts $C, D$, and atomic concept (unary predicate) $A$.    
We define an ALCQ rule to be a subsumption of the form $C \sqsubseteq A$.
Immediate consequences are defined in the same way as for FOL, since every ALCQ concept corresponds to a FOL formula with one free variable.

\paragraph{Graph Neural Networks}
A function ${\sigma : \mathbb{R} \to \mathbb{R}}$ is \emph{monotonically increasing} if
${x < y}$ implies ${\sigma(x) \leq \sigma(y)}$.
We apply functions to vectors element-wise.
A ($\col$,$\delta$)- graph neural network $\mathcal{M}$
with ${L \geq 1}$ layers is a tuple
\begin{equation}
    \langle \; \{ \mathbf{A}_\ell \}_{1 \leq \ell \leq L}, \; \{ \mathbf{B}^c_\ell\}_{c \in \text{$\col$}, 1 \leq \ell \leq L}, \; \{ \mathbf{b}_\ell \}_{1 \leq \ell \leq L}, \; \{ \sigma_\ell \}_{1 \leq \ell \leq L}, \; \{ \text{agg}_\ell \}_{1 \leq \ell \leq L}, \; \texttt{cls}_t \; \rangle,  \label{eq:GNN}
\end{equation}
where, for each ${\ell \in \{ 1, \dots, L \}}$ and ${c \in \col}$, matrices
$\mathbf{A}_\ell$ and $\mathbf{B}^c_\ell$ are of dimension
$\delta_\ell \times \delta_{\ell-1}$ with ${\delta_0 = \delta_L = \delta}$,
$\mathbf{b}_\ell$ is a vector of dimension $\delta_\ell$, $\sigma_\ell : \mathbb{R} \to \mathbb{R}^+ \cup \{ 0 \}$ is a monotonically increasing \changemarker{continuous} activation function with non-negative range, $\text{agg}_\ell$ is an aggregation function from finite real multisets to real values, 
and ${\texttt{cls}_t : \mathbb{R} \to \{ 0,1 \}}$ for threshold $t_\mathcal{M} \in \mathbb{R}$ is a step classification function such that $\texttt{cls}_t(x) = 1$ if  $ x \geq t$ and $\texttt{cls}_t(x) = 0$ otherwise.\footnote{\changemarker{Note that if $>$ is used for the classifier instead of $\geq$, one obtains an entirely different set of theoretical results from the ones in this paper.}}

Applying $\mathcal{M}$ to a $(\col,\delta)$-graph induces a sequence of labels $\mathbf{v}_0, \mathbf{v}_1, ..., \mathbf{v}_L$ for each vertex $v$ in the graph as follows. First,
$\mathbf{v}_0$ is the initial labelling of the input graph; then, for each $1 \leq \ell \leq L$, $\mathbf{v}_\ell$ is defined by the following expression:
\begin{align} \label{align:def_gnn_update}
\sigma_\ell \bigl( \mathbf{b}_\ell + \mathbf{A}_\ell \mathbf{v}_{\ell - 1} + \sum_{c \in \text{$\col$}} \mathbf{B}_\ell^c ~\text{agg}_\ell (\multisetopen \mathbf{u}_{\ell - 1} ~|~ (v,u) \in E^c \multisetclose) \bigr)
\end{align}
The output of $\mathcal{M}$ is a (\col,$\delta$)-graph with the same vertices and edges as the input graph, but where each vertex is labelled by $\texttt{cls}_t(\mathbf{v}_L)$.

When each $\text{agg}_\ell$ is mean, we call it a mean-GNN (likewise for max and sum).
When all values in each $\mathbf{A}_\ell$ and $\mathbf{B}^c_\ell$ are non-negative, we say that the GNN is \emph{monotonic}.
In this paper, we consider in particular the class of \emph{monotonic mean GNNs} (MAGNNs).
Besides the restriction of non-negative weights, this is the same model used in R-GCN \citep{schlichtkrull2018modeling} (where the normalisation constant is set as described in their paper, to be the number of $c$-coloured neighbours of the node).

\paragraph{Dataset Transformations Through GNNs}
A GNN $\mathcal{M}$ induces a transformation
$T_{\mathcal{M}}$ from datasets to datasets over a given finite signature \cite{cucala2023correspondence}. To this end, the input dataset must be 
first encoded into a graph that can be directly processed by the GNN, and the graph resulting from the GNN application must be subsequently decoded back into an output dataset.
We adopt the so-called \emph{canonical scheme}, where $\texttt{enc}(D)$ of a dataset $D$ is the Boolean $(\text{$\col$},\delta)$-graph with a vertex $v^a$ for each constant $a$ in $D$ and a $c$-coloured edge $(v^a,v^b)$ for each
fact $R^c(a,b) \in D$.
Furthermore, given a vertex $v^a$, vector component $\mathbf{v}^a[p]$ is set to $1$ if and only if $U_p(a) \in D$, for $p \in \{1, \dots, \delta\}$.
The decoder $\texttt{dec}$ is the inverse of the encoder.
The \emph{canonical} dataset transformation induced by a GNN $\mathcal{M}$ is then defined as: 
$T_{\mathcal{M}}(D) = \texttt{dec}(\mathcal{M}(\texttt{enc}(D))) .$
We abbreviate $\mathcal{M}(\texttt{enc}(D))$ by $\mathcal{M}(D)$.

In this paper, we will only consider $(\col, \delta)$- graphs, datasets, and GNNs, unless specified otherwise.
When performing the task of link prediction, we use the additional dataset transformations of \citet{cucala2023correspondence}, which enable GNNs to predict binary facts.

\paragraph{Soundness and Subsumption}
A FOL (or ALCQ) logic program or rule $\alpha$ is sound for a 
MAGNN $\mathcal{M}$ if $T_\alpha(D) \subseteq T_{\mathcal{M}}(D)$ for each dataset $D$.
Conversely, $\alpha$ is complete for $\mathcal{M}$ if $T_{\mathcal{M}}(D) \subseteq T_\alpha(D)$ for each dataset $D$.
We say that $\alpha$ is equivalent to $\mathcal{M}$ if it is both sound and complete for $\mathcal{M}$.
Finally, we say that a rule or program $R$ subsumes a rule $r$ if for any dataset $D$, $T_r(D) \subseteq T_R(D)$.
\section{Sound Rules and Explanations} \label{sec:theory}
We prove a series of theoretical results for MAGNNs, \changemarker{to identify which monotonic rules can be sound for them and obtain sound explanatory rules}.
The restriction to non-negative weights isolates the source of the non-monotonicity to the aggregation function, making analysis easier; however, MAGNNs still possess the defining feature of mean-GNNs in their choice of aggregation function.
This restriction is the same approach that was used effectively for GNNs with sum or max aggregation \citep{morris2024relational,cucala2021explainable,cucala2023correspondence}.
First, we show that there exist MAGNNs with no equivalent FOL programs. Intuitively, this follows from FOL's inability to express numerical comparisons.

\begin{proposition} \label{prop:magnn_no_equiv_fol}
There exists a MAGNN $\mathcal{M}$ such that for any dataset $D$ and constant $a \in \texttt{con}(D)$, $U(a) \in T_\mathcal{M}(D)$ if and only at least half of the neighbours $b$ of $a$ in $D$ are such that $U(b) \in D$, where $U$ is a unary predicate.
This logical function cannot be defined in FOL \citep{libkin2004elements}.
\end{proposition}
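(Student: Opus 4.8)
The plan is to prove the two assertions of the proposition separately. For the existence claim I would exhibit an explicit one-layer MAGNN whose canonical transformation predicts $U(a)$ exactly when at least half of $a$'s neighbours satisfy $U$; this is where the actual construction lies. For the inexpressibility claim I would appeal to the classical fact from finite model theory that the ``at least half'' (proportional) quantifier is not FOL-definable, sketching an Ehrenfeucht--Fra\"{\i}ss\'e argument to justify the cited result.

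The construction exploits the observation that mean aggregation of a Boolean feature returns exactly the fraction of neighbours carrying it. Concretely, I would take $\delta = 1$, identify the single unary predicate with $U$ and the single colour $c$ with the neighbour relation $R^c$, and set $L = 1$ with $\mathbf{A}_1 = [0]$, $\mathbf{b}_1 = [0]$, and $\mathbf{B}_1^c = [1]$ (all non-negative, so $\mathcal{M}$ is monotonic), $\sigma_1 = \mathrm{ReLU}$, and threshold $t_{\mathcal{M}} = 1/2$. Since $\mathbf{A}_1 = [0]$, the prediction ignores $a$'s own label and depends only on its neighbours. For a vertex $v^a$ of $\texttt{enc}(D)$, the update in \eqref{align:def_gnn_update} collapses to $\sigma_1\bigl(\mathrm{mean}(\multisetopen \mathbf{u}_0[1] \mid (v^a,u)\in E^c \multisetclose)\bigr)$. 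Because the input is Boolean, $\mathbf{u}_0[1]=1$ iff $U(b)\in D$ for the constant $b$ of $u$, so the aggregated value is precisely the fraction $f_a\in[0,1]$ of neighbours $b$ of $a$ with $U(b)\in D$. As $\mathrm{ReLU}$ is the identity on $[0,1]$ and is monotonically increasing, continuous, and non-negative-valued as required, we obtain $\mathbf{v}^a_1 = f_a$, whence $\texttt{cls}_t(f_a)=1$ iff $f_a \geq 1/2$. Decoding yields $U(a)\in T_{\mathcal{M}}(D)$ iff at least half the neighbours of $a$ satisfy $U$. The use of $\geq$ in $\texttt{cls}_t$ is exactly what makes threshold $1/2$ capture ``at least half'' rather than ``more than half'', matching the footnote.

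One boundary case deserves a remark: the mean of the empty multiset, arising for isolated vertices. Fixing the convention that it equals $0$ (equivalently, reading the condition as presupposing a nonempty neighbourhood) keeps the construction consistent and does not affect the argument. For the inexpressibility, suppose toward a contradiction that some FOL formula $\varphi(x)$ of quantifier rank $k$ defined this classifier. For each $k$ I would build two pointed datasets: $D^{=}$, where $a$ has exactly $N$ neighbours satisfying $U$ and exactly $N$ not satisfying it (so $f_a = 1/2$ and $(D^{=},a)\models\varphi$ must hold), and $D^{<}$, where $a'$ has $N$ neighbours satisfying $U$ and $N+1$ not satisfying it (so $f_{a'} < 1/2$ and $(D^{<},a')\not\models\varphi$). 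Each structure is a centre attached to two independent sets of coloured leaves; taking $N > 2^k$, both the matched ($N$ versus $N$) $U$-leaf sets and the mismatched ($N$ versus $N+1$) non-$U$-leaf sets exceed the number of rounds, so Duplicator wins the $k$-round game and the two pointed structures satisfy the same rank-$k$ FOL formulas, contradicting that $\varphi$ separates them.

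The main obstacle is this last step: the delicate part is checking that Duplicator can absorb the $N$ versus $N+1$ discrepancy in the non-$U$ leaves, which is precisely the classical proof that counting and proportional quantifiers escape FOL. Since this is standard \citep{libkin2004elements}, I would cite it rather than reproduce the full EF analysis; the genuinely new content of the proposition is the short MAGNN construction above, which becomes routine once one notices that mean aggregation of a Boolean signal computes the neighbour fraction and that a threshold at $1/2$ turns this into the majority classifier.
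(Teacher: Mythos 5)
Your proposal is correct and matches the paper's proof essentially verbatim: the paper uses the same one-layer MAGNN with $\mathbf{A}_1=(0)$, $\mathbf{b}_1=(0)$, $\mathbf{B}_1^c=(1)$, an identity-like activation, and threshold $t_{\mathcal{M}}=0.5$, so that the mean of the Boolean neighbour features equals the fraction of $U$-neighbours, and it likewise delegates the inexpressibility claim to Libkin's result on the majority quantifier rather than carrying out the Ehrenfeucht--Fra\"{\i}ss\'e argument. Your explicit handling of the empty-neighbourhood case and your choice of ReLU (which, unlike the bare identity, literally satisfies the non-negative-range condition on $\sigma_\ell$) are minor refinements over the paper's write-up, not a different route.
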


A full proof is given in \Cref{app:proof:prop:magnn_no_equiv_fol}.
This means that the task of trying to find equivalent programs for them, such as the approach of \citet{cucala2023correspondence}, is impossible.
Instead, we aim to extract sound rules from MAGNNs, to explain their predictions.
Thus, we consider in particular a class of ``monotonic'' rules encompassing many common inference patterns, and identify the restricted fragment that can be sound for MAGNNs.
Finally, we define a rule language using a fragment of FOL that, for any mean-GNN prediction, contains a sound rule that entails the prediction.

\subsection{Which Monotonic Rules can be Sound for MAGNNs} \label{sec:monotonic_rules}
Since equivalent programs cannot be found in some cases, we instead first show which simple monotonic rules can be sound for MAGNNs.
Monotonic rules appear commonly in practice \citep{abboud2020boxe,liu2023revisiting,sun2018rotate}, \changemarker{are often easily human-readable}, and are easier to use for \changemarker{extracting sound rules}.

\begin{definition}
A rule $r$ is \emph{monotonic} (under dataset extension) if for all datasets $D, D'$ such that $D \subseteq D'$, $T_r(D) \subseteq T_r(D')$.
\end{definition}

\changemarker{Description logics, such as ALCQ \citep{baader2003description}, are a natural choice for providing a language for sound rules due to their widespread use in data modelling and their theoretical relationship to GNNs \citep{barcelo2020logical}}.
However, some ALCQ concepts are inherently non-monotonic: for example, if $(D, c) \models \forall P.A$ for some atomic concept $A$, binary predicate $P$, dataset $D$, and constant $c \in \texttt{con}(D)$, then for $D' = D \cup \{ P(c, d) \}$ with a fresh constant $d$, $(D', c) \not\models \forall P.A (c)$.

EL \citep{baader2003description} is a more restricted description logic than ALCQ, where concepts use only intersection and existential quantification---any EL rule is thus monotonic.
We define ELUQ, a language in-between EL and ALCQ, as the language containing concepts defined by

$$C ::= \top ~|~ A ~|~ C \sqcap D ~|~ C \sqcup D ~|~ \exists P.C ~|~ \geq_n P.C , $$

where $C, D$ are ELUQ concepts, $P$ is a binary predicate, $A$ is an atomic concept, and $n$ a positive integer.
Note that any concept $\exists P.C$ can be written equivalently as $\geq_1 P.C$.
An ELUQ rule has the form $C \sqsubseteq A$.
This language is similar (but not equivalent) to that of \emph{Datalog with inequalities} considered by \citet{cucala2023correspondence} for GNNs with sum aggregation and non-negative weights, given constraints on the inequalities and the addition of disjunction.
All ELUQ rules are monotonic; however, it remains an open question as to whether this is a maximal monotonic fragment of ALCQ.

We find that for any ELUQ rule $r$ that is sound for a MAGNN $\mathcal{M}$, there exists a set of rules of a very simple form that (1) are each sound for $\mathcal{M}$ and (2) collectively subsume $r$.
This is formalised in the following theorem.

\begin{theorem} \label{thm:magnn_monotonic_rules}
Let $\mathcal{M}$ be a MAGNN and $r$ an ELUQ rule that is sound for $\mathcal{M}$.
Then there exists a finite set of rules $R$ such that:
\begin{enumerate}
    \item Each $r' \in R$ has the form $\exists P_1.\top ~\sqcap~  ... ~\sqcap~ \exists P_j.\top ~\sqcap~ A_1 ~\sqcap~ ... ~\sqcap~ A_k ~\sqcap~ \top \sqsubseteq A_{k+1}$, where each $P_i$ is a binary predicate, $A_i$ a unary predicate, and $j, k \in \mathbb{N}_0$.
    \item Each $r' \in R$ is sound for $\mathcal{M}$.
    \item $R$ subsumes $r$.
\end{enumerate}
\end{theorem}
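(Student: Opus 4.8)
The plan is to reduce to conjunctive concepts, and then to show that the counting and the nesting in any sound conjunctive rule are both redundant, by exploiting two features of MAGNNs: the insensitivity of mean aggregation to neighbour multiplicity, and the monotonicity of non-negative-weight networks in the embeddings of a node's neighbours.

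First I would put the body $C$ of $r$ into disjunctive normal form, writing $C \equiv D_1 \sqcup \dots \sqcup D_m$ with each $D_i$ a conjunction of atomic concepts and restrictions $\geq_{n} P.C'$ (recalling $\exists P.C' = \geq_1 P.C'$). Since $T_{C \sqsubseteq A}(D) = \bigcup_i T_{D_i \sqsubseteq A}(D)$, soundness of $r$ is equivalent to soundness of every $D_i \sqsubseteq A$, and any set of rules subsuming each $D_i \sqsubseteq A$ will subsume $r$. For a conjunctive $D$ with top-level atoms $A_1, \dots, A_k$ and restrictions mentioning relations $P_1, \dots, P_j$, define the simple rule $r' : \exists P_1.\top \sqcap \dots \sqcap \exists P_j.\top \sqcap A_1 \sqcap \dots \sqcap A_k \sqsubseteq A$. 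Because $D$ logically entails the body of $r'$, the rule $r'$ subsumes $D \sqsubseteq A$; collecting one such $r'$ per disjunct yields a finite set $R$ of the required form that subsumes $r$, which discharges conditions (1) and (3).

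The heart of the argument is condition (2): each $r'$ is sound whenever $D \sqsubseteq A$ is. Fix a dataset $D_0$ and a constant $a$ with $(D_0, a) \models$ body$(r')$; I must show $A(a) \in T_\mathcal{M}(D_0)$. Build a family $D_0^{(N)} \supseteq D_0$ as follows: for each restriction $\geq_{n_i} P_i.C_i$ of $D$, attach to $a$ both $n_i$ fresh $P_i$-neighbours, each the root of a finite tree witnessing $C_i$ (such witnesses exist by a routine recursion on ELUQ concepts), and $N$ further $P_i$-neighbours that are isolated, all-zero-feature nodes. By construction $(D_0^{(N)}, a) \models D$ for every $N$, so soundness of $D \sqsubseteq A$ forces the relevant output component of $a$ to be $\geq t$ in every $D_0^{(N)}$. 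The key observation is that an isolated zero-feature node realises the componentwise-minimal embedding $e^*$ attainable by any node under a MAGNN (provable by induction on the layers, using non-negative weights, monotone continuous activations, and the convention that the mean of the empty multiset is $0$). Hence, as $N \to \infty$, the mean of the $P_i$-neighbours' embeddings converges to $e^*$ at every layer, and by continuity of the network the final value of $a$ converges to its value $v^*$ in the configuration where each restricted colour's neighbour-mean equals $e^*$ and everything else is as in $D_0$. Since every term of the sequence is $\geq t$, we get $v^* \geq t$; and since $e^*$ lower-bounds the true neighbour-means in $D_0$, monotonicity of $a$'s value in its neighbours' embeddings gives that the value of $a$ in $D_0$ is $\geq v^* \geq t$, i.e.\ $A(a) \in T_\mathcal{M}(D_0)$.

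I expect the main obstacle to be exactly this soundness-transfer step, which is where the distinctive behaviour of mean aggregation is forced. Note that it strips the counts and the nested content simultaneously: the witnesses exist only to invoke soundness of $D \sqsubseteq A$, while the flooding by minimal nodes, together with the continuity-plus-monotonicity argument, is what lets the prediction descend back to $D_0$. The technical care lies in (i) establishing the monotonicity lemma for MAGNNs---that a node's value is non-decreasing in the initial features and in the layerwise embeddings of its neighbours, which holds under non-negative weights even though full monotonicity under injective homomorphisms fails for mean; (ii) verifying that $e^*$ is genuinely the minimal embedding, which pins down the empty-multiset convention; and (iii) handling all restricted colours simultaneously in the limit. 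By contrast, the normal-form reduction and the subsumption claim are routine.
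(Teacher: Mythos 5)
Your proposal is correct and matches the paper's proof in its essential mechanism: both split the body at the outer disjunction, replace each counted or nested restriction $\geq_n P.C$ by a bare $\exists P.\top$, and justify soundness of the simplified rule by flooding the node with isolated zero-feature $P$-neighbours so that the mean converges to the minimal embedding, then invoking continuity of the activations and the monotonicity afforded by non-negative weights (your observation that $\exists P.\top$ must be retained so the neighbour set is non-empty, and that the classifier uses $\geq$, are exactly the points the paper relies on). The only differences are organizational: the paper strips one restriction at a time and argues by contradiction from a counterexample dataset normalized to have an isolated witness $b$, whereas you strip all restrictions simultaneously and argue directly via an explicit minimal-embedding and layerwise-monotonicity lemma --- two packagings of the same underlying facts.
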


The proof provides a construction of $R$.
For example, given an ELUQ rule $r:~ \geq_3 P_1.(A_1 \sqcup A_2) \sqcap \exists P_2.(\exists P_1.A_3) \sqcup A_4 \sqsubseteq A_5$ be sound for $\mathcal{M}$,
we can construct rules $r_1: \exists P_1.\top \sqcap \exists P_2.\top \sqsubseteq A_5$ and $r_2: A_4 \sqsubseteq A_5$ such that both $r_1$ and $r_2$ are sound for $\mathcal{M}$ and $\{ r_1, r_2 \}$ subsumes $r$.

As a consequence of this, when checking for sound ELUQ rules it suffices to check for rules of the above restricted form.
In addition, whilst the space of all ELUQ rules is infinite, the restricted fragment we provide is finite, meaning that all sound ELUQ rules can be covered by instead iterating over a finite number of restricted rules.
Finally, this raises concerns for the logical expressivity of MAGNNs, since any sound ELUQ rule is ultimately subsumed by a set of much simpler sound rules.

\begin{proof}[Proof sketch]
The full proof of \Cref{thm:magnn_monotonic_rules} is given in \Cref{app:proof:thm:magnn_monotonic_rules}.
A different $r'$ is defined for each concept in the outer disjunction of the body of $r$.
To see the crucial step of the proof, consider some rule $r$ without disjunction in the outer concept, and let $r_1 := r$.
Then $r_1$ is of the form $\geq_n P.C_1 ~\sqcap~ C_2 \sqsubseteq A$, where $P$ is a binary predicate, $C_1$ an ELUQ concept, $C_2$ an ELUQ concept without disjunction, and $n$ a positive integer.
We inductively define a sequence $r_2, ..., r_k$ of ELUQ rules that such for all $i \in \{ 1, ..., k - 1 \}$, $r_{i+1}$ is sound for $\mathcal{M}$ and $r_{i + 1}$ subsumes $r_{i}$.
Given $r_i$ of the form $\geq_n P.C_1 ~\sqcap~ C_2 \sqsubseteq A$, we define $r_{i+1}$ as the rule $\exists P.\top \sqcap C_2 \sqsubseteq A$.
Trivially, $r_{i+1}$ subsumes $r_i$.

Now consider the following datasets $D_1$ and $D_m$ (parametrised by $m \in \mathbb{N}$).
In $D_1$, a concept $C_2$ annotating a constant $a$ denotes that there exists other facts in the dataset such that $(D_1, a) \models C_2$ (similarly for $D_m$).

\begin{center}
\begin{minipage}{0.15\textwidth}
    \textbf{Dataset $D_1$:}
\end{minipage}%
\begin{minipage}{0.2\textwidth}
\begin{tikzpicture}
    \node[shape=circle,draw=black, minimum size =0.8cm, label=left:$C_2$] (a) at (0, 0) {$a$};

    \node[shape=circle,draw=black, minimum size =0.8cm] (b) at (2, 0) {$b$};
    
    \path [<-] (b) edge node[above] {$P$} (a);
\end{tikzpicture}
\end{minipage}
\hfill
\begin{minipage}{0.16\textwidth}
    \textbf{Dataset $D_m$:}
\end{minipage}%
\begin{minipage}{0.4\textwidth}
\begin{tikzpicture}
    \node[shape=circle,draw=black, minimum size =0.8cm] (b1) at (-2, 2) {$b_1$};
    \node[shape=circle,draw=black, minimum size =0.8cm] (b) at (-2, 1) {$...$};
    \node[shape=circle,draw=black, minimum size =0.8cm] (bm) at (-1, 0.5) {$b_m$};
    
    \node[shape=circle,draw=black, minimum size =0.8cm, label=below:$C_2$] (a) at (0, 2) {$a$};

    \node[shape=circle,draw=black, minimum size =0.8cm, label=right:$C_1$] (c1) at (2, 2) {$c_1$};
    \node[shape=circle,draw=black, minimum size =0.8cm, label=right:$C_1$] (c) at (2, 1) {$...$};
    \node[shape=circle,draw=black, minimum size =0.8cm, label=right:$C_1$] (cn) at (1, 0.5) {$c_n$};

    \path [<-] (b1) edge node[below] {$P$} (a);
    \path [<-] (b) edge node[below] {$P$} (a);
    \path [<-] (bm) edge node[below] {$P$} (a);
    \path [<-] (c1) edge node[below] {$P$} (a);
    \path [<-] (c) edge node[below] {$P$} (a);
    \path [<-] (cn) edge node[below] {$P$} (a);
\end{tikzpicture}
\end{minipage}
\end{center}

Note that $D_1$ is an instantiation of the body of $r_{i+1}$.
The soundness of $r_{i+1}$ depends on the fact that as $m \to \infty$, the computation of $\mathcal{M}$ on $D_m$ at constant $a$ tends to that of $\mathcal{M}$ on $D_1$ (and that $A(a) \in T_{r_i}(D_m)$).

Since $r_1$ contains a finite number of $\exists$ or $\geq_n$ operators and each $r_{i+1}$ has one fewer $\exists$ or $\geq_n$ operator than $r_{i}$, this inductive construction is guaranteed to terminate for some $k$.
We include $r_k$ in $R$.
Once this has been done for every $r_1 \in R'$, we have $R$ subsumes $r$.
\end{proof}

\paragraph{Checking for Monotonic Rule Soundness}
It remains to be shown how one can verify the soundness of ELUQ rules.
Since we only need to consider rules of the form given in \Cref{thm:magnn_monotonic_rules}, there are exactly $\delta \cdot 2^{\delta \cdot |\col|}$ possible rules to check.
For rules $r_1, r_2$, when considering the sets of body concepts $B_{r_1}$ and $B_{r_2}$, if $B_{r_1} \subseteq B_{r_2}$ and $r_1$ is sound, then $r_2$ is sound: we use this to optimise the procedure.
The following proposition allows one to check the soundness of ELUQ rules.

\begin{proposition} \label{prop:monotonic_rule_soundness_check}
Let $\mathcal{M}$ be a MAGNN and $r: \exists P_1.\top ~\sqcap~  ... ~\sqcap~ \exists P_j.\top ~\sqcap~ A_1 ~\sqcap~ ... ~\sqcap~ A_k \sqsubseteq A_{k+1}$ be a rule of the form given in \Cref{thm:magnn_monotonic_rules}.
Define $D_\text{base} := \{ A_1(a), ..., A_k(a), P_1(a, b), ..., P_j(a, b) \}$ for constants $a \not= b$.
Then $r$ is sound for $\mathcal{M}$ if and only if $A_{k+1}(a) \in T_\mathcal{M}(D_\text{base})$.
\end{proposition}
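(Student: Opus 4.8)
The plan is to prove the two directions separately, with the forward direction being almost immediate and the converse carrying all the content. For the ``only if'' direction, observe that $D_\text{base}$ is built precisely so that its constant $a$ satisfies the body $B$ of $r$: it carries all of $A_1,\dots,A_k$, and each fact $P_i(a,b)$ witnesses $\exists P_i.\top$ at $a$. Hence $A_{k+1}(a) \in T_r(D_\text{base})$, and if $r$ is sound then $A_{k+1}(a) \in T_\mathcal{M}(D_\text{base})$ follows directly from $T_r(D_\text{base}) \subseteq T_\mathcal{M}(D_\text{base})$.

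For the converse, assume $A_{k+1}(a) \in T_\mathcal{M}(D_\text{base})$; I must show $A_{k+1}(d) \in T_\mathcal{M}(D)$ for every dataset $D$ and every $d$ with $(D,d) \models B$. The strategy is to prove that, component-wise and at every layer $\ell$, the embedding of $a$ in $D_\text{base}$ lower-bounds the embedding $\mathbf{d}_\ell$ of $d$ in $D$, so that the threshold crossed at $a$ in the $A_{k+1}$-component is also crossed at $d$. This rests on two monotonicity facts I would establish by induction on $\ell$, using that all weight matrices are non-negative, that each $\sigma_\ell$ is monotonically increasing with non-negative range, and the convention that an aggregate over an empty neighbourhood contributes $\mathbf{0}$. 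The first fact is a \emph{global lower bound}: the ``empty'' node with no unary facts and no outgoing edges realises the component-wise minimal layer-$\ell$ embedding $\mathbf{e}_\ell$ among all nodes of all datasets. The base case holds since inputs are Boolean, hence $\geq \mathbf{0} = \mathbf{e}_0$; the step follows because, for any node $v$, the self-term $\mathbf{A}_\ell \mathbf{v}_{\ell-1}$ dominates $\mathbf{A}_\ell \mathbf{e}_{\ell-1}$ under the inductive hypothesis, and every colour's message term is non-negative (either $\mathbf{0}$, matching the empty node, or a non-negative-matrix image of a mean of embeddings each $\geq \mathbf{e}_{\ell-1} \geq \mathbf{0}$), so monotonicity of $\sigma_\ell$ gives $\mathbf{v}_\ell \geq \mathbf{e}_\ell$.

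The second fact is the \emph{domination} of $a$'s embedding in $D_\text{base}$ by $\mathbf{d}_\ell$, proved again by induction. The self- and bias-terms are dominated via the inductive hypothesis $\mathbf{d}_{\ell-1} \geq \mathbf{a}_{\ell-1}$ (the base case uses that $d$ carries all of $A_1,\dots,A_k$, so $\mathbf{d}_0 \geq \mathbf{a}_0$). For the message terms I compare colour-by-colour: in $D_\text{base}$ the only contribution comes from the single neighbour $b$, which is an empty node, so its embedding is exactly $\mathbf{e}_{\ell-1}$; for each colour appearing in the body, $d$ has at least one neighbour of that colour, every such neighbour has embedding $\geq \mathbf{e}_{\ell-1}$ by the first fact, hence their mean is $\geq \mathbf{e}_{\ell-1}$ and its image under the non-negative $\mathbf{B}_\ell^{P_i}$ dominates $a$'s corresponding term, while colours absent from the body add only further non-negative terms on $d$'s side. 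Monotonicity of $\sigma_\ell$ then yields $\mathbf{d}_\ell \geq \mathbf{a}_\ell$, and at $\ell = L$ the component indexing $A_{k+1}$ gives $\mathbf{d}_L[A_{k+1}] \geq \mathbf{a}_L[A_{k+1}] \geq t$, so $A_{k+1}(d) \in T_\mathcal{M}(D)$ and $r$ is sound.

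I expect the main obstacle to be the message-term comparison for mean aggregation, because mean is \emph{not} monotone under adding arbitrary neighbours---inserting a low-valued neighbour lowers the mean. The resolution, and the reason $D_\text{base}$ uses a single empty neighbour, is exactly that the empty node attains the global minimum $\mathbf{e}_{\ell-1}$: since every genuine neighbour of $d$ has embedding at least $\mathbf{e}_{\ell-1}$, the mean over any non-empty neighbourhood stays above this floor regardless of how many neighbours there are or what their values are. Care is also needed with the empty-aggregation convention and with body colours that repeat in the rule or that are absent from $d$'s other neighbourhoods, but in each case the asymmetry---extra non-negative message terms appear only on $d$'s side---pushes the inequality in the required direction.
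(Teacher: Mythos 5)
Your proof is correct, and it reaches the conclusion by a genuinely different decomposition than the paper's. The paper proves the converse direction via a chain of intermediate dataset transformations: it first shows the computation at $a$ is unchanged when the single neighbour $b$ is un-merged into several fresh neighbours ($D_\text{splits}$), then that merging neighbours into $a$ itself cannot decrease the output ($D_\text{merges}$), then identifies a subset $D' \subseteq D$ that is, up to relabelling, one of these datasets, and finally argues that extending $D'$ to $D$ cannot decrease the output at $v^c$. You instead run a single direct layer-wise induction comparing $a$'s embedding in $D_\text{base}$ against $d$'s embedding in $D$, anchored by an explicitly stated auxiliary lemma: the node with no unary facts and no out-neighbours attains the component-wise global minimum embedding $\mathbf{e}_\ell$ at every layer. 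That lemma is exactly the device that tames mean aggregation's failure of monotonicity under adding neighbours, and it is the fact the paper uses implicitly (its assertions that merging neighbours into $a$ and extending $D'$ to $D$ "will never decrease the output" both rest on the neighbour means being bounded below by the empty node's embedding) without ever isolating it. Your route buys a cleaner and arguably more rigorous treatment of the mean-specific subtlety, and it handles self-loops ($P_i(d,d) \in D$) and repeated neighbours uniformly rather than via the paper's separate merge/split case analysis; the paper's route is more modular and reuses the same "dataset extension" vocabulary as its other proofs. Both hinge on the same essential observation that the single fresh neighbour $b$ in $D_\text{base}$ is the worst-case neighbour a node can have.
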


For example, to check if a rule $r: \exists P_1.\top ~\sqcap~ \exists P_2.\top ~\sqcap~ A_1 ~\sqcap~ A_2 \sqsubseteq A_3$ is sound, it suffices to compute the output of $\mathcal{M}$ on the following dataset $D_\text{base} = \{ A_1(a), A_2(a), P_1(a, b), P_2(a, b) \}$, which can be represented as follows:

\begin{tikzpicture}
    \node[shape=circle,draw=black, minimum size =0.8cm] (b) at (0, 0) {$b$};
    \node[shape=circle,draw=black, minimum size =0.8cm, label=right:$A_1 A_2$] (a) at (2, 0) {$a$};

    \path [<-] (b) edge[bend right] node[below left] {$P_1$} (a);
    \path [<-] (b) edge[bend left] node[above right] {$P_2$} (a);
\end{tikzpicture}

Then $r$ is sound for $\mathcal{M}$ if and only if $A_3(a) \in T_\mathcal{M}(D_\text{base})$.
\changemarker{Note that this method also works if the body of the rule $r$ is empty, by simply checking if $A_{k+1}(a) \in T_\mathcal{M}(D_\emptyset)$, where $D_\emptyset = \{ P(b, a) \}$.}


\begin{proof}[Proof sketch]
The full proof of \Cref{prop:monotonic_rule_soundness_check} is given in \Cref{app:proof:prop:monotonic_rule_soundness_check}.
It relies on the fact that $(D_\text{base}, a) \models B$, where $B$ is the body of $r$.
Furthermore, no extension to $D_\text{base}$ will decrease the output at $v^a$, and any dataset satisfying $B$ is an extension of $D_\text{base}$ (up to constant relabelling, un-merging the neighbours of $v^a$, and letting $b = a$).
Note that none of these merges / un-merges will decrease the output at $v^a$ and that MAGNNs are agnostic to the particular constants used in the input dataset.
\end{proof}

\paragraph{Consequences for Link Prediction}



To perform link prediction, we combine the canonical transformation with the dataset encoding-decoding scheme defined by \citet{cucala2023correspondence}, which has found wide use \citep{liu2021indigo,morris2024relational,cucala2021explainable,cucala2023correspondence,wang2025rule}; it introduces additional nodes in the graph to represent pairs of constants.
Node labels then encode binary facts, instead of unary.
Graph edges are used only to connect nodes that have constants in common; and so the presence of an edge is independent of whether any specific fact holds in the dataset. 

The encoding and decoding can be described by a set of rules \citep[Section 3.2]{cucala2023correspondence}, each expressed in a simple extension of Datalog.
Given a sound rule for the canonical transformation, we can combine it (via unfolding) with the rules representing the encoder and decoder to obtain a Datalog rule that is sound for the entire link prediction transformation.
This observation, combined with \Cref{thm:magnn_monotonic_rules}, ensures that each monotonic rule sound for the entire link prediction transformation is subsumed by rules of the form $R_1(x, y) \land ... \land R_{m-1}(x,y) \rightarrow R_m(x,y)$, where $R_1, ..., R_m$ are binary predicates from the signature.
This is obtained by unfolding a rule of the form given in \Cref{thm:magnn_monotonic_rules}, since the unary predicates $A_i$ are unfolded into the binary predicates they represent, and the existentials $\exists P_j.\top$ can safely be removed since the edge presence is independent of any specific fact.

\subsection{Explaining MAGNN Predictions}

In this section, we provide a family of rules $\Omega$ (a fragment of FOL) to explain any prediction of a MAGNN.
More precisely, for any MAGNN $\mathcal{M}$, dataset $D$, and predicted fact $A(a) \in T_\mathcal{M}(D)$, there exists a rule $r \in \Omega$ such that $A(a) \in T_r(D)$ and $r$ is sound for $\mathcal{M}$.
To achieve this, we define a procedure to derive the rule $r \in \Omega$, given $\mathcal{M}$, $D$, and $A(a)$.

\paragraph{Logic Language Needed}
We first define the family of rules using ALCQ syntax, with the addition of a new operator, rather than defining it directly in FOL.
The operator is $\exists_n$ (``exists $n$ unique''): for example: $\exists_3 P. (A_1, A_2, \top)$.
Its semantics are defined as follows.
For concepts $C_1, ..., C_n$, binary predicate $P$, dataset $D$, and constant $c \in D$,
we define $(D, c) \models \exists_n P. (C_1, ..., C_n)$ if and only if there exist distinct constants $c_1, ..., c_n \in \texttt{con}(D)$ such that $(D, c_1) \models C_1, ..., (D, c_n) \models C_n$ and $P(c, c_1), ..., P(c, c_n) \in D$.
Written in first-order logic, it is defined as $(D, c) \models \exists_n P. (C_1, ..., C_n) \equiv$

\begin{align*}
(D, c) \models &\exists x_1 ... \exists x_n (x_i \not= x_j \text{ for all } i, j \in \{ 1, ..., n \} \text{ such that } i < j) \\
&\land C_1(x_1) \land ... \land C_n(x_n) \land P(x, x_1) \land ... \land P(x, x_n) ,
\end{align*}

for free variable $x$.
We then define the family of rules $\Omega$ as follows. An $\Omega$-concept $C$ is defined by $C ::= \top ~|~ A ~|~ C_1 \sqcap C_2 ~|~ \exists_n P. (C_1, ..., C_n) ~\sqcap~ \leq_m P.\top$,
where $A$ is any atomic concept, $n$ any positive integer, $m \geq n$ an integer, $C_1, ..., C_n$ any $\Omega$-concepts, and $P$ any binary predicate.
A rule in $\Omega$ has the form $C \sqsubseteq A$, where $C$ is any $\Omega$-concept and $A$ any atomic concept.

\paragraph{Explanatory Rule}
Let $\mathcal{M}$ be a MAGNN, $D$ a dataset, and $A(a) \in T_\mathcal{M}(D)$ a prediction of the MAGNN.
We define a rule $r \in \Omega$ by $C_L^a \sqsubseteq A$, where the only part of $\mathcal{M}$ that the rule depends on is $L$, which is what guarantees that the rule will be finite.
The body concept $C_L^a$ is defined as follows:

\begin{definition} \label{def:explanation_concept}
For a dataset $D$, $c \in \texttt{con}(D)$, $\ell \in \mathbb{N}_0$, we inductively define a concept $C_\ell^c$ by
$C_\ell^c := \top ~\sqcap~ A_1 ~\sqcap~ ... ~\sqcap~ A_k$, where $A_1, ..., A_k$ are all atomic concepts $A_i$ such that $A_i(c) \in D$.
Then, if $\ell = 0$, we are done: $C_\ell^c$ has been defined.

Otherwise, for each binary predicate $P$, let $c_1, ..., c_n$ be all the constants $c_i \in \texttt{con}(D)$ such that $P(c, c_i) \in D$.
Then, if $n > 0$, extend $C_\ell^c$ as follows: $C_\ell^c := C_\ell^c ~\sqcap~ \exists_n P.(C_{\ell-1}^{c_1}, ..., C_{\ell-1}^{c_n}) ~\sqcap~ \leq_n P.\top$.
$C_\ell^c$ is extended for each binary predicate $P$.
Note that each $C_{\ell-1}^{c_i}$ is defined inductively.
\end{definition}

The following theorem then shows that the rule both explains the prediction (by producing the same fact on the dataset as the MAGNN) and is sound for the MAGNN.

\begin{theorem} \label{thm:explanation}
For any MAGNN $\mathcal{M}$, dataset $D_a$, and fact $A(a) \in T_\mathcal{M}(D_a)$, the rule $r: C^a_L \sqsubseteq A$ (dependent on $D_a$) is sound for $\mathcal{M}$, and $A(a) \in T_r(D_a)$.
\end{theorem}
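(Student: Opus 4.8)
The plan is to prove the two assertions separately, with soundness resting on a coordinatewise domination lemma proved by induction on the layer index. Throughout, write $\mathbf{c}_\ell^{D}$ for the layer-$\ell$ label that $\mathcal{M}$ assigns to the vertex $v^c$ of a constant $c$ when applied to $\texttt{enc}(D)$, in the notation $\mathbf{v}_0,\dots,\mathbf{v}_L$ of \Cref{align:def_gnn_update}; note every $\mathbf{c}_\ell^{D}$ is non-negative since each $\sigma_\ell$ has non-negative range, and every entry of every $\mathbf{A}_\ell,\mathbf{B}^c_\ell$ is non-negative. For the first assertion, $A(a)\in T_r(D_a)$, I would show by induction on $\ell$ that $(D_a,c)\models C_\ell^c$ for every $c\in\texttt{con}(D_a)$. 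The base case $\ell=0$ is immediate, as $C_0^c$ conjoins exactly the unary predicates holding at $c$ in $D_a$. For the step, fix a binary predicate $P$ and let $c_1,\dots,c_n$ be the $P$-successors of $c$ in $D_a$; by the inductive hypothesis $(D_a,c_i)\models C_{\ell-1}^{c_i}$, and these are distinct and constitute all $P$-successors of $c$, so $(D_a,c)\models \exists_n P.(C_{\ell-1}^{c_1},\dots,C_{\ell-1}^{c_n})\sqcap {\leq_n} P.\top$. Conjoining over all $P$ gives $(D_a,c)\models C_\ell^c$; taking $c=a$, $\ell=L$ yields $(D_a,a)\models C_L^a$, hence $A(a)\in T_r(D_a)$.

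For soundness it suffices to prove the following domination lemma: for every $\ell$, every dataset $D$, every constant $c$ of $D_a$, and every constant $d$ of $D$, if $(D,d)\models C_\ell^c$ then $\mathbf{d}_\ell^{D}\geq\mathbf{c}_\ell^{D_a}$ coordinatewise. Granting this, soundness is immediate: if $(D,b)\models C_L^a$ then $\mathbf{b}_L^{D}\geq\mathbf{a}_L^{D_a}$, and since $A(a)\in T_\mathcal{M}(D_a)$ means the relevant coordinate of $\mathbf{a}_L^{D_a}$ is at least the threshold $t$, the same coordinate of $\mathbf{b}_L^{D}$ is at least $t$, so $\texttt{cls}_t$ fires and $A(b)\in T_\mathcal{M}(D)$ (this is also where the $\geq$ convention of the classifier is used). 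I would prove the lemma by induction on $\ell$. The base case reduces to the observation that $C_0^c$ forces every unary predicate true at $c$ in $D_a$ to be true at $d$ in $D$, whence $\mathbf{d}_0^{D}\geq\mathbf{c}_0^{D_a}$, since extra unary facts at $d$ only raise coordinates.

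For the inductive step I would compare the two arguments of $\sigma_\ell$ in \Cref{align:def_gnn_update} coordinatewise. The self term uses a preliminary depth-monotonicity fact, $C_\ell^c\sqsubseteq C_{\ell-1}^c$ (itself a short induction using that $\exists_n$ is monotone in its concept arguments): from $(D,d)\models C_\ell^c$ we get $(D,d)\models C_{\ell-1}^c$, so the inductive hypothesis gives $\mathbf{d}_{\ell-1}^{D}\geq\mathbf{c}_{\ell-1}^{D_a}$ and hence $\mathbf{A}_\ell\mathbf{d}_{\ell-1}^{D}\geq\mathbf{A}_\ell\mathbf{c}_{\ell-1}^{D_a}$ by non-negativity of $\mathbf{A}_\ell$. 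The aggregation term is where the precise shape of $C_\ell^c$ does the work. Fix $P$. If $c$ has no $P$-successors in $D_a$, its summand $\mathbf{B}^P_\ell\,\mathrm{agg}_\ell(\cdots)$ contributes the zero vector, while for $d$ the summand is $\mathbf{B}^P_\ell$ times a non-negative vector, hence $\geq\mathbf{0}$. Otherwise let $c_1,\dots,c_n$ be the $P$-successors of $c$ in $D_a$; the conjunct $\exists_n P.(C_{\ell-1}^{c_1},\dots,C_{\ell-1}^{c_n})$ supplies $n$ distinct $P$-successors $d_1,\dots,d_n$ of $d$ with $(D,d_i)\models C_{\ell-1}^{c_i}$, and $\leq_n P.\top$ forces $d$ to have at most $n$ $P$-successors, so its $P$-successors are exactly $\{d_1,\dots,d_n\}$. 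The inductive hypothesis gives $(\mathbf{d}_i)_{\ell-1}^{D}\geq(\mathbf{c}_i)_{\ell-1}^{D_a}$ for each $i$, and since both means are over exactly $n$ elements the mean in $D$ dominates the mean in $D_a$ coordinatewise; multiplying by $\mathbf{B}^P_\ell\geq 0$ preserves this. Summing the self term and all aggregation terms, the argument of $\sigma_\ell$ at $d$ dominates that at $c$, and monotonicity of $\sigma_\ell$ yields $\mathbf{d}_\ell^{D}\geq\mathbf{c}_\ell^{D_a}$, closing the induction.

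I expect the main obstacle to be this aggregation step, and specifically the indispensability of the $\leq_n P.\top$ conjunct: because mean (unlike sum or max) is sensitive to the number of neighbours, extra $P$-successors at $d$ could dilute the mean and destroy the lower bound, so forcing \emph{exactly} $n$ successors — by combining $\exists_n$ (at least $n$) with $\leq_n P.\top$ (at most $n$) — is essential, and this is the crux of why $\Omega$ is designed as it is. A secondary point to handle carefully is the depth-monotonicity fact $C_\ell^c\sqsubseteq C_{\ell-1}^c$ used for the self term, together with the empty-neighbourhood convention, which should be made to match the treatment of $D_\emptyset$ in \Cref{prop:monotonic_rule_soundness_check}.
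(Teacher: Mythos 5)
Your proposal is correct, and its engine is the same as the paper's: a coordinatewise domination argument by induction on the layer index, in which the conjunction $\exists_n P.(\cdots) \sqcap \leq_n P.\top$ forces the satisfying constant to have \emph{exactly} $n$ $P$-successors, so that the two means range over equinumerous multisets and domination of each summand transfers to domination of the means — you correctly identify this as the crux and as the reason $\Omega$ carries the $\leq_n P.\top$ conjuncts. Your first part ($A(a)\in T_r(D_a)$ by induction on $\ell$) coincides with \Cref{lemma:dataset_models_constructed_concept}. The packaging of the soundness half, however, is genuinely different. The paper (\Cref{app:proof:thm:explanation}) first passes to the $L$-hop neighbourhood $D_L^a$ and shows $A(a)\in T_\mathcal{M}(D_L^a)$, then builds an explicit constant mapping $\pi:\texttt{con}(D_L^a)\to\texttt{con}(D_b)$ together with a ``level'' assignment, and proves $\mathbf{v}^c_\ell[i]\leq\bar{\mathbf{v}}^{\pi(c)}_\ell[i]$ only for constants whose level is at least $\ell$; because the concept attached to $c$ is indexed by its level rather than by the current layer, the paper never needs your depth-monotonicity fact. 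You instead state one fully quantified domination lemma — for all $D$, $d$, $c$, $\ell$, if $(D,d)\models C_\ell^c$ then $\mathbf{d}_\ell^{D}\geq\mathbf{c}_\ell^{D_a}$ — which dispenses with $D_L^a$, with $\pi$, and with the level bookkeeping (the witnesses of $\exists_n$ supply the correspondence locally at each inductive step), at the price of the auxiliary subsumption $C_\ell^c\sqsubseteq C_{\ell-1}^c$ for the self term; that subsumption does hold by the short inner induction you indicate, since the successor lists in \Cref{def:explanation_concept} depend only on $D_a$ and $\exists_n$ is monotone in its concept arguments. Your formulation buys a cleaner statement that is reusable for other target datasets and sidesteps the awkwardness that a constant of $D_a$ reachable from $a$ along paths of different lengths receives several levels and several $\pi$-images in the paper's unfolding; the paper's formulation buys a slightly leaner induction hypothesis. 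Both use the $\geq$ convention of $\texttt{cls}_t$, the empty-mean convention, and the non-negativity of weights, activations, and initial labels in the same places.
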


Consider a dataset $D_a = \{ A_1(a), P_1(a, b_1), P_1(a, b_2), P_2(b_3, a), A_2(b_2), P_2(b_2, c), P_3(c, d) \}$ and 2-layer GNN $\mathcal{M}$, for example, such that $A_3(a) \in T_\mathcal{M}(D)$.
Then, using \Cref{thm:explanation}, we construct the rule $r: C_2^a \sqsubseteq A_3$, where $C_2^a$ is $\top ~\sqcap~ A_1 ~\sqcap~ \exists_2 P_1.(\top, \top ~\sqcap~ A_2 ~\sqcap~ \exists_1 P_2.(\top) ~\sqcap~ \leq_1 P_2.\top) ~\sqcap~ \leq_2 P_1.\top$.
Notice that $(D_a, a) \models C_2^a$, so we have $A_3(a) \in T_r(D)$.
Furthermore, $A_3(a) \in T_\mathcal{M}(D)$ is a sufficient witness for the soundness of $r$, as we will prove below.

\begin{proof}[Proof sketch]
The full proof of \Cref{thm:explanation} is given in \Cref{app:proof:thm:explanation}.

$(D_a, a) \models C_L^a$ follows from the inductive construction of $C_L^a$, from which we obtain $A(a) \in T_r(D_a)$.
It remains to be shown that $r$ is sound for $\mathcal{M}$.
Let $D_b$ be a dataset: to show soundness, we prove that $T_r(D_b) \subseteq T_\mathcal{M}(D_b)$, so let $A(b) \in T_r(D_b)$.

First, we define a dataset $D_L^a$ to be the $L$-hop neighbourhood of the constant $a$ in $D_a$ \changemarker{(analogously to how $L$-hop neighbourhoods are defined for nodes in graphs)}.
Then we show that
$A(a) \in T_\mathcal{M}(D_L^a)$, since $A(a) \in T_\mathcal{M}(D_a)$ --- this follows from the definition of $D_L^a$ as the $L$-hop neighbourhood.
Finally, we prove $A(b) \in T_\mathcal{M}(D_b)$, since $A(a) \in T_\mathcal{M}(D_L^a)$ --- this follows from the fact that any differences between $D_b$ and $D_L^a$ cannot yield a lower MAGNN output for $v^b$ in comparison to $v^a$, plus a mapping between constants of $D_b$ and $D_L^a$.
\end{proof}
\section{Experiments} \label{sec:experiments}

We train GNNs across several benchmark link prediction datasets and a node classification dataset, showing that sound rules and explanations can be found for MAGNNs in practice, and that the restriction to monotonicity does not significantly decrease performance.
For the model architecture, we fix a hidden dimension of twice the input dimension, $2$ GNN layers, ReLU after the first layer, and sigmoid after the second layer.
The GNN definition given in \Cref{sec:background}, which was chosen for ease of presentation, describes GNNs aggregating in the reverse direction of the edges.
For our experiments, we follow the standard approach and aggregate in the direction of the edges.
\changemarker{
Thus, when presenting a rule, we write each binary predicate as its inverse.
For example, ``$\text{advisor}$'' is written as ``$\text{advisorOf}$''.
}

We use GNNs with max, sum, and mean aggregation.
We train each model for $8000$ epochs, stopping training early if loss does not improve for $50$ epochs.
For all trained models,
we compute standard classification metrics, such as precision, recall,
accuracy, and F1 score.
For each model, we choose the classification threshold by computing the accuracy on the validation
set across a range of $108$ thresholds between $0$ and $1$,
selecting the one which maximises accuracy.
We train all our models using binary cross entropy loss and the Adam optimiser with a learning rate of $0.001$.
We train models without restrictions as baselines (denoted by ``Standard''), as well as restricting the models to having non-negative weights (denoted by ``Non-Neg'') by clamping negative weights to $0$ after each optimiser step, as in the approaches of \citet{morris2024relational,cucala2021explainable}.
We run each experiment across $5$ different random seeds and present the aggregated metrics.
To compute $95\%$ confidence intervals, we assume a normal distribution and compute the
interval as $1.97 \times$ SEM (standard error of the mean).
Experiments were run using PyTorch Geometric, with 2 CPUs and 16GB of memory on a Linux server, using 34 days of compute time.

\paragraph{Datasets}
We use 3 standard benchmarks: WN18RRv1, FB237v1, and NELLv1 \cite{teru2020inductive},
each of which provides datasets for training, validation, and testing, as well as negative examples and positive targets.
Importantly, these benchmarks are also inductive, meaning that the validation and testing sets contain constants not seen during training.
We also use the LUBM dataset \citep[LUBM(1,0)]{guo2005lubm}, with the train/test split from \citet{liu2023revisiting}; this is a node classification dataset, all others are link prediction.

Finally, we utilise LogInfer \cite{liu2023revisiting}, a framework which augments a dataset by considering Datalog rules conforming to a particular pattern and adding the consequences of the rules to the dataset.
We use the datasets LogInfer-WN-hier (WN-hier) and LogInfer-WN-sym (WN-sym) \cite{liu2023revisiting}, which are enriched with the hierarchy and symmetry patterns, respectively.
We also use LogInfer-WN-hier\_nmhier \cite{morris2024relational}, which was created using a mixture of monotonic and non-monotonic rules: rules from the ``hierarchy'' and ``non-monotonic hierarchy'' ($R(x,y) \land \neg S(y,z) \rightarrow T(x,y)$) patterns, in this case.
We use the dataset to test whether monotonic rules can be recovered from the dataset, despite the presence of non-monotonic rules.
For the LogInfer datasets, during each training epoch, $10\%$ of the input facts are randomly set aside and used as ground truth positive targets, whilst the rest of the facts are used as input to the model.

\paragraph{Rule Extraction and Explanations}
On all datasets, we \changemarker{use \Cref{prop:monotonic_rule_soundness_check} to} check for monotonic rules of the form given in \Cref{thm:magnn_monotonic_rules}.
\changemarker{This procedure is described in \Cref{alg:magnn_monotonic_rules} of \Cref{app:sec:soundness_checking_alg}}.
Given the differing number of unary and binary predicates in each dataset and the exponential growth of the rule space, we check all rules up to a differing number of a body concepts: $4$ for LUBM, all $15$ for WN-based-datasets, $1$ for FB237v1, and $2$ for NELLv1.
This yields a total possible number of sound rules: $383670$ for LUBM, all $360448$ for WN18RRv1 and the LogInfer datasets, $56169$ for FB237v1, and $216600$ for NELLv1.
When a rule is a subsumed by another with fewer body atoms, it is not counted.

We also compute sound explanatory rules of the form given in \Cref{thm:explanation} for all \changemarker{true positive} predictions made on the test set of LUBM.
\changemarker{
Given that the explanatory rules are often large, we improve the process by first checking if a sound rule of the form given in \Cref{thm:magnn_monotonic_rules} derives the fact on the input dataset, and return the rule as an explanation if it does.
If no such rule exists, we use rules from \Cref{thm:explanation}.
Other strategies for improving the rule quality are discussed in \Cref{app:sec:refining_explanations}.
}

%
%
%
%
%
%
%
%
%
%
%
%
\begin{table}
\centering
\begin{tabular}{lll|rrrrr}
\toprule
Dataset & Agg & Weights & \%Acc & \%Prec & \%Rec & F1 \\

\midrule
LUBM
& Mean
& Standard
& $97.1 \pm 0.0$ & $96.9 \pm 0.0$ & $97.2 \pm 0.0$ & $97.1 \pm 0.0$ \\
&& Non-Neg
& $91.5 \pm 0.0$ & $87.8 \pm 0.0$ & $96.4 \pm 0.0$ & $91.9 \pm 0.0$ \\

\midrule
WN-hier
& Mean
& Standard
& $99.5 \pm 0.0$ & $99.5 \pm 0.0$ & $99.5 \pm 0.0$ & $99.5 \pm 0.0$ \\
&& Non-Neg
& $99.8 \pm 0.0$ & $99.7 \pm 0.0$ & $99.9 \pm 0.0$ & $99.8 \pm 0.0$ \\

\midrule
WN-sym
& Mean
& Standard
& $99.4 \pm 0.0$ & $99.5 \pm 0.0$ & $99.3 \pm 0.0$ & $99.4 \pm 0.0$ \\
&& Non-Neg
& $100 \pm 0.0$  & $100 \pm 0.0$  & $100 \pm 0.0$  & $100 \pm 0.0$  \\

\midrule
WN-hier\_nmhier
& Mean
& Standard
& $86.2 \pm 0.0$ & $84.7 \pm 0.0$ & $88.4 \pm 0.0$ & $86.5 \pm 0.0$ \\
&& Non-Neg
& $71.6 \pm 0.0$ & $80.1 \pm 0.1$ & $58.8 \pm 0.1$ & $67.2 \pm 0.0$ \\

\midrule
FB237v1
& Mean
& Standard
& $68.7 \pm 0.0$ & $95.4 \pm 0.0$ & $39.3 \pm 0.0$ & $55.7 \pm 0.0$ \\
&& Non-Neg
& $71.8 \pm 0.0$ & $75.4 \pm 0.0$ & $64.8 \pm 0.0$ & $69.7 \pm 0.0$ \\

\midrule
WN18RRv1
& Mean
& Standard
& $93.7 \pm 0.0$ & $98.5 \pm 0.0$ & $88.8 \pm 0.0$ & $93.4 \pm 0.0$ \\
&& Non-Neg
& $95.5 \pm 0.0$ & $98.1 \pm 0.0$ & $92.7 \pm 0.0$ & $95.3 \pm 0.0$ \\

\midrule
NELLv1
& Mean
& Standard
& $75.2 \pm 0.1$ & $93.8 \pm 0.0$ & $53.4 \pm 0.2$ & $65.7 \pm 0.2$ \\
&& Non-Neg
& $93.4 \pm 0.0$ & $88.8 \pm 0.0$ & $99.4 \pm 0.0$ & $93.8 \pm 0.0$ \\

\bottomrule
\end{tabular}
\caption{Results for mean-GNNs with standard / non-negative weights, across all datasets. Metrics are computed on the test set and shown with a $95\%$ CI.}
\label{tab:results_all_mean}
\end{table}

%
%
%
%
%
%
%
%
%
%
%
%
\begin{table}
\centering
\resizebox{1\columnwidth}{!}{
\begin{tabular}{l|rrrr|rrrrrrrrr}
\toprule
Dataset & Tot & Un & Bin & Mix & 0 & 1 & 2 & 3 & 4 & 5 & 6 & 7 & 8 \\
\midrule
LUBM & 11.6 & 1.4 & 9.8 & 0.4 & 2 & 9.6 & 1.4 & 0.6 & 0 \\
WN-hier & 22.6 & 15.6 & 0 & 7 & 0 & 2.6 & 5.4 & 6.6 & 1.4 & 3.8 & 2.2 & 0.4 & 0.2 \\
WN-sym & 0.4 & 0 & 0 & 0.4 & 0 & 0 & 0 & 0 & 0.4 & 0 & 0 & 0 & 0 \\
WN-hier\_nmhier & 53.6 & 4.6 & 0 & 49 & 0 & 0 & 1.8 & 16.2 & 17.8 & 12.6 & 4.8 & 0 & 0.4 \\
FB237v1 & 136 & 136 & 0 & 0 & 29 & 136 \\
WN18RRv1 & 0 & 0 & 0 & 0 & 0 & 0 & 0 & 0 & 0 & 0 & 0 & 0 & 0 \\
NELLv1 & 1 & 1 & 0 & 0 & 0 & 0 & 1 \\

\bottomrule
\end{tabular}
} 
\caption{Counts of monotonic rules of the form given in \Cref{thm:magnn_monotonic_rules}. Tot, Un, Bin, Mix are the counts of the total number of rules, rules with only unary atoms, rules with only binary atoms, and rules with a mix of atom arities. Each remaining column $i$ counts the number of rules with $i$ body concepts.}
\label{tab:results:monotonic_rule_counts}
\end{table}

%
%
%
%
%
%
%
%
%
%
%
%
\begin{table}
\centering
\begin{tabular}{l|p{10.5cm}}
\toprule
Dataset & Examples of Sound Rules\\
\midrule
LUBM
& $\top \sqsubseteq \text{Publication}$ \\
& $\exists \text{advisorOf}.\top \sqsubseteq \text{AssociateProfessor}$ \\
& $\text{Department} \sqcap \text{UndergraduateStudent} \sqcap \exists \text{advisorOf}.\top \sqsubseteq \text{FullProfessor}$ \\

\midrule

WN-hier
& $\text{\_member\_of\_domain\_usage}(X,Y) \rightarrow \text{\_member\_meronym}(X,Y)$ \\

\midrule

WN-sym
& $\text{\_has\_part}(X,Y) \land \text{\_similar\_to}(X,Y) \rightarrow \text{\_derivationally\_related\_form}(X,Y)$ \\

\midrule

WN-hier\_nmhier
& $\text{\_hypernym}(X,Y) \land \text{\_synset\_domain\_topic\_of}(X,Y) \rightarrow \text{\_also\_see}(X,Y)$ \\

\midrule

FB237v1
& $\top \rightarrow \text{/music/instrument/instrumentalists}(X,Y)$ \\
& $\text{/base/biblioness/bibs\_location/state}(X,Y) \rightarrow \text{/film/film/genre}(X,Y)$ \\

\midrule

NELLv1
& $\text{organizationterminatedperson}(X,Y) \land \text{topmemberoforganization}(X,Y) \rightarrow \text{organizationhiredperson}(X,Y)$ \\

\bottomrule
\end{tabular}
\caption{Randomly sampled sound monotonic rules of the form given in \Cref{thm:magnn_monotonic_rules}.}
\label{tab:results:sample_rules}
\end{table}

%
%
%
%
%
%
%
%
%
%
%
%
\begin{table}
\centering
\begin{tabular}{l|p{12.4cm}}
\toprule

Fact & $\text{Publication}(\text{http://www.Department10.University0.edu/FullProfessor5/Publication3})$ \\
Rule & $\top \sqsubseteq \text{Publication}$ \\

\midrule

Fact & $\text{ResearchAssistant}(\text{http://www.Department1.University0.edu/GraduateStudent14})$ \\
Rule & $\text{GraduateStudent} \sqsubseteq \text{ResearchAssistant}$ \\

\midrule

Fact & $\text{GraduateStudent}(\text{http://www.Department12.University0.edu/GraduateStudent48})$ \\
Rule & $\text{ResearchAssistant} \sqsubseteq \text{GraduateStudent}$ \\

\midrule

Fact & $\text{GraduateStudent}(\text{http://www.Department3.University0.edu/GraduateStudent44})$ \\
Rule & $\text{ResearchAssistant} ~\sqcap~ \exists \text{authorOfPublication}.\top \sqsubseteq \text{GraduateStudent}$ \\

\midrule

Fact & $\text{Course}(\text{http://www.Department12.University0.edu/Course1})$ \\
Rule & $\exists \text{courseTakenBy}.\top ~\sqcap~ \exists \text{hasTeacher}.\top \sqsubseteq \text{Course}$ \\

\midrule

Fact & $\text{AssistantProfessor}(\text{http://www.Department13.University0.edu/AssistantProfessor10})$ \\
Rule & $\exists \text{advisorOf}.\top \sqsubseteq \text{AssistantProfessor}$ \\

\midrule

Fact & $\text{University}(\text{http://www.University772.edu})$ \\
Rule & $\exists \text{grantedUndergraduateDegreeTo}.\top \sqsubseteq \text{University}$ \\

\bottomrule
\end{tabular}
\caption{\changemarker{Randomly sampled explanatory rules that have successfully been reduced in size, shown with the corresponding facts produced by the MAGNN on LUBM.}}
\label{tab:results:reduced_sample_explanations}
\end{table}

\paragraph{Results}
Mean-GNN performance on the test set is shown in \Cref{tab:results_all_mean}.
For completeness, results across all aggregation functions are given in \Cref{tab:results:lubm,tab:results:log_infer,tab:results:benchmarks} of \Cref{app:full_results}.
We see similar behaviour for mean-GNNs as for sum and max-GNNs when they are restricted to non-negative weights, which matches the behaviour seen by \citet{morris2024relational} for sum-GNNs and \citet{cucala2021explainable} for max-GNNs.
On LUBM, the restriction yields a small decrease in performance.
For the monotonic LogInfer datasets, there is a slight increase in performance, and for the non-monotonic one, a substantial decrease, since the dataset explicitly penalises monotonic model behaviour.
Finally, on the benchmark datasets, the restriction improves performance, sometimes substantially.
This demonstrates that the restriction of mean-GNNs to MAGNNs can be done in practice without sacrificing significant model performance, while enabling the extraction of sound rules.
\changemarker{We hypothesize that the occasional performance increase seen when restricting the models to non-negative weights is due to (1) it regularizing the model and (2) the dataset patterns not requiring negative weights to be captured.}

In \Cref{tab:results:monotonic_rule_counts}, we present counts of the number of sound monotonic rules that were obtained on each dataset.
On LUBM, WN-hier, WN-hier\_nmhier, and FB237v1, we find a number of sound rules with a varying number of body concepts.
On average, we found $2$ sound rules on LUBM that have empty bodies, and $29$ on FB237v1; these rules are clearly absurd, but are what the GNN has learned, highlighting the need for sound rules so that such issues in the models can be exposed.
Further to this, we note that almost no sound monotonic rules were obtained on WN-sym, WN18RRv1, or NELLv1.
This is especially concerning for WN-sym, which was populated with the consequences of monotonic rules.
This behaviour is due to the very restricted form that sound monotonic rules can have when used with the link prediction encoding of \citet{cucala2023correspondence}, as discussed in \Cref{sec:monotonic_rules}.
\citet{morris2024relational}, on the other hand, found that all of the rules used to create WN-sym were sound when using sum-GNNs with non-negative weights.
We show some sample sound monotonic rules in \Cref{tab:results:sample_rules}, with more given in \Cref{tab:results:all_sample_rules} of \Cref{app:full_results}.

In \Cref{tab:results:more_sample_explanations} of \Cref{app:full_results}, we give samples of explanatory rules of the form given in \Cref{thm:explanation}.
\changemarker{On LUBM, these explanatory rules had $25$ body concepts on average.}
\changemarker{
When using the improved procedure of first checking for explanatory rules of the form given in \Cref{thm:magnn_monotonic_rules}, the rules averaged only $11$ body concepts.
On average, out of $1990$ true positive predictions, the process only failed $22$ times to explain the predicted fact using a rule from \Cref{thm:magnn_monotonic_rules}.
In \Cref{tab:results:reduced_sample_explanations}, we give samples of such explanatory rules, which are useful for providing insights into the data, as well as exposing issues with the trained MAGNN.
For example, the explanatory rule $\top \sqsubseteq \text{Publication}$ exposes a nonsensical rule learned by the MAGNN, which arises as a consequence of publications in LUBM having no incoming edges.
Likewise, it is sensible that the MAGNN has learned that research assistants are graduate students, but it has also erroneously learned that all graduate students are research assistants.
Finally, note that the fourth rule in the table is subsumed by the third rule --- this is because they come from two different MAGNNs, which have learned different sound rules.
\Cref{tab:results:more_reduced_sample_explanations} of \Cref{app:full_results} provides a longer list of explanatory rules that have been reduced in size.
}
\section{Discussion} \label{sec:conclusions}
\paragraph{Our Contribution}
We considered mean-GNNs with non-negative weights (MAGNNs), showing first that they can express logical functions that go beyond FOL.
We proved which ELUQ rules (a class of monotonic ALCQ rules) can be sound for MAGNNs, which turned out to be very limited.
\changemarker{The resulting rule space is finite, which enabled us to define a procedure to check for all sound ELUQ rules.}
We also provided a restricted fragment of FOL, from which sound rules can be constructed to explain any prediction of a MAGNN.
In our experiments, we found that mean-GNNs still perform well on benchmark datasets when restricted to having non-negative weights.
We also found a variety of sound monotonic rules on half the datasets, with provably almost no sound monotonic rules on the other half.
This, alongside several nonsensical sound rules, raise questions about how mean-GNNs can be encouraged to recover the underlying rules in the datasets, and suggests that max- or sum-GNNs may be preferable when provable soundness is required.
It also shows the importance of rule extraction techniques, given the good empirical performance of MAGNNs on the test datasets but nonsensical rules they have learned (for example, that everything is a publication).
Finally, we computed explanatory rules for \changemarker{predictions} made by MAGNNs on the LUBM dataset.

\paragraph{Logical Explanations for GNNs}
\changemarker{
Other works also extract logical explanations for GNN predictions \citep{armgaan2024graphtrail,azzolin2022global,kohler2024utilizing,pluska2024logical}.
However, none of these provide theoretical guarantees that the explanations are sound for the model, only empirical evidence that they are approximately faithful.
\citet{pluska2024logical}, for example, learn decision trees from GNNs and then extract logical rules from the decision trees --- they are not equivalent to the GNNs they are derived from, so the extracted logical rules are not provably sound.
Furthermore, \citet{kohler2024utilizing} only consider explanations in EL, which is a fragment of ELUQ.
}

\paragraph{Mean-GNN Theory}
\changemarker{
Further works also provide theoretical analyses of GNNs with mean aggregation.
\citet{vasileiou2025covered} provide a generalisation bound for mean-GNNs.
\citet{adam2024almost} show that mean-GNNs used for graph classification tend to a constant function as the sizes of the input graphs tend to infinity (under certain random graph models).
Their approach of tending the size to infinity is similar to our technique used in the proof of \Cref{thm:magnn_monotonic_rules}.
In contemporaneous work, \citet{schonherr2025logical} identify the FOL expressivity of mean-GNNs in the uniform (standard) and non-uniform (only on graphs up to a bounded number of nodes) settings.
In the uniform setting, they prove that any continuous mean-GNN (that is equivalent to a FOL classifier) has an equivalent logical classifier expressible in alternation-free modal logic (AFML).
However, they do not provide a construction of the AFML classifier, a demonstration that such a construction would be obtainable in practice, or a means to recover sound rules if the mean-GNN is not equivalent to a FOL classifier.
}

\paragraph{Limitations}
Our work is limited in several ways.
First, despite being able to check the infinite space of ELUQ rules for soundness by instead checking finitely many rules of the form given in \Cref{thm:magnn_monotonic_rules}, the size of this finite space grows exponentially in the number of predicates, making the search intractable as the number of predicates increases (as seen on FB237v1, for example).
Also, it is an open question as to whether ELUQ is a maximal monotonic fragment of ALCQ, so there may be other monotonic rules that can be sound for mean-GNNs.
\changemarker{Finally, we only consider the $\leq$ operator for the mean-GNN step classification function $\texttt{cls}_t$ due to its standard use in prior work \citep{morris2024relational,cucala2021explainable,cucala2023correspondence} and the fact that using $<$ yields an entirely different set of theoretical results (for example, rules with a body of $\exists P.A$ could be sound for such a mean-GNN, without $\exists P.\top$ being sound).}

\clearpage

\begin{ack}
Matthew Morris is funded by an EPSRC scholarship (CS2122\_EPSRC\_1339049).
This work was also supported by Samsung Research UK, the EPSRC projects UKFIRES (EP/S019111/1) and ConCur (EP/V050869/1).
The authors would like to acknowledge the use of the University of Oxford Advanced Research Computing (ARC) facility in carrying out this work: \url{http://dx.doi.org/10.5281/zenodo.22558}.

For the purpose of Open Access, the authors have applied a CC BY public copyright licence to any Author Accepted Manuscript (AAM) version arising from this submission.
\end{ack}

\bibliographystyle{plainnat}
\bibliography{ref}

\begin{thebibliography}{46}
\providecommand{\natexlab}[1]{#1}
\providecommand{\url}[1]{\texttt{#1}}
\expandafter\ifx\csname urlstyle\endcsname\relax
  \providecommand{\doi}[1]{doi: #1}\else
  \providecommand{\doi}{doi: \begingroup \urlstyle{rm}\Url}\fi

\bibitem[Abboud et~al.(2020)Abboud, Ceylan, Lukasiewicz, and Salvatori]{abboud2020boxe}
Ralph Abboud, Ismail Ceylan, Thomas Lukasiewicz, and Tommaso Salvatori.
\newblock Boxe: A box embedding model for knowledge base completion.
\newblock \emph{Advances in Neural Information Processing Systems}, 33:\penalty0 9649--9661, 2020.

\bibitem[Adam-Day et~al.(2024)Adam-Day, Benedikt, Ceylan, and Finkelshtein]{adam2024almost}
Sam Adam-Day, Michael Benedikt, Ismail Ceylan, and Ben Finkelshtein.
\newblock Almost surely asymptotically constant graph neural networks.
\newblock \emph{Advances in Neural Information Processing Systems}, 37:\penalty0 124843--124886, 2024.

\bibitem[Ahvonen et~al.(2024)Ahvonen, Heiman, Kuusisto, and Lutz]{ahvonen2024logical}
Veeti Ahvonen, Damian Heiman, Antti Kuusisto, and Carsten Lutz.
\newblock Logical characterizations of recurrent graph neural networks with reals and floats.
\newblock \emph{arXiv preprint arXiv:2405.14606}, 2024.

\bibitem[Armgaan et~al.(2024)Armgaan, Dalmia, Medya, and Ranu]{armgaan2024graphtrail}
Burouj Armgaan, Manthan Dalmia, Sourav Medya, and Sayan Ranu.
\newblock Graphtrail: Translating gnn predictions into human-interpretable logical rules.
\newblock \emph{Advances in Neural Information Processing Systems}, 37:\penalty0 123443--123470, 2024.

\bibitem[Azzolin et~al.(2022)Azzolin, Longa, Barbiero, Li{\`o}, and Passerini]{azzolin2022global}
Steve Azzolin, Antonio Longa, Pietro Barbiero, Pietro Li{\`o}, and Andrea Passerini.
\newblock Global explainability of gnns via logic combination of learned concepts.
\newblock In \emph{The First Learning on Graphs Conference-LoG 2022}, 2022.

\bibitem[Baader(2003)]{baader2003description}
Franz Baader.
\newblock \emph{The description logic handbook: Theory, implementation and applications}.
\newblock Cambridge university press, 2003.

\bibitem[Barcel{\'o} et~al.(2020)Barcel{\'o}, Kostylev, Monet, P{\'e}rez, Reutter, and Silva]{barcelo2020logical}
Pablo Barcel{\'o}, Egor~V Kostylev, Mikael Monet, Jorge P{\'e}rez, Juan Reutter, and Juan-Pablo Silva.
\newblock The logical expressiveness of graph neural networks.
\newblock In \emph{8th International Conference on Learning Representations (ICLR 2020)}, 2020.

\bibitem[Benedikt et~al.(2024)Benedikt, Lu, and Tan]{benedikt2024decidability}
Michael Benedikt, Chia-Hsuan Lu, and Tony Tan.
\newblock Decidability of graph neural networks via logical characterizations.
\newblock \emph{arXiv preprint arXiv:2404.18151}, 2024.

\bibitem[Besharatifard and Vafaee(2024)]{besharatifard2024review}
Milad Besharatifard and Fatemeh Vafaee.
\newblock A review on graph neural networks for predicting synergistic drug combinations.
\newblock \emph{Artificial Intelligence Review}, 57\penalty0 (3):\penalty0 49, 2024.

\bibitem[Cuenca~Grau et~al.(2025)Cuenca~Grau, Feng, and Wa{\l}{\k{e}}ga]{grau2025correspondence}
Bernardo Cuenca~Grau, Eva Feng, and Przemys{\l}aw~A Wa{\l}{\k{e}}ga.
\newblock The correspondence between bounded graph neural networks and fragments of first-order logic.
\newblock \emph{arXiv preprint arXiv:2505.08021}, 2025.

\bibitem[Gao et~al.(2022)Gao, Wang, He, and Li]{gao2022graph}
Chen Gao, Xiang Wang, Xiangnan He, and Yong Li.
\newblock Graph neural networks for recommender system.
\newblock In \emph{Proceedings of the fifteenth ACM international conference on web search and data mining}, pages 1623--1625, 2022.

\bibitem[Garnelo and Shanahan(2019)]{garnelo2019reconciling}
Marta Garnelo and Murray Shanahan.
\newblock Reconciling deep learning with symbolic artificial intelligence: representing objects and relations.
\newblock \emph{Current Opinion in Behavioral Sciences}, 29:\penalty0 17--23, 2019.

\bibitem[Gilmer et~al.(2017)Gilmer, Schoenholz, Riley, Vinyals, and Dahl]{gilmer2017neural}
Justin Gilmer, Samuel~S Schoenholz, Patrick~F Riley, Oriol Vinyals, and George~E Dahl.
\newblock Neural message passing for quantum chemistry.
\newblock In \emph{International conference on machine learning}, pages 1263--1272. PMLR, 2017.

\bibitem[Grohe(2024)]{grohe2024descriptive}
Martin Grohe.
\newblock The descriptive complexity of graph neural networks.
\newblock \emph{TheoretiCS}, 3, 2024.

\bibitem[Guo et~al.(2005)Guo, Pan, and Heflin]{guo2005lubm}
Yuanbo Guo, Zhengxiang Pan, and Jeff Heflin.
\newblock Lubm: A benchmark for owl knowledge base systems.
\newblock \emph{Journal of Web Semantics}, 3\penalty0 (2-3):\penalty0 158--182, 2005.

\bibitem[Hamilton et~al.(2017)Hamilton, Ying, and Leskovec]{hamilton2017inductive}
Will Hamilton, Zhitao Ying, and Jure Leskovec.
\newblock Inductive representation learning on large graphs.
\newblock \emph{Advances in neural information processing systems}, 30, 2017.

\bibitem[Hogan et~al.(2022)Hogan, Blomqvist, Cochez, d'Amato, de~Melo, Gutierrez, Kirrane, Gayo, Navigli, Neumaier, Ngomo, Polleres, Rashid, Rula, Schmelzeisen, Sequeda, Staab, and Zimmermann]{DBLP:journals/csur/HoganBCdMGKGNNN21}
Aidan Hogan, Eva Blomqvist, Michael Cochez, Claudia d'Amato, Gerard de~Melo, Claudio Gutierrez, Sabrina Kirrane, Jos{\'{e}} Emilio~Labra Gayo, Roberto Navigli, Sebastian Neumaier, Axel{-}Cyrille~Ngonga Ngomo, Axel Polleres, Sabbir~M. Rashid, Anisa Rula, Lukas Schmelzeisen, Juan~F. Sequeda, Steffen Staab, and Antoine Zimmermann.
\newblock Knowledge graphs.
\newblock \emph{{ACM} Comput. Surv.}, 54\penalty0 (4):\penalty0 71:1--71:37, 2022.
\newblock \doi{10.1145/3447772}.

\bibitem[Ioannidis et~al.(2019)Ioannidis, Marques, and Giannakis]{ioannidis2019recurrent}
Vassilis~N Ioannidis, Antonio~G Marques, and Georgios~B Giannakis.
\newblock A recurrent graph neural network for multi-relational data.
\newblock In \emph{ICASSP 2019-2019 IEEE International Conference on Acoustics, Speech and Signal Processing (ICASSP)}, pages 8157--8161. IEEE, 2019.

\bibitem[Kipf and Welling(2017)]{kipf2017semi}
Thomas~N Kipf and Max Welling.
\newblock Semi-supervised classification with graph convolutional networks.
\newblock In \emph{International Conference on Learning Representations}, 2017.

\bibitem[K{\"o}hler and Heindorf(2024)]{kohler2024utilizing}
Dominik K{\"o}hler and Stefan Heindorf.
\newblock Utilizing description logics for global explanations of heterogeneous graph neural networks.
\newblock \emph{arXiv preprint arXiv:2405.12654}, 2024.

\bibitem[Libkin(2004)]{libkin2004elements}
Leonid Libkin.
\newblock \emph{Elements of finite model theory}, volume~41.
\newblock Springer, 2004.

\bibitem[Liu et~al.(2021)Liu, Grau, Horrocks, and Kostylev]{liu2021indigo}
Shuwen Liu, Bernardo Grau, Ian Horrocks, and Egor Kostylev.
\newblock Indigo: Gnn-based inductive knowledge graph completion using pair-wise encoding.
\newblock \emph{Advances in Neural Information Processing Systems}, 34:\penalty0 2034--2045, 2021.

\bibitem[Liu et~al.(2023)Liu, {Cuenca Grau}, Horrocks, and Kostylev]{liu2023revisiting}
Shuwen Liu, Bernardo {Cuenca Grau}, Ian Horrocks, and Egor~V Kostylev.
\newblock Revisiting inferential benchmarks for knowledge graph completion.
\newblock In \emph{Proceedings of the International Conference on Principles of Knowledge Representation and Reasoning}, volume~19, pages 461--471, 2023.

\bibitem[Meilicke et~al.(2018)Meilicke, Fink, Wang, Ruffinelli, Gemulla, and Stuckenschmidt]{meilicke2018fine}
Christian Meilicke, Manuel Fink, Yanjie Wang, Daniel Ruffinelli, Rainer Gemulla, and Heiner Stuckenschmidt.
\newblock Fine-grained evaluation of rule-and embedding-based systems for knowledge graph completion.
\newblock In \emph{The Semantic Web--ISWC 2018: 17th International Semantic Web Conference, Monterey, CA, USA, October 8--12, 2018, Proceedings, Part I 17}, pages 3--20. Springer, 2018.

\bibitem[Morris et~al.(2024)Morris, {Tena Cucala}, {Cuenca Grau}, and Horrocks]{morris2024relational}
Matthew Morris, David {Tena Cucala}, Bernardo {Cuenca Grau}, and Ian Horrocks.
\newblock Relational graph convolutional networks do not learn sound rules.
\newblock In \emph{Proceedings of the International Conference on Principles of Knowledge Representation and Reasoning}, volume~21, pages 897--908, 2024.

\bibitem[Nunn et~al.(2024)Nunn, S{\"a}lzer, Schwarzentruber, and Troquard]{nunn2024logic}
Pierre Nunn, Marco S{\"a}lzer, Fran{\c{c}}ois Schwarzentruber, and Nicolas Troquard.
\newblock A logic for reasoning about aggregate-combine graph neural networks.
\newblock In \emph{Proceedings of the Thirty-Third International Joint Conference on Artificial Intelligence}, pages 3532--3540, 2024.

\bibitem[Pflueger et~al.(2024)Pflueger, Cucala, and Kostylev]{pflueger2024recurrent}
Maximilian Pflueger, David~Tena Cucala, and Egor~V Kostylev.
\newblock Recurrent graph neural networks and their connections to bisimulation and logic.
\newblock In \emph{Proceedings of the AAAI Conference on Artificial Intelligence}, volume~38, pages 14608--14616, 2024.

\bibitem[Pluska et~al.(2024)Pluska, Welke, G{\"a}rtner, and Malhotra]{pluska2024logical}
Alexander Pluska, Pascal Welke, Thomas G{\"a}rtner, and Sagar Malhotra.
\newblock Logical distillation of graph neural networks.
\newblock In \emph{Proceedings of the 21st International Conference on Principles of Knowledge Representation and Reasoning}, pages 920--930, 2024.

\bibitem[Qu et~al.(2020)Qu, Chen, Xhonneux, Bengio, and Tang]{qu2020rnnlogic}
Meng Qu, Junkun Chen, Louis-Pascal Xhonneux, Yoshua Bengio, and Jian Tang.
\newblock Rnnlogic: Learning logic rules for reasoning on knowledge graphs.
\newblock In \emph{International Conference on Learning Representations}, 2020.

\bibitem[Rosenbluth et~al.(2023)Rosenbluth, Toenshoff, and Grohe]{rosenbluth2023some}
Eran Rosenbluth, Jan Toenshoff, and Martin Grohe.
\newblock Some might say all you need is sum.
\newblock In \emph{Proceedings of the Thirty-Second International Joint Conference on Artificial Intelligence}, pages 4172--4179, 2023.

\bibitem[Sadeghian et~al.(2019)Sadeghian, Armandpour, Ding, and Wang]{sadeghian2019drum}
Ali Sadeghian, Mohammadreza Armandpour, Patrick Ding, and Daisy~Zhe Wang.
\newblock Drum: End-to-end differentiable rule mining on knowledge graphs.
\newblock \emph{Advances in Neural Information Processing Systems}, 32, 2019.

\bibitem[Schlichtkrull et~al.(2018)Schlichtkrull, Kipf, Bloem, Van Den~Berg, Titov, and Welling]{schlichtkrull2018modeling}
Michael Schlichtkrull, Thomas~N Kipf, Peter Bloem, Rianne Van Den~Berg, Ivan Titov, and Max Welling.
\newblock Modeling relational data with graph convolutional networks.
\newblock In \emph{The semantic web: 15th international conference, ESWC 2018, Heraklion, Crete, Greece, June 3--7, 2018, proceedings 15}, pages 593--607. Springer, 2018.

\bibitem[Sch{\"o}nherr and Lutz(2025)]{schonherr2025logical}
Moritz Sch{\"o}nherr and Carsten Lutz.
\newblock Logical characterizations of gnns with mean aggregation.
\newblock \emph{arXiv preprint arXiv:2507.18145}, 2025.

\bibitem[Sun et~al.(2018)Sun, Deng, Nie, and Tang]{sun2018rotate}
Zhiqing Sun, Zhi-Hong Deng, Jian-Yun Nie, and Jian Tang.
\newblock Rotate: Knowledge graph embedding by relational rotation in complex space.
\newblock In \emph{International Conference on Learning Representations}, 2018.

\bibitem[{Tena Cucala} et~al.(2021){Tena Cucala}, {Cuenca Grau}, Kostylev, and Motik]{cucala2021explainable}
David {Tena Cucala}, Bernardo {Cuenca Grau}, Egor~V Kostylev, and Boris Motik.
\newblock Explainable gnn-based models over knowledge graphs.
\newblock In \emph{International Conference on Learning Representations}, 2021.

\bibitem[{Tena Cucala} et~al.(2022){Tena Cucala}, {Cuenca Grau}, and Motik]{cucala2022faithful}
David {Tena Cucala}, Bernardo {Cuenca Grau}, and Boris Motik.
\newblock Faithful approaches to rule learning.
\newblock In \emph{Proceedings of the International Conference on Principles of Knowledge Representation and Reasoning}, volume~19, pages 484--493, 2022.

\bibitem[Tena~Cucala et~al.(2023)Tena~Cucala, Cuenca~Grau, Motik, and Kostylev]{cucala2023correspondence}
David Tena~Cucala, Bernardo Cuenca~Grau, Boris Motik, and Egor~V Kostylev.
\newblock On the correspondence between monotonic max-sum gnns and datalog.
\newblock In \emph{Proceedings of the International Conference on Principles of Knowledge Representation and Reasoning}, volume~19, pages 658--667, 2023.

\bibitem[Teru et~al.(2020)Teru, Denis, and Hamilton]{teru2020inductive}
Komal Teru, Etienne Denis, and Will Hamilton.
\newblock Inductive relation prediction by subgraph reasoning.
\newblock In \emph{International Conference on Machine Learning}, pages 9448--9457. PMLR, 2020.

\bibitem[Vasileiou et~al.(2025)Vasileiou, Finkelshtein, Geerts, Levie, and Morris]{vasileiou2025covered}
Antonis Vasileiou, Ben Finkelshtein, Floris Geerts, Ron Levie, and Christopher Morris.
\newblock Covered forest: Fine-grained generalization analysis of graph neural networks.
\newblock In \emph{Forty-second International Conference on Machine Learning}, 2025.

\bibitem[Veli{\v{c}}kovi{\'c} et~al.(2017)Veli{\v{c}}kovi{\'c}, Cucurull, Casanova, Romero, Lio, and Bengio]{velivckovic2017graph}
Petar Veli{\v{c}}kovi{\'c}, Guillem Cucurull, Arantxa Casanova, Adriana Romero, Pietro Lio, and Yoshua Bengio.
\newblock Graph attention networks.
\newblock \emph{nternational Conference on Learning Representations}, 2017.

\bibitem[Wang et~al.(2018)Wang, Zhang, Wang, Zhao, Li, Xie, and Guo]{wang2018ripplenet}
Hongwei Wang, Fuzheng Zhang, Jialin Wang, Miao Zhao, Wenjie Li, Xing Xie, and Minyi Guo.
\newblock Ripplenet: Propagating user preferences on the knowledge graph for recommender systems.
\newblock In \emph{Proceedings of the 27th ACM international conference on information and knowledge management}, pages 417--426, 2018.

\bibitem[Wang et~al.(2023)Wang, {Tena Cucala}, {Cuenca Grau}, and Horrocks]{wang2023faithful}
Xiaxia Wang, David {Tena Cucala}, Bernardo {Cuenca Grau}, and Ian Horrocks.
\newblock Faithful rule extraction for differentiable rule learning models.
\newblock In \emph{The Twelfth International Conference on Learning Representations}, 2023.

\bibitem[Wang et~al.(2025)Wang, Ma, Wang, and Zhuang]{wang2025rule}
Zhe Wang, Suxue Ma, Kewen Wang, and Zhiqiang Zhuang.
\newblock Rule-guided graph neural networks for explainable knowledge graph reasoning.
\newblock In \emph{Proceedings of the AAAI Conference on Artificial Intelligence}, volume~39, pages 12784--12791, 2025.

\bibitem[Wu et~al.(2020)Wu, Pan, Chen, Long, Zhang, and Yu]{wu2020comprehensive}
Zonghan Wu, Shirui Pan, Fengwen Chen, Guodong Long, Chengqi Zhang, and Philip~S Yu.
\newblock A comprehensive survey on graph neural networks.
\newblock \emph{IEEE transactions on neural networks and learning systems}, 32\penalty0 (1):\penalty0 4--24, 2020.

\bibitem[Yang and Mitchell(2017)]{yang2017leveraging}
Bishan Yang and Tom Mitchell.
\newblock Leveraging knowledge bases in lstms for improving machine reading.
\newblock In \emph{Proceedings of the 55th Annual Meeting of the Association for Computational Linguistics (Volume 1: Long Papers)}, pages 1436--1446, 2017.

\bibitem[Yang et~al.(2017)Yang, Yang, and Cohen]{yang2017differentiable}
Fan Yang, Zhilin Yang, and William~W Cohen.
\newblock Differentiable learning of logical rules for knowledge base reasoning.
\newblock \emph{Advances in neural information processing systems}, 30, 2017.

\end{thebibliography}


\clearpage
\section*{NeurIPS Paper Checklist}

\begin{enumerate}

\item {\bf Claims}
    \item[] Question: Do the main claims made in the abstract and introduction accurately reflect the paper's contributions and scope?
    \item[] Answer: \answerYes{} 
    \item[] Justification: The claims made in the abstract and introduction match the theoretical results given in \Cref{sec:theory} and experimental results in \Cref{sec:experiments}.
    \item[] Guidelines:
    \begin{itemize}
        \item The answer NA means that the abstract and introduction do not include the claims made in the paper.
        \item The abstract and/or introduction should clearly state the claims made, including the contributions made in the paper and important assumptions and limitations. A No or NA answer to this question will not be perceived well by the reviewers. 
        \item The claims made should match theoretical and experimental results, and reflect how much the results can be expected to generalize to other settings. 
        \item It is fine to include aspirational goals as motivation as long as it is clear that these goals are not attained by the paper. 
    \end{itemize}

\item {\bf Limitations}
    \item[] Question: Does the paper discuss the limitations of the work performed by the authors?
    \item[] Answer: \answerYes{} 
    \item[] Justification: Limitations are given in \Cref{sec:conclusions}.
    \item[] Guidelines:
    \begin{itemize}
        \item The answer NA means that the paper has no limitation while the answer No means that the paper has limitations, but those are not discussed in the paper. 
        \item The authors are encouraged to create a separate "Limitations" section in their paper.
        \item The paper should point out any strong assumptions and how robust the results are to violations of these assumptions (e.g., independence assumptions, noiseless settings, model well-specification, asymptotic approximations only holding locally). The authors should reflect on how these assumptions might be violated in practice and what the implications would be.
        \item The authors should reflect on the scope of the claims made, e.g., if the approach was only tested on a few datasets or with a few runs. In general, empirical results often depend on implicit assumptions, which should be articulated.
        \item The authors should reflect on the factors that influence the performance of the approach. For example, a facial recognition algorithm may perform poorly when image resolution is low or images are taken in low lighting. Or a speech-to-text system might not be used reliably to provide closed captions for online lectures because it fails to handle technical jargon.
        \item The authors should discuss the computational efficiency of the proposed algorithms and how they scale with dataset size.
        \item If applicable, the authors should discuss possible limitations of their approach to address problems of privacy and fairness.
        \item While the authors might fear that complete honesty about limitations might be used by reviewers as grounds for rejection, a worse outcome might be that reviewers discover limitations that aren't acknowledged in the paper. The authors should use their best judgment and recognize that individual actions in favor of transparency play an important role in developing norms that preserve the integrity of the community. Reviewers will be specifically instructed to not penalize honesty concerning limitations.
    \end{itemize}

\item {\bf Theory assumptions and proofs}
    \item[] Question: For each theoretical result, does the paper provide the full set of assumptions and a complete (and correct) proof?
    \item[] Answer: \answerYes{} 
    \item[] Justification: Assumptions are given in the definitions in \Cref{sec:background} and within each theoretical results presented in \Cref{sec:theory}. Full proofs are given in \Cref{app:proofs}, with proof sketches in \Cref{sec:theory}.
    \item[] Guidelines:
    \begin{itemize}
        \item The answer NA means that the paper does not include theoretical results. 
        \item All the theorems, formulas, and proofs in the paper should be numbered and cross-referenced.
        \item All assumptions should be clearly stated or referenced in the statement of any theorems.
        \item The proofs can either appear in the main paper or the supplemental material, but if they appear in the supplemental material, the authors are encouraged to provide a short proof sketch to provide intuition. 
        \item Inversely, any informal proof provided in the core of the paper should be complemented by formal proofs provided in appendix or supplemental material.
        \item Theorems and Lemmas that the proof relies upon should be properly referenced. 
    \end{itemize}

    \item {\bf Experimental result reproducibility}
    \item[] Question: Does the paper fully disclose all the information needed to reproduce the main experimental results of the paper to the extent that it affects the main claims and/or conclusions of the paper (regardless of whether the code and data are provided or not)?
    \item[] Answer: \answerYes{} 
    \item[] Justification: All implementation details necessary for reproducing results are given in \Cref{sec:experiments}, and the rule-checking definitions / constructions given in \Cref{sec:theory} describe our methods for obtaining sound rules.
    \item[] Guidelines:
    \begin{itemize}
        \item The answer NA means that the paper does not include experiments.
        \item If the paper includes experiments, a No answer to this question will not be perceived well by the reviewers: Making the paper reproducible is important, regardless of whether the code and data are provided or not.
        \item If the contribution is a dataset and/or model, the authors should describe the steps taken to make their results reproducible or verifiable. 
        \item Depending on the contribution, reproducibility can be accomplished in various ways. For example, if the contribution is a novel architecture, describing the architecture fully might suffice, or if the contribution is a specific model and empirical evaluation, it may be necessary to either make it possible for others to replicate the model with the same dataset, or provide access to the model. In general. releasing code and data is often one good way to accomplish this, but reproducibility can also be provided via detailed instructions for how to replicate the results, access to a hosted model (e.g., in the case of a large language model), releasing of a model checkpoint, or other means that are appropriate to the research performed.
        \item While NeurIPS does not require releasing code, the conference does require all submissions to provide some reasonable avenue for reproducibility, which may depend on the nature of the contribution. For example
        \begin{enumerate}
            \item If the contribution is primarily a new algorithm, the paper should make it clear how to reproduce that algorithm.
            \item If the contribution is primarily a new model architecture, the paper should describe the architecture clearly and fully.
            \item If the contribution is a new model (e.g., a large language model), then there should either be a way to access this model for reproducing the results or a way to reproduce the model (e.g., with an open-source dataset or instructions for how to construct the dataset).
            \item We recognize that reproducibility may be tricky in some cases, in which case authors are welcome to describe the particular way they provide for reproducibility. In the case of closed-source models, it may be that access to the model is limited in some way (e.g., to registered users), but it should be possible for other researchers to have some path to reproducing or verifying the results.
        \end{enumerate}
    \end{itemize}

\item {\bf Open access to data and code}
    \item[] Question: Does the paper provide open access to the data and code, with sufficient instructions to faithfully reproduce the main experimental results, as described in supplemental material?
    \item[] Answer: \answerYes{} 
    \item[] Justification: Code and data have been submitted with the supplementary material. The README in the code and hyperparameters in \Cref{sec:experiments} provide the details necessary for reproducing the experimental results.
    \item[] Guidelines:
    \begin{itemize}
        \item The answer NA means that paper does not include experiments requiring code.
        \item Please see the NeurIPS code and data submission guidelines (\url{https://nips.cc/public/guides/CodeSubmissionPolicy}) for more details.
        \item While we encourage the release of code and data, we understand that this might not be possible, so “No” is an acceptable answer. Papers cannot be rejected simply for not including code, unless this is central to the contribution (e.g., for a new open-source benchmark).
        \item The instructions should contain the exact command and environment needed to run to reproduce the results. See the NeurIPS code and data submission guidelines (\url{https://nips.cc/public/guides/CodeSubmissionPolicy}) for more details.
        \item The authors should provide instructions on data access and preparation, including how to access the raw data, preprocessed data, intermediate data, and generated data, etc.
        \item The authors should provide scripts to reproduce all experimental results for the new proposed method and baselines. If only a subset of experiments are reproducible, they should state which ones are omitted from the script and why.
        \item At submission time, to preserve anonymity, the authors should release anonymized versions (if applicable).
        \item Providing as much information as possible in supplemental material (appended to the paper) is recommended, but including URLs to data and code is permitted.
    \end{itemize}

\item {\bf Experimental setting/details}
    \item[] Question: Does the paper specify all the training and test details (e.g., data splits, hyperparameters, how they were chosen, type of optimizer, etc.) necessary to understand the results?
    \item[] Answer: \answerYes{} 
    \item[] Justification: All important details are given in \Cref{sec:experiments}, with full details in the code.
    \item[] Guidelines:
    \begin{itemize}
        \item The answer NA means that the paper does not include experiments.
        \item The experimental setting should be presented in the core of the paper to a level of detail that is necessary to appreciate the results and make sense of them.
        \item The full details can be provided either with the code, in appendix, or as supplemental material.
    \end{itemize}

\item {\bf Experiment statistical significance}
    \item[] Question: Does the paper report error bars suitably and correctly defined or other appropriate information about the statistical significance of the experiments?
    \item[] Answer: \answerYes{} 
    \item[] Justification: We compute and present $95\%$ confidence intervals for all of our model performance metrics, as described in \Cref{sec:experiments}.
    \item[] Guidelines:
    \begin{itemize}
        \item The answer NA means that the paper does not include experiments.
        \item The authors should answer "Yes" if the results are accompanied by error bars, confidence intervals, or statistical significance tests, at least for the experiments that support the main claims of the paper.
        \item The factors of variability that the error bars are capturing should be clearly stated (for example, train/test split, initialization, random drawing of some parameter, or overall run with given experimental conditions).
        \item The method for calculating the error bars should be explained (closed form formula, call to a library function, bootstrap, etc.)
        \item The assumptions made should be given (e.g., Normally distributed errors).
        \item It should be clear whether the error bar is the standard deviation or the standard error of the mean.
        \item It is OK to report 1-sigma error bars, but one should state it. The authors should preferably report a 2-sigma error bar than state that they have a 96\% CI, if the hypothesis of Normality of errors is not verified.
        \item For asymmetric distributions, the authors should be careful not to show in tables or figures symmetric error bars that would yield results that are out of range (e.g. negative error rates).
        \item If error bars are reported in tables or plots, The authors should explain in the text how they were calculated and reference the corresponding figures or tables in the text.
    \end{itemize}

\item {\bf Experiments compute resources}
    \item[] Question: For each experiment, does the paper provide sufficient information on the computer resources (type of compute workers, memory, time of execution) needed to reproduce the experiments?
    \item[] Answer: \answerYes{} 
    \item[] Justification: As stated in \Cref{sec:experiments}, we used 2 CPUs and 16GB of memory on a Linux server, for 34 days of compute time. Each individual run was a maximum of 12 hours. Preliminary and failed experiments used a total additional compute time of 28 days.
    \item[] Guidelines:
    \begin{itemize}
        \item The answer NA means that the paper does not include experiments.
        \item The paper should indicate the type of compute workers CPU or GPU, internal cluster, or cloud provider, including relevant memory and storage.
        \item The paper should provide the amount of compute required for each of the individual experimental runs as well as estimate the total compute. 
        \item The paper should disclose whether the full research project required more compute than the experiments reported in the paper (e.g., preliminary or failed experiments that didn't make it into the paper). 
    \end{itemize}
    
\item {\bf Code of ethics}
    \item[] Question: Does the research conducted in the paper conform, in every respect, with the NeurIPS Code of Ethics \url{https://neurips.cc/public/EthicsGuidelines}?
    \item[] Answer: \answerYes{} 
    \item[] Justification: Our work does not have human participants and uses publicly available benchmark datasets for evaluation. We foresee no negative societal impact, as our methods only allow for explaining and verifying model predictions, which can only expose bias, not create it. Our work is fully legally compliant.
    \item[] Guidelines:
    \begin{itemize}
        \item The answer NA means that the authors have not reviewed the NeurIPS Code of Ethics.
        \item If the authors answer No, they should explain the special circumstances that require a deviation from the Code of Ethics.
        \item The authors should make sure to preserve anonymity (e.g., if there is a special consideration due to laws or regulations in their jurisdiction).
    \end{itemize}

\item {\bf Broader impacts}
    \item[] Question: Does the paper discuss both potential positive societal impacts and negative societal impacts of the work performed?
    \item[] Answer: \answerYes{} 
    \item[] Justification: In \Cref{sec:introduction}, we point out the importance of explaining and justifying the predictions of neural models, which is the problem that our work aims to solve. We foresee no potential negative societal impacts.
    \item[] Guidelines:
    \begin{itemize}
        \item The answer NA means that there is no societal impact of the work performed.
        \item If the authors answer NA or No, they should explain why their work has no societal impact or why the paper does not address societal impact.
        \item Examples of negative societal impacts include potential malicious or unintended uses (e.g., disinformation, generating fake profiles, surveillance), fairness considerations (e.g., deployment of technologies that could make decisions that unfairly impact specific groups), privacy considerations, and security considerations.
        \item The conference expects that many papers will be foundational research and not tied to particular applications, let alone deployments. However, if there is a direct path to any negative applications, the authors should point it out. For example, it is legitimate to point out that an improvement in the quality of generative models could be used to generate deepfakes for disinformation. On the other hand, it is not needed to point out that a generic algorithm for optimizing neural networks could enable people to train models that generate Deepfakes faster.
        \item The authors should consider possible harms that could arise when the technology is being used as intended and functioning correctly, harms that could arise when the technology is being used as intended but gives incorrect results, and harms following from (intentional or unintentional) misuse of the technology.
        \item If there are negative societal impacts, the authors could also discuss possible mitigation strategies (e.g., gated release of models, providing defenses in addition to attacks, mechanisms for monitoring misuse, mechanisms to monitor how a system learns from feedback over time, improving the efficiency and accessibility of ML).
    \end{itemize}
    
\item {\bf Safeguards}
    \item[] Question: Does the paper describe safeguards that have been put in place for responsible release of data or models that have a high risk for misuse (e.g., pretrained language models, image generators, or scraped datasets)?
    \item[] Answer: \answerNA{} 
    \item[] Justification: As stated in \Cref{sec:experiments}, we use only publicly available datasets in our experiments.
    \item[] Guidelines:
    \begin{itemize}
        \item The answer NA means that the paper poses no such risks.
        \item Released models that have a high risk for misuse or dual-use should be released with necessary safeguards to allow for controlled use of the model, for example by requiring that users adhere to usage guidelines or restrictions to access the model or implementing safety filters. 
        \item Datasets that have been scraped from the Internet could pose safety risks. The authors should describe how they avoided releasing unsafe images.
        \item We recognize that providing effective safeguards is challenging, and many papers do not require this, but we encourage authors to take this into account and make a best faith effort.
    \end{itemize}

\item {\bf Licenses for existing assets}
    \item[] Question: Are the creators or original owners of assets (e.g., code, data, models), used in the paper, properly credited and are the license and terms of use explicitly mentioned and properly respected?
    \item[] Answer: \answerYes{} 
    \item[] Justification: The creators of the codebases and datasets we make use of the in the paper are all cited in \Cref{sec:experiments}. The licenses are as follows: (1) LogInfer \citep{liu2023revisiting} - \href{https://github.com/shuwen-liu-ox/LogInfer}{source} - MIT License, (2) non-monotonic LogInfer \citep{morris2024relational} - \href{https://github.com/mmorris44/gnn-sound-rule-extraction}{source} - MIT License, (3) Benchmark Datasets \citep{teru2020inductive} - \href{https://github.com/mmorris44/gnn-sound-rule-extraction}{source} - CC BY 4.0, and (4) LUBM \citep{guo2005lubm} - \href{https://github.com/shuwen-liu-ox/LogInfer}{source} - GPL-2.0.
    \item[] Guidelines:
    \begin{itemize}
        \item The answer NA means that the paper does not use existing assets.
        \item The authors should cite the original paper that produced the code package or dataset.
        \item The authors should state which version of the asset is used and, if possible, include a URL.
        \item The name of the license (e.g., CC-BY 4.0) should be included for each asset.
        \item For scraped data from a particular source (e.g., website), the copyright and terms of service of that source should be provided.
        \item If assets are released, the license, copyright information, and terms of use in the package should be provided. For popular datasets, \url{paperswithcode.com/datasets} has curated licenses for some datasets. Their licensing guide can help determine the license of a dataset.
        \item For existing datasets that are re-packaged, both the original license and the license of the derived asset (if it has changed) should be provided.
        \item If this information is not available online, the authors are encouraged to reach out to the asset's creators.
    \end{itemize}

\item {\bf New assets}
    \item[] Question: Are new assets introduced in the paper well documented and is the documentation provided alongside the assets?
    \item[] Answer: \answerNA{} 
    \item[] Justification: No new assets are given in the paper.
    \item[] Guidelines:
    \begin{itemize}
        \item The answer NA means that the paper does not release new assets.
        \item Researchers should communicate the details of the dataset/code/model as part of their submissions via structured templates. This includes details about training, license, limitations, etc. 
        \item The paper should discuss whether and how consent was obtained from people whose asset is used.
        \item At submission time, remember to anonymize your assets (if applicable). You can either create an anonymized URL or include an anonymized zip file.
    \end{itemize}

\item {\bf Crowdsourcing and research with human subjects}
    \item[] Question: For crowdsourcing experiments and research with human subjects, does the paper include the full text of instructions given to participants and screenshots, if applicable, as well as details about compensation (if any)? 
    \item[] Answer: \answerNA{} 
    \item[] Justification: No crowdsourcing nor human subjects.
    \item[] Guidelines:
    \begin{itemize}
        \item The answer NA means that the paper does not involve crowdsourcing nor research with human subjects.
        \item Including this information in the supplemental material is fine, but if the main contribution of the paper involves human subjects, then as much detail as possible should be included in the main paper. 
        \item According to the NeurIPS Code of Ethics, workers involved in data collection, curation, or other labor should be paid at least the minimum wage in the country of the data collector. 
    \end{itemize}

\item {\bf Institutional review board (IRB) approvals or equivalent for research with human subjects}
    \item[] Question: Does the paper describe potential risks incurred by study participants, whether such risks were disclosed to the subjects, and whether Institutional Review Board (IRB) approvals (or an equivalent approval/review based on the requirements of your country or institution) were obtained?
    \item[] Answer: \answerNA{} 
    \item[] Justification: No crowdsourcing nor human subjects.
    \item[] Guidelines:
    \begin{itemize}
        \item The answer NA means that the paper does not involve crowdsourcing nor research with human subjects.
        \item Depending on the country in which research is conducted, IRB approval (or equivalent) may be required for any human subjects research. If you obtained IRB approval, you should clearly state this in the paper. 
        \item We recognize that the procedures for this may vary significantly between institutions and locations, and we expect authors to adhere to the NeurIPS Code of Ethics and the guidelines for their institution. 
        \item For initial submissions, do not include any information that would break anonymity (if applicable), such as the institution conducting the review.
    \end{itemize}

\item {\bf Declaration of LLM usage}
    \item[] Question: Does the paper describe the usage of LLMs if it is an important, original, or non-standard component of the core methods in this research? Note that if the LLM is used only for writing, editing, or formatting purposes and does not impact the core methodology, scientific rigorousness, or originality of the research, declaration is not required.
    \item[] Answer: \answerNA{} 
    \item[] Justification: LLMs were not used in the paper.
    \item[] Guidelines:
    \begin{itemize}
        \item The answer NA means that the core method development in this research does not involve LLMs as any important, original, or non-standard components.
        \item Please refer to our LLM policy (\url{https://neurips.cc/Conferences/2025/LLM}) for what should or should not be described.
    \end{itemize}

\end{enumerate}


\appendix
\clearpage
\section{Proofs} \label{app:proofs}

\subsection{Proposition \ref{prop:magnn_no_equiv_fol}} \label{app:proof:prop:magnn_no_equiv_fol}

\begin{proposition*}
There exists a MAGNN $\mathcal{M}$ such that for any dataset $D$ and constant $a \in \texttt{con}(D)$, $U(a) \in T_\mathcal{M}(D)$ if and only at least half of the neighbours $b$ of $a$ in $D$ are such that $U(b) \in D$.
This logical function cannot be defined in FOL \citep{libkin2004elements}.
\end{proposition*}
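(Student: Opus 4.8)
The plan is to establish the two claims separately: first exhibit a concrete single-layer MAGNN realising the "at least half" function, and then justify the cited non-definability via an Ehrenfeucht--Fra\"iss\'e (EF) game. For the construction I would take $\delta = 1$ (a single unary predicate $U = U_1$), a single colour $P$, and $L = 1$ layer. Setting $\mathbf{A}_1 = [0]$, $\mathbf{b}_1 = [0]$, $\mathbf{B}^P_1 = [1]$, $\sigma_1 = \mathrm{ReLU}$, and threshold $t_\mathcal{M} = \tfrac{1}{2}$ in \Cref{eq:GNN}, the update in \Cref{align:def_gnn_update} evaluates at vertex $v^a$ to $\mathbf{v}^a_1[1] = \mathrm{ReLU}(\mathrm{mean}(\multisetopen \mathbf{u}_0[1] \mid (v^a,u) \in E^P \multisetclose))$. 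Since each $\mathbf{u}_0[1] \in \{0,1\}$ records exactly whether $U(b) \in D$ for the neighbour $b$ corresponding to $u$, this mean is the fraction of $P$-neighbours of $a$ satisfying $U$; as the fraction is non-negative, $\mathrm{ReLU}$ acts as the identity. All weights are non-negative and the aggregation is mean, so $\mathcal{M}$ is a MAGNN, and $\mathrm{ReLU}$ is continuous, monotonically increasing, and has non-negative range as required.

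For correctness, I would note that because the classifier uses $\geq$, we have $\texttt{cls}_{1/2}(\mathbf{v}^a_1[1]) = 1$ exactly when the fraction above is $\geq \tfrac{1}{2}$, i.e. exactly when at least half the neighbours $b$ of $a$ satisfy $U(b) \in D$. The only delicate point is the empty-neighbourhood case, for which I would adopt the standard convention (as in R-GCN) that the mean of the empty multiset is $0$; this edge case is immaterial to the separation below, which uses large neighbourhoods. It is worth emphasising that the boundary ratio of exactly one half being classified positively is precisely what makes the $\geq$ classifier align with ``at least half'', consistent with the footnote's warning that a strict classifier would change the theory.

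For the second claim --- the hard part --- I would make the cited result \citep{libkin2004elements} precise with an EF game over the vocabulary $\{U,P\}$ with one distinguished element (the interpretation of the free variable $x$). Fix any $k$ and build two ``labelled star'' datasets: $D_1$ has centre $a_1$ with $m$ neighbours satisfying $U$ and $m$ not, while $D_2$ has centre $a_2$ with $m$ neighbours satisfying $U$ and $m+1$ not, with no further facts. Then $a_1$ satisfies ``at least half'' ($\tfrac{m}{2m} = \tfrac{1}{2}$) whereas $a_2$ does not ($\tfrac{m}{2m+1} < \tfrac{1}{2}$). Taking $m \geq k$, I would give the duplicator the strategy that matches $a_1 \leftrightarrow a_2$ and answers each spoiler move on a leaf by an unused leaf of the same $U$-type; since leaves are adjacent only to the centre and enough unused leaves of each type always remain, this maintains a partial isomorphism for all $k$ rounds. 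Hence $(D_1,a_1)$ and $(D_2,a_2)$ agree on every FOL formula of quantifier rank $k$, so no FOL formula defines the target function. The main obstacle is not the game itself, since the star structures are simple, but verifying that the duplicator survives the single extra $\neg U$-leaf of $D_2$; this holds precisely because the binding count is the smaller side, $m \geq k$, so neither side is ever exhausted within $k$ rounds.
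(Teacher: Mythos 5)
Your MAGNN construction is essentially the paper's: one layer, $\mathbf{A}_1=(0)$, $\mathbf{b}_1=(0)$, $\mathbf{B}_1^P=(1)$, threshold $\tfrac{1}{2}$, so the output at $v^a$ is exactly the fraction of $P$-successors of $a$ satisfying $U$, and the $\geq$ classifier realises ``at least half''. (Your choice of ReLU rather than the paper's identity activation is in fact slightly more faithful to the paper's own definition, which requires an activation with non-negative range; since the pre-activation is non-negative here, the two coincide. Your explicit handling of the empty-neighbourhood convention is also a point the paper's proof glosses over, as its expression becomes $0/0$ there.) Where you genuinely diverge is the non-definability claim: the paper discharges it by recognising the target as an instance of the majority predicate $\mathrm{MAJ}(\varphi,\psi)$ and invoking Corollary 13.17 of Libkin, which shows $\mathrm{MAJ}$ is undefinable even in an extension of FOL. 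You instead give a self-contained Ehrenfeucht--Fra\"{\i}ss\'{e} argument on two pointed star structures --- one with $m$ positive and $m$ negative leaves, the other with $m$ positive and $m+1$ negative leaves, with $m\geq k$ --- and your duplicator strategy and the accounting of why neither side exhausts its leaves within $k$ rounds are correct; the boundary case $\tfrac{m}{2m}=\tfrac{1}{2}$ versus $\tfrac{m}{2m+1}<\tfrac{1}{2}$ is exactly where the two structures must disagree, and it does. The paper's route is shorter but leaves the reader to reconstruct the reduction to $\mathrm{MAJ}$; yours is elementary, fully explicit, and identifies the precise witnessing pairs of structures. Both are sound.
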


\begin{proof}
Consider a MAGNN $\mathcal{M}$ over a signature with only one unary predicate ($U$) and only one binary predicate ($P$), corresponding to colour $c$ in the canonical encoding.
Let $\mathcal{M}$ have $L=1$ (one layer), with $\mathbf{A}_1 = (0)$, $\mathbf{b}_1 = (0)$, and $\mathbf{B}_1^c = (1)$.
Finally, let $\sigma_1$ be the identity function and $t_\mathcal{M} = 0.5$.

Consider a dataset $D$ and constant $a$ in $D$.
Then $\mathbf{v}^a_L$, the labelling of $v^a$ after $\mathcal{M}$ is applied to $\texttt{enc}(D)$, depends only on the number and type of the direct neighbours of $v^a$.
Each neighbour $v^b$ either has $U(b) \in D$ or $U(b) \not\in D$.
We end up with

\begin{align*}
\mathbf{v}^a_L[1] &= \frac{|\{ v^b ~|~ (v^a, v^b) \in E^c \text{ and } \mathbf{v}^b_L[1] = 1 \}|}{|N_c(v^a)|} \\
&= \frac{|\{ b ~|~ P(a, b) \in D \text{ and } U(b) \in D \}|}{|\{ b ~|~ P(a, b) \in D \}|}
\end{align*}

Now $\mathbf{v}^a_L[1] \geq 0.5 = t_\mathcal{M}$ if and only if $|\{ b ~|~ P(a, b) \in D \text{ and } U(b) \in D \}| \geq |\{ b ~|~ P(a, b) \in D \text{ and } U(b) \not\in D \}|$.
I.e. $U(a) \in T_\mathcal{M}(D)$ if and only at least half of the neighbours $b$ of $a$ in $D$ are such that $U(b) \in D$.

However, this logical function cannot be expressed in FOL.
To prove this, we consider \citet{libkin2004elements}, who defines the majority predicate $\text{MAJ}(\varphi, \psi)$, which tests if all constants satisfying $\varphi$ contains at least half of the constants satisfying $\psi$.
We find that, for every dataset $D$ and constant $a \in \texttt{con}(D)$, we have $(D, a) \models \text{MAJ}(R(a,x) \land U(x), R(a,x))$ if and only if ``at least half of the neighbours $b$ of $a$ in $D$ are such that $U(b) \in D$'' if and only if $U(a) \in T_\mathcal{M}(D)$.

However, \citet[Corollary 13.17]{libkin2004elements} proves that MAJ queries cannot be defined in an extension of FOL, from which it trivially follows that they cannot be defined in FOL.
Thus, $\mathcal{M}$ is equivalent to a logical function that cannot be defined in FOL.
\end{proof}

\subsection{Theorem \ref{thm:magnn_monotonic_rules}} \label{app:proof:thm:magnn_monotonic_rules}
\begin{theorem*}
Let $\mathcal{M}$ be a MAGNN and $r$ an ELUQ rule that is sound for $\mathcal{M}$.
Then there exists a finite set of rules $R$ such that:
\begin{enumerate}
    \item Each $r' \in R$ has the form $\exists P_1.\top ~\sqcap~  ... ~\sqcap~ \exists P_j.\top ~\sqcap~ A_1 ~\sqcap~ ... ~\sqcap~ A_k ~\sqcap~ \top \sqsubseteq A_{k+1}$, where each $P_i$ is a binary predicate, $A_i$ a unary predicate, and $j, k \in \mathbb{N}_0$.
    \item Each $r' \in R$ is sound for $\mathcal{M}$.
    \item $R$ subsumes $r$.
\end{enumerate}
\end{theorem*}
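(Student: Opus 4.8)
The plan is to rewrite the sound ELUQ rule $r: C \sqsubseteq A$ into the desired finite set $R$ in two stages: first remove disjunction from the top level of the body, then repeatedly collapse each quantified conjunct down to a bare $\exists P.\top$. Throughout, I would treat each atomic concept and each maximal quantified subconcept $\geq_n P.C'$ (using $\exists P.C' = \geq_1 P.C'$) as an opaque propositional literal, so that, outside of any quantifier, $C$ is merely a $\sqcap/\sqcup$-combination of these literals and $(D,a)\models\cdot$ respects that Boolean structure.

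\emph{Stage 1 (disjunction).} I would put $C$ into disjunctive normal form at the top level, $C \equiv C^{(1)} \sqcup \dots \sqcup C^{(p)}$, each $C^{(s)}$ a conjunction of literals, and form the rules $r^{(s)}: C^{(s)} \sqsubseteq A$. Since $C^{(s)} \sqsubseteq C$, soundness of $r$ transfers to every $r^{(s)}$; and since $(D,a)\models C$ forces some disjunct to hold, $\{r^{(s)}\}_s$ already subsumes $r$. It thus suffices to process a single disjunction-free conjunction of atoms and quantified literals.

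\emph{Stage 2 (quantifier collapse).} Writing the current rule as $\geq_n P.C_1 \sqcap C_2 \sqsubseteq A$, I replace the singled-out conjunct by $\exists P.\top$, obtaining $\exists P.\top \sqcap C_2 \sqsubseteq A$. Subsumption is immediate because $\geq_n P.C_1$ entails $\exists P.\top$. For soundness I use the limiting construction indicated in the statement: on the canonical (tree-shaped) instance $D_1$ of the new body I build $D_m$ by replicating $a$'s $P$-successors into $m$ fresh isomorphic copies apiece and attaching $n$ fresh $P$-successors that instantiate $C_1$. Each $D_m$ satisfies $\geq_n P.C_1 \sqcap C_2$ ($C_2$ survives because ELUQ concepts are monotone under added facts), so soundness of the previous rule yields $A(a) \in T_\mathcal{M}(D_m)$, i.e. the $A$-component of $\mathbf{v}^a_L$ is $\geq t_\mathcal{M}$. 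Because aggregation is the mean, replicating the original successors leaves their message-average fixed while the $n$ extra successors are washed out as $m\to\infty$; hence the $P$-mean in $D_m$ converges to that in $D_1$, and by continuity of the activations $\mathbf{v}^a_L(D_m)\to \mathbf{v}^a_L(D_1)$. As each term satisfies $\geq t_\mathcal{M}$ and the classifier uses the non-strict threshold, the limit does too, giving $A(a)\in T_\mathcal{M}(D_1)$; soundness on arbitrary datasets then follows from the same canonical-instance reasoning used in \Cref{prop:monotonic_rule_soundness_check}. A measure such as the number of top-level quantified conjuncts not yet equal to $\exists\cdot.\top$ strictly decreases per replacement, so after finitely many steps the rule attains the restricted form of \Cref{thm:magnn_monotonic_rules}, and I add it to $R$.

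The main obstacle is exactly this soundness step in Stage 2: mean aggregation is not monotone under dataset extension, so I cannot simply adjoin the $n$ witnesses for $C_1$ and appeal to monotonicity. The dilution-by-replication trick is what repairs this, and it rests essentially on two modelling choices --- the continuity of each $\sigma_\ell$ and the non-strict $\geq$ threshold in $\texttt{cls}_t$ --- since the argument uses that a limit of values $\geq t_\mathcal{M}$ is again $\geq t_\mathcal{M}$, which would fail for a strict threshold.
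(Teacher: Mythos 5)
Your overall strategy coincides with the paper's: split the outer disjunction into one rule per disjunct, then repeatedly collapse $\geq_n P.C_1$ to $\exists P.\top$, justifying each collapse by diluting the $n$ fresh $C_1$-witnesses among many blank $P$-successors so that the mean converges, and invoking continuity of the activations together with the non-strict threshold to pass to the limit. You also correctly isolate the two modelling assumptions the argument hinges on. However, there is a genuine gap in how you discharge soundness of the intermediate rule $r_{i+1}:\ \exists P.\top \sqcap C_2 \sqsubseteq A$. You establish $A(a)\in T_\mathcal{M}(D_1)$ only for the \emph{canonical} instance $D_1$ of the new body and then claim that soundness on arbitrary datasets ``follows from the same canonical-instance reasoning used in \Cref{prop:monotonic_rule_soundness_check}.'' That reasoning is valid only when every successor in the canonical instance is a blank (hence feature-minimal) node, which is exactly why \Cref{prop:monotonic_rule_soundness_check} is stated only for fully collapsed rules. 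At an intermediate stage $C_2$ may still contain quantified conjuncts, say $C_2 = {\geq_1} Q.B$: the canonical instance gives $a$ a single $Q$-successor labelled $B$, whereas an arbitrary dataset satisfying the body may give $a$ that one $B$-successor plus a thousand blank $Q$-successors, dragging the $Q$-mean far below its canonical value. So $A(a)\in T_\mathcal{M}(D_1)$ does not imply $A(d)\in T_\mathcal{M}(D)$ for arbitrary $D$ satisfying the body --- this is precisely the non-monotonicity under extension that you yourself flag as ``the main obstacle,'' but you only apply the repair to the conjunct being collapsed, not to the residual $C_2$.

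The paper closes this hole by running the limit argument in the opposite direction: it supposes $r_{i+1}$ has a counterexample, normalises only the $\exists P.\top$-witness $b$ to be a fresh blank constant (a legitimate monotonicity step, since a blank successor is feature-minimal and so un-merging or isolating it cannot raise the output at $v^a$), and then builds the diluted datasets $D_m$ on top of that \emph{arbitrary} counterexample $D_1$, deriving a contradiction with the soundness of $r_i$. Your construction can be repaired the same way --- replicate the $P$-successors of the focal constant in the arbitrary dataset rather than in a canonical one --- but as written the final lifting step does not go through.
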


\begin{proof}

First, notice that $r$ is of the form $C^1 ~ \sqcup ~ ... ~ \sqcup ~ C^m \sqsubseteq A$ for ELUQ concepts $C^1, ..., C^m$ containing no disjunction in the outer scope (i.e. disjunction is only nested within existential quantification), $m \in \mathbb{N}$, and atomic concept $A$.
Note that $r$ not containing any disjunctions in the outer scope is covered by $m = 1$.

Now, let $R' := \{ C^1 \sqsubseteq A, ..., C^m \sqsubseteq A \}$.
By the operation of rule application, $T_{R'}(D) = T_r(D)$ for all datasets $D$.
Also, each rule in $R'$ is sound for $\mathcal{M}$, as a consequence of $r$ being sound for $\mathcal{M}$.
We will construct $R$ by including in it a rule for each rule in $R'$.

\begin{center}
\textbf{Base Case}
\end{center}

Consider each $r_1 \in R'$.
If $r_1$ contains no $\exists$ or $\geq_n$ operators, then it is already trivially of the form specified in the theorem, so let $r_k = r_1$ and include it in $R$.
Also, as stated above, $r_1$ is sound for $\mathcal{M}$.
Otherwise, we have that $r_1$ is of the form $\geq_n P.C_1 ~\sqcap~ C_2 \sqsubseteq A$, where $P$ is a binary predicate, $C_1$ an ELUQ concept, $C_2$ an ELUQ concept without disjunction, and $n$ a positive integer.

We now inductively define a sequence $r_2, ..., r_k$ of ELUQ rules that such for all $i \in \{ 1, ..., k - 1 \}$,
\begin{enumerate}
    \item $r_{i+1}$ is sound for $\mathcal{M}$
    \item $r_{i + 1}$ subsumes $r_{i}$
\end{enumerate}

\begin{center}
\textbf{Inductive Step}
\end{center}

For this purpose, consider the last defined rule $r_i$: we define $r_{i+1}$ and prove the above properties.

If $r_i$ contains no $\exists$ or $\geq_n$ operators, then let $k := i$ and cease the inductive construction.
As assumed in the inductive hypothesis, $r_i$ is sound for $\mathcal{M}$.
Otherwise, we have that $r_i$ is of the form $\geq_n P.C_1 ~\sqcap~ C_2 \sqsubseteq A$, where $P$ is a binary predicate, $C_1$ an ELUQ concept, $C_2$ an ELUQ concept without disjunction, and $n$ a positive integer.

Now let $r_{i+1}$ be the rule $\exists P.\top \sqcap C_2 \sqsubseteq A$.
Trivially, $r_{i+1}$ subsumes $r_i$, since they have the same head and whenever the body of $r_{i+1}$ is satisfied, so too is the body of $r_i$.
It remains to prove that $r_{i+1}$ is sound for $\mathcal{M}$.
For this purpose, assume to the contrary that $r_{i+1}$ is not sound for $\mathcal{M}$.

If for all datasets $D_1$ such that $P(a, b) \in D_1$ and $(D_1, a) \models C_2$ for some constant $b$, we have $A(a) \in T_\mathcal{M}(D_1)$, then $r_{i+1}$ would be sound for $\mathcal{M}$.
This follows from these conditions covering all datasets that satisfy the body of $r_{i+1}$.

Likewise, if for all datasets $D_1$ such that $P(a, b) \in D_1$, $(D_1, a) \models C_2$, $b \not= a$, and $b$ is not mentioned in any other fact of $D_1$, we have $A(a) \in T_\mathcal{M}(D_1)$, then $r_{i+1}$ would be sound for $\mathcal{M}$.
This is because neither mentioning $b$ in another fact of $D_1$ nor having $a=b$ will decrease the output of $\mathcal{M}(D_1)$ for node $v^a$, 

So instead there exists some dataset $D_1$ such that $P(a, b) \in D_1$, $(D_1, a) \models C_2$, and $A(a) \not\in T_\mathcal{M}(D_1)$ for some constants $a \not= b$, where $b$ is not mentioned in any fact of $D_1$ besides $P(a,b)$.
Here follows a representation of $D_1$:

\begin{tikzpicture}
    \node[shape=circle,draw=black, minimum size =1cm, label=below:$C_2$] (a) at (0, 0) {$a$};

    \node[shape=circle,draw=black, minimum size =1cm] (b) at (2, 0) {$b$};
    
    \path [<-] (b) edge node[above] {$P$} (a);
\end{tikzpicture}

Now consider $\mathbf{v}^a_L$, the output for node $v^a$ when $\mathcal{M}$ is applied to $\texttt{enc}(D_1)$, and let $1 \leq p \leq \delta$ be in the index of the unary predicate $A$ in the canonical encoding.
Since $A(a) \not\in T_\mathcal{M}(D_1)$, we must have $\mathbf{v}^a_L[p] < t_\mathcal{M}$.
Let $t := \frac{t_\mathcal{M} - \mathbf{v}^a_L[p]}{2}$.

We now define a dataset $D_m$, parameterized by a non-negative integer $m$.
We define $D_m = D_1 \setminus \{ P(a, b) \} \cup \{ P(a, c_1), ..., P(a, c_n) \} \cup \{ P(a, b_1), ..., P(a, b_m) \}$ for distinct constants $c_1, ..., c_n, b_1, ..., b_m$ (all also distinct from $a$) and further require that $(D_m, c_1) \models C_1, ..., (D_m, c_n) \models C_1$, by including other facts necessary for the satisfaction to hold.
We also require that $b_1, ..., b_m$ are not mentioned in any other facts of $D_m$.
These further requirements may be satisfied in several different ways, but any of them will suffice for the definition of $D_m$.
Here follows a representation of $D_m$:

\begin{tikzpicture}
    \node[shape=circle,draw=black, minimum size =1cm] (b1) at (0, 2) {$b_1$};
    \node[shape=circle,draw=black, minimum size =1cm] (b) at (-2, 1.5) {$...$};
    \node[shape=circle,draw=black, minimum size =1cm] (bm) at (-2, 0) {$b_m$};
    
    \node[shape=circle,draw=black, minimum size =1cm, label=below:$C_2$] (a) at (0, 0) {$a$};

    \node[shape=circle,draw=black, minimum size =1cm, label=right:$C_1$] (c1) at (1.5, 2) {$c_1$};
    \node[shape=circle,draw=black, minimum size =1cm, label=right:$C_1$] (c) at (2, 1) {$...$};
    \node[shape=circle,draw=black, minimum size =1cm, label=right:$C_1$] (cn) at (2.5, 0) {$c_n$};

    \path [<-] (b1) edge node[left] {$P$} (a);
    \path [<-] (b) edge node[below] {$P$} (a);
    \path [<-] (bm) edge node[below] {$P$} (a);
    \path [<-] (c1) edge node[left] {$P$} (a);
    \path [<-] (c) edge node[below] {$P$} (a);
    \path [<-] (cn) edge node[below] {$P$} (a);
\end{tikzpicture}

Notice that $(D_m, a) \models C_2$ and $(D_m, a) \models~ \geq_n P.C_1 $, so $A(a) \in T_{r_i}(D_m)$.
But from the soundness of $r_i$, we also have $A(a) \in T_\mathcal{M}(D_m)$.
However, as $m \to \infty$, the computation in node $v^a$ when $\mathcal{M}$ is applied to $\texttt{enc}(D_m)$ tends to that of $\mathcal{M}$ being applied to $\texttt{enc}(D_1)$ --- this is consequence of the definition of MAGNNs.
Thus, there exists some $m$ such that when considering $\mathbf{v}^a_L$, the output of $\mathcal{M}$ applied to $\texttt{enc}(D_m)$, we have $\mathbf{v}^a_L[p] < t$ \changemarker{(note that this requires the use of continuous activation functions, otherwise it is possible that $\mathbf{v}^a_L[p] \geq t$ for every $m$)}.

But since $t < t_\mathcal{M}$, we have $\mathbf{v}^a_L[p] < t_\mathcal{M}$.
And since $A(a) \in T_\mathcal{M}(D_m)$, we have $\mathbf{v}^a_L[p] \geq t_\mathcal{M}$.
This is a contradiction, so $r_{i+1}$ is sound for $\mathcal{M}$.

\begin{center}
\textbf{End of Inductive Construction}
\end{center}

Since $r_1$ contains a finite number of $\exists$ or $\geq_n$ operators and each $r_{i+1}$ has one fewer $\exists$ or $\geq_n$ operator than $r_{i}$, this inductive construction is guaranteed to terminate for some $k$.

We thus have a sequence $r_1, r_2, ..., r_k$ of ELUQ rules such that for all $i \in \{ 1, ..., k - 1 \}$, $r_{i + 1}$ subsumes $r_{i}$.
Since $r_{i + 1}$ subsumes $r_{i}$, for all datasets $D$ we have $T_{r_{i}}(D) \subseteq T_{r_{i + 1}}(D)$.
So $T_{r_1}(D) \subseteq T_{r_2}(D) \subseteq ... \subseteq T_{r_k}(D)$, and thus $T_{r_1}(D) \subseteq T_{r_k}(D)$.
Also, notice that $r_k$ contains no existential quantifiers or disjunction, so it is of the form specified in the theorem.
Furthermore, $r_k$ is sound for $\mathcal{M}$, as was shown above.
We include $r_k$ in $R$.

\begin{center}
\textbf{Conclusion}
\end{center}

We now have a set $R$ of rules of the form specified in the theorem, such that each is sound for $\mathcal{M}$.
It remains to be shown that $R$ subsumes $r$.

Let $D$ be a dataset.
Recall that $T_r(D) = T_{R'}(D)$.
But for each $r_1 \in R'$, there exists $r_k \in R$ such that $T_{r_1}(D) \subseteq T_{r_k}(D)$.
Thus $\bigcup_{r_1 \in R'} T_{r_1}(D) \subseteq \bigcup_{r_k \in R} T_{r_k}(D)$, and so $T_{R'} (D) \subseteq T_R(D)$.
Hence, we have $T_r(D) \subseteq T_R(D)$, so $R$ subsumes $r$, as required.

\end{proof}

\subsection{Proposition \ref{prop:monotonic_rule_soundness_check}} \label{app:proof:prop:monotonic_rule_soundness_check}

\begin{proposition*}
Let $\mathcal{M}$ be a MAGNN and $r: \exists P_1.\top ~\sqcap~  ... ~\sqcap~ \exists P_j.\top ~\sqcap~ A_1 ~\sqcap~ ... ~\sqcap~ A_k \sqsubseteq A_{k+1}$ be a rule of the form given in \Cref{thm:magnn_monotonic_rules}.
Define $D_\text{base} := \{ A_1(a), ..., A_k(a), P_1(a, b), ..., P_j(a, b) \}$ for constants $a \not= b$.
Then $r$ is sound for $\mathcal{M}$ if and only if $A_{k+1}(a) \in T_\mathcal{M}(D_\text{base})$.
\end{proposition*}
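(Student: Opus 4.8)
The plan is to prove the biconditional by treating the two directions separately; the forward direction is immediate and the backward direction carries the whole argument. For the forward direction, suppose $r$ is sound for $\mathcal{M}$. I would first observe that $(D_\text{base}, a) \models B$, where $B$ is the body of $r$: indeed $A_i(a) \in D_\text{base}$ for each $i \leq k$, and for each $i \leq j$ the fact $P_i(a,b)$ witnesses $(D_\text{base}, a) \models \exists P_i.\top$. Hence $A_{k+1}(a) \in T_r(D_\text{base})$, and soundness gives $A_{k+1}(a) \in T_\mathcal{M}(D_\text{base})$ directly.

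For the backward direction, suppose $A_{k+1}(a) \in T_\mathcal{M}(D_\text{base})$; I must show $T_r(D) \subseteq T_\mathcal{M}(D)$ for every dataset $D$. Fix $D$ and a constant $d$ with $A_{k+1}(d) \in T_r(D)$, i.e.\ $(D, d) \models B$. The goal is to compare the computation at $v^d$ in $D$ with the computation at $v^a$ in $D_\text{base}$ and conclude $\mathbf{v}^d_L[p] \geq \mathbf{v}^a_L[p]$, where $p$ is the coordinate of $A_{k+1}$; since the latter is $\geq t_\mathcal{M}$ by assumption, this yields $A_{k+1}(d) \in T_\mathcal{M}(D)$. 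The first ingredient is a \emph{minimal-label lemma}: since $b$ is a bare sink in $D_\text{base}$ (no unary facts, no outgoing edges), its label $\mathbf{z}_\ell := \mathbf{v}^b_\ell$ depends only on $\mathcal{M}$, and by induction on $\ell$---using non-negativity of the weights, the non-negative range of each $\sigma_\ell$ (so every layer-$\ell$ label, and every mean of such labels, is $\geq \mathbf{0}$), and the monotonicity of $\sigma_\ell$---one shows $\mathbf{v}^u_\ell \geq \mathbf{z}_\ell$ coordinatewise for every vertex $u$ of every dataset. Thus $b$ realizes the coordinatewise-minimal label at each layer.

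The second, main ingredient is the domination $\mathbf{v}^d_\ell \geq \mathbf{v}^a_\ell$ for all $\ell$, proved by induction on $\ell$. The base case holds because $\mathbf{v}^a_0$ has ones exactly in coordinates $A_1, \dots, A_k$ while $\mathbf{v}^d_0$ has ones at least there. For the step I expand the update rule \eqref{align:def_gnn_update} at both vertices: the self-term compares via the inductive hypothesis and the non-negative $\mathbf{A}_\ell$, and the aggregation term is handled colour by colour. In $D_\text{base}$ the only contributing colours are $P_1, \dots, P_j$, each with the single neighbour $b$ of label $\mathbf{z}_{\ell-1}$; in $D$, since $(D,d) \models \exists P_i.\top$, the vertex $v^d$ has at least one $P_i$-neighbour, and by the minimal-label lemma every neighbour's layer-$(\ell-1)$ label is $\geq \mathbf{z}_{\ell-1}$, so the mean over the $P_i$-neighbours of $v^d$ is $\geq \mathbf{z}_{\ell-1}$. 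Any colour occurring at $v^d$ but absent at $v^a$ contributes a mean $\geq \mathbf{0}$ and hence only helps. Multiplying by the non-negative $\mathbf{B}_\ell^c$ and applying monotonicity of $\sigma_\ell$ gives $\mathbf{v}^d_\ell \geq \mathbf{v}^a_\ell$, completing the induction and the proof.

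The main obstacle is exactly the aggregation step of this last induction: for mean-GNNs, plain monotonicity under edge addition \emph{fails} (adding a low-valued neighbour can lower a mean), so one cannot invoke a generic homomorphism-monotonicity lemma. What rescues the argument is the minimality established first---because $b$ is a bare sink its label is coordinatewise smallest, so every neighbour available in $D$ dominates it and the mean of such dominating values can only exceed $\mathbf{z}_{\ell-1}$. Two minor technical points I would dispatch in passing: a self-loop $P_i(d,d)$ contributes $\mathbf{v}^d_{\ell-1} \geq \mathbf{z}_{\ell-1}$ and is therefore harmless (this is the ``$b=a$'' and ``un-merging'' remark of the sketch), and the convention for the mean of an empty multiset never arises for the contributing colours of $D_\text{base}$ by construction.
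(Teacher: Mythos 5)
Your proposal is correct, and the forward direction coincides with the paper's. For the backward direction, however, you take a genuinely different route. The paper factors the argument through a chain of intermediate datasets: it first shows the conclusion transfers from $D_\text{base}$ to the family $D_\text{splits}$ (un-merging $b$ into $b_1,\dots,b_j$), then to $D_\text{merges}$ (allowing $b_i=a$), then locates a carefully chosen subset $D'\subseteq D$ isomorphic to a member of $D_\text{merges}$, and finally argues that extending $D'$ to $D$ cannot decrease the output at $v^c$. You instead run a single direct domination induction, comparing $\mathbf{v}^d_\ell$ in $D$ against $\mathbf{v}^a_\ell$ in $D_\text{base}$ layer by layer, anchored by an explicit minimal-label lemma stating that the bare sink $b$ realizes the coordinatewise-minimal label $\mathbf{z}_\ell$ among all vertices of all datasets. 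What your version buys is precision at exactly the delicate point: the paper's claim that ``no facts added when extending $D'$ to $D$ will decrease the computation'' is the step where mean aggregation's failure of monotonicity under edge addition threatens the argument, and the paper justifies it only implicitly; your lemma makes the rescue explicit (every neighbour available in $D$ dominates $\mathbf{z}_{\ell-1}$, so the mean over any nonempty neighbourhood dominates the single-bare-neighbour mean in $D_\text{base}$). What the paper's decomposition buys is reusability of the splitting/merging observations elsewhere, and it sidesteps the need to state a global lower-bound lemma. One cosmetic omission on your side: the fully empty-body case ($j=k=0$), where $D_\text{base}$ would be empty, requires the separate device $D_\emptyset=\{P(b,a)\}$ that the paper notes outside the proof; your induction otherwise covers the degenerate cases (including self-loops and colours absent at $v^a$) correctly.
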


\begin{proof}
We prove both direction of the if and only if.
First, assume $r$ is sound for $\mathcal{M}$.
Notice that $(D_\text{base}, a) \models B$, where $B$ denotes the body concepts of $r$, so we have $A_{k+1}(a) \in T_r(D_\text{base})$.
But since $r$ is sound for $\mathcal{M}$, $T_r(D_\text{base}) \subseteq T_\mathcal{M}(D_\text{base})$ and thus $A_{k+1}(a) \in T_\mathcal{M}(D_\text{base})$ as required.

Now assume that $A_{k+1}(a) \in T_\mathcal{M}(D_\text{base})$.
We prove that $r$ is sound for $\mathcal{M}$.
For this purpose, let $D$ be an arbitrary dataset: we prove that $T_r(D) \subseteq T_\mathcal{M}(D)$.
So let $A_{k+1}(c) \in T_r(D)$.

\begin{center}
\textbf{Extending The Base Dataset}
\end{center}

Consider the set $D_\text{splits}$, consisting of every dataset $\{ A_1(a), ..., A_k(a), P_1(a, b_1), ..., P_j(a, b_j) \}$ for (not necessarily mutually distinct) constants $b_1, ..., b_j$ that are distinct to $a$.
Then for each such $D_1 \in D_\text{splits}$, the computation of $\mathcal{M}(D_1)$ is identical to that of $\mathcal{M}(D_\text{base})$.
So $A_{k+1}(a) \in T_\mathcal{M}(D_1)$.

Now consider the set $D_\text{merges}$, consisting of every dataset $\{ A_1(a), ..., A_k(a), P_1(a, b_1), ..., P_j(a, b_j) \}$ for (not necessarily mutually distinct) constants $b_1, ..., b_j$ that are not necessarily distinct to $a$.
Each $D_2 \in D_\text{merges}$ either has $a$ distinct to all $b_1, ..., b_j$, or it does not.
If it does, then $D_2 \in D_\text{splits}$, in which case $A_{k+1}(a) \in T_\mathcal{M}(D_2)$
Otherwise, $D_2$ has a corresponding $D_1 \in D_\text{splits}$ where each $b_i$ such that $b_i = a$ is swapped out for another constant $b_i' \not= a$.

Consider $\mathbf{v}^a_L$ and ${\mathbf{v}^a_L}'$, the outputs for node $v^a$ when $\mathcal{M}$ is applied to $D_1$ and $D_2$ respectively.
Notice that, by the operations of MAGNNs, for all $p \in \{ 1, ..., \delta \}$ we have $\mathbf{v}^a_L[p] \leq {\mathbf{v}^a_L}'[p]$.
This is because merging each of the neighbours $b_i'$ of $a$ into $b_i = a$ will never decrease the output of the MAGNN.

Now let $p$ be the index of $A_{k+1}$ in the canonical encoding.
Then since $\mathbf{v}^a_L[p] \leq {\mathbf{v}^a_L}'[p]$ and $A_{k+1}(a) \in T_\mathcal{M}(D_1)$, we have $t_\mathcal{M} \leq \mathbf{v}^a_L[p] \leq {\mathbf{v}^a_L}'[p]$, and thus $A_{k+1}(a) \in T_\mathcal{M}(D_2)$.
So for every $D_2 \in D_\text{merges}$, we have $A_{k+1}(a) \in T_\mathcal{M}(D_2)$.

\begin{center}
\textbf{Conclusion}
\end{center}
We now prove that $A_{k+1}(c) \in T_\mathcal{M}(D)$.
Since $A_{k+1}(c) \in T_r(D)$, there exists (not necessarily distinct) constants $d_1, ..., d_j$ in $D$ such that $D' := \{ A_1(c), ..., A_k(c), P_1(c, d_1), ..., P_j(c, d_j) \} \subseteq D$.
There are potentially multiple such constants $d_i$ that could be used for $D' \subseteq D$.
For each $i \in \{1, ..., j \}$, if there exists constant $d_i \not= a$ such that $P_i(c, d_i) \in D$, then choose this $d_i$ to be in $D'$.

But then, up to the relabelling of the constant $c \mapsto a$, we have $D' \in D_\text{merges}$, so $A_{k+1}(c) \in T_\mathcal{M}(D')$, since MAGNNs are agnostic to the specific constants used.

$D$ is a superset of $D'$.
Consider $v^c \in \texttt{enc}(D')$ and the application of $\mathcal{M}$ to $D'$ and $D$, resulting in ${\mathbf{v}^c_L}'$ and ${\mathbf{v}^c_L}$, respectively.
Any unary facts in $D$ that are not in $D'$ will preserve ${\mathbf{v}^c_L}' \leq {\mathbf{v}^c_L}$, from the definition of MAGNNs and the canonical encoding.
The only binary facts in $D$ that are not in $D'$ that could decrease the computation of ${\mathbf{v}^c_L}'$ (leading to ${\mathbf{v}^c_L}'[p] > {\mathbf{v}^c_L}[q]$ for some index $q$) are of the form $P_i(c, d_i)$ for $c \not= d_i$, where $D'$ contains some fact $P_i(c, c)$.
However, as was stated in the definition of $D'$, if $D$ contains a fact $P_i(c, d_i)$ with $c \not= d_i$, then $D'$ will not contain $P_i(c, c)$.
So, no facts added when extending $D'$ to $D$ will decrease the computation of ${\mathbf{v}^c_L}'$, and we obtain ${\mathbf{v}^c_L}' \leq {\mathbf{v}^c_L}$.

Now let $p$ be the index of $A_{k+1}$ in the canonical encoding.
Then since $A_{k+1}(c) \in T_\mathcal{M}(D')$, we have $t_\mathcal{M} \leq {\mathbf{v}^c_L}'[p] \leq {\mathbf{v}^c_L}[p]$, and thus $A_{k+1}(c) \in T_\mathcal{M}(D)$, as required.

\end{proof}

\subsection{Lemma \ref{lemma:dataset_models_constructed_concept}} \label{app:proof:lemma:dataset_models_constructed_concept}
We provide a lemma that is used in the proof of \Cref{thm:explanation}.

\begin{lemma} \label{lemma:dataset_models_constructed_concept}
Let $\mathcal{M}$ be a MAGNN, $D_a$ a dataset, and $a \in \texttt{con}(D_a)$ a constant.
For any $\ell \in \mathbb{N}_0$ and $c \in \texttt{con}(D_a)$, we have $(D_a, c) \models C_\ell^c$, where $C_\ell^c$ is dependent on $D_a$.
\end{lemma}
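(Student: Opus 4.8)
The plan is to prove the statement by induction on $\ell$, with the claim universally quantified over all $c \in \texttt{con}(D_a)$. This universal quantification is essential: the inductive step will reason about the neighbours of $c$ rather than about $c$ itself, so the inductive hypothesis must be available at every constant of $D_a$, not just at the one under consideration.

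For the base case $\ell = 0$, the concept $C_0^c$ is just $\top ~\sqcap~ A_1 ~\sqcap~ \dots ~\sqcap~ A_k$, where $A_1, \dots, A_k$ are by definition precisely the atomic concepts with $A_i(c) \in D_a$. Since $(D_a, c) \models \top$ and $(D_a, c) \models A_i$ for each $i$, the conjunction is satisfied and $(D_a, c) \models C_0^c$ follows immediately. For the inductive step I would assume $(D_a, c') \models C_{\ell-1}^{c'}$ for every $c' \in \texttt{con}(D_a)$ and fix an arbitrary $c$. Recalling \Cref{def:explanation_concept}, $C_\ell^c$ is the conjunction of the unary part $\top ~\sqcap~ A_1 ~\sqcap~ \dots ~\sqcap~ A_k$ (handled exactly as in the base case) together with, for each binary predicate $P$ with $n > 0$ successors $c_1, \dots, c_n$ (the complete, and hence pairwise distinct, list of constants satisfying $P(c, c_i) \in D_a$), the conjunct $\exists_n P.(C_{\ell-1}^{c_1}, \dots, C_{\ell-1}^{c_n}) ~\sqcap~ \leq_n P.\top$. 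As satisfaction distributes over $\sqcap$, it suffices to verify each $P$-conjunct separately.

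For a fixed $P$, I would witness $\exists_n P.(C_{\ell-1}^{c_1}, \dots, C_{\ell-1}^{c_n})$ by taking the genuine successors $c_1, \dots, c_n$ themselves as the required distinct constants: they are pairwise distinct, they satisfy $P(c, c_i) \in D_a$ by construction, and $(D_a, c_i) \models C_{\ell-1}^{c_i}$ holds by the inductive hypothesis, which is exactly what the semantics of $\exists_n$ demands. The conjunct $\leq_n P.\top$ holds because $c_1, \dots, c_n$ is the complete list of $P$-successors of $c$, so $c$ has exactly $n$ of them and therefore at most $n$. Combining the unary part with all the $P$-conjuncts then gives $(D_a, c) \models C_\ell^c$, closing the induction.

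I expect this proof to be essentially routine. The only point requiring genuine care is setting up the induction scheme correctly, namely quantifying over all constants $c$ at each level $\ell$ so that the inductive hypothesis is applicable to the neighbours $c_i$ appearing inside $\exists_n P.(\dots)$; without this the step does not go through. Once that is in place, the two verifications---satisfying $\exists_n$ via the actual neighbours and satisfying $\leq_n P.\top$ from the completeness of the successor list---follow directly from the definitions of the $\exists_n$ and $\leq_n$ operators, with no numerical reasoning about $\mathcal{M}$ needed at this stage.
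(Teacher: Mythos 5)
Your proof is correct and follows essentially the same route as the paper's: induction on $\ell$ with the hypothesis quantified over all constants of $D_a$, the base case handled by the definition of the atomic conjuncts, and the inductive step witnessing $\exists_n P.(\dots)$ with the actual $P$-successors and deriving $\leq_n P.\top$ from their being the complete list. No gaps.
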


\begin{proof}
We prove this by induction on $\ell$.

\begin{center}
\textbf{Base Case}
\end{center}

For $\ell = 0$ and some $c \in \texttt{con}(D_a)$, we have $C_0^c = \top ~\sqcap~ A_1 ~\sqcap~ ... ~\sqcap~ A_k$, where $A_1, ..., A_k$ are all atomic concepts $A_i$ such that $A_i(c) \in D_a$.
But then $(D_a, c) \models A_i$ for every $A_i$ in the conjunction of $C_0^c$, so $(D_a, c) \models C_0^c$.

\begin{center}
\textbf{Inductive Step}
\end{center}

Consider some $\ell \in \mathbb{N}$, and assume that the claim holds for $\ell - 1$.
Let $c \in \texttt{con}(D_a)$.
We have $C_\ell^c = A_1 ~\sqcap~ ... ~\sqcap~ A_k ~\sqcap~ C'$, where $A_1, ..., A_k$ are all atomic concepts $A_i$ such that $A_i(c) \in D_a$ and $C'$ denotes the remainder of the concept.
Similarly to the base case, we obtain $(D_a, c) \models A_1 ~\sqcap~ ... ~\sqcap~ A_k$.

$C'$ is a conjunction of $\top$ with a concept $C_P$ for each binary predicate $P$ such that there exists $P(c, c_i) \in D_a$.
So consider such a $P$: we prove that $(D_a, c) \models C_P$.

Let $c_1, ..., c_n$ be all the constants $c_i \in \texttt{con}(D_a)$ such that $P(c, c_i) \in D$.
Then we have $C_P := \exists_n P.(C_{\ell-1}^{c_1}, ..., C_{\ell-1}^{c_n}) ~\sqcap~ \leq_n P.\top$.
Since $c$ has exactly $n$ $P$-neighbours in $D_a$, we have $(D_a, c) \models~ \leq_n P.\top $.
By induction, $(D_a, c_i) \models C_{\ell - 1}^{c_i}$ for every $i \in \{ 1, ..., n \}$.
Also, all $c_1, ..., c_n$ are distinct.
So from the semantics of $\exists_n$, $(D_a, c) \models \exists_n P.(C_{\ell-1}^{c_1}, ..., C_{\ell-1}^{c_n})$.
Thus, we obtain $(D_a, c) \models C_P$.

Now since $(D_a, c) \models C_P$ holds for every binary predicate $P$, we have $(D_a, c) \models C'$, and thus $(D_a, c) \models C_\ell^c$, as required.
\end{proof}

\subsection{Theorem \ref{thm:explanation}} \label{app:proof:thm:explanation}

\begin{theorem*}
For any MAGNN $\mathcal{M}$, dataset $D_a$, and fact $A(a) \in T_\mathcal{M}(D_a)$, the rule $r: C^a_L \sqsubseteq A$ (dependent on $D_a$) is sound for $\mathcal{M}$, and $A(a) \in T_r(D_a)$.
\end{theorem*}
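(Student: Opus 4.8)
The plan is to establish the two assertions separately: first the cheap ``explanation'' claim $A(a) \in T_r(D_a)$, and then soundness, which carries the real content. For the former I would simply invoke \Cref{lemma:dataset_models_constructed_concept} with $\ell = L$ and $c = a$ to obtain $(D_a, a) \models C_L^a$. Since $r$ is the rule $C_L^a \sqsubseteq A$ and $a \in \texttt{con}(D_a)$, the definition of the immediate consequence operator gives $A(a) \in T_r(D_a)$ at once.

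For soundness, fix an arbitrary dataset $D_b$ and a constant $b$ with $A(b) \in T_r(D_b)$, i.e. $(D_b, b) \models C_L^a$; the goal is $A(b) \in T_\mathcal{M}(D_b)$. Because $\mathcal{M}$ has $L$ layers, the label $\mathbf{v}^a_L$ depends only on the $L$-hop neighbourhood $D_L^a$ of $a$, so $A(a) \in T_\mathcal{M}(D_a)$ yields $A(a) \in T_\mathcal{M}(D_L^a)$; moreover $C_L^a$ is by construction an exact description of $D_L^a$, recording at each node up to depth $L$ which unary predicates hold and, through each conjunct $\exists_n P.(\dots) \sqcap \leq_n P.\top$, exactly how many $P$-successors there are and what they look like. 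The heart of the argument is then a component-wise domination proved by induction on $\ell$: for every $\ell \leq L$, every $c$ occurring in $D_a$, and every $d$ in $D_b$ with $(D_b, d) \models C_\ell^c$, one has $\mathbf{v}^d_\ell \geq \mathbf{v}^c_\ell$, the left side computed by $\mathcal{M}$ on $D_b$ and the right side on $D_a$. The base case $\ell = 0$ is immediate, since $C_0^c$ forces $d$ to carry every unary predicate of $c$, so the Boolean inputs satisfy $\mathbf{v}^d_0 \geq \mathbf{v}^c_0$.

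For the inductive step I would compare the pre-activations of $v^d$ and $v^c$ term by term, exploiting non-negativity of every $\mathbf{A}_\ell$ and $\mathbf{B}^c_\ell$ and monotonicity of $\sigma_\ell$. The self-term dominates because $C_\ell^c$ entails $C_{\ell-1}^c$, so the inductive hypothesis gives $\mathbf{v}^d_{\ell-1} \geq \mathbf{v}^c_{\ell-1}$; for each predicate $P$ used by $c$, the conjunct $\exists_n P.(C_{\ell-1}^{c_1}, \dots, C_{\ell-1}^{c_n}) \sqcap \leq_n P.\top$ forces $d$ to have exactly $n$ distinct $P$-successors, each dominating one of $c$'s $n$ successors by the inductive hypothesis, so the two means (over the same count $n$) compare in the right direction; and for each predicate $P'$ not used by $c$, the $c$-side is the empty aggregate (contributing $0$ under the standard convention) while the $d$-side is a non-negative aggregate scaled by a non-negative matrix, hence still at least the $c$-side. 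Summing these inequalities and applying $\sigma_\ell$ yields $\mathbf{v}^d_\ell \geq \mathbf{v}^c_\ell$. Taking $\ell = L$, $c = a$, and $d = b$ then gives $\mathbf{v}^b_L \geq \mathbf{v}^a_L$; since the $A$-component of $\mathbf{v}^a_L$ is at least $t_\mathcal{M}$, so is that of $\mathbf{v}^b_L$, whence $A(b) \in T_\mathcal{M}(D_b)$.

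The main obstacle is exactly the binary-predicate case of the inductive step. Mean aggregation is in general non-monotonic, so the domination could fail if $d$ had additional low-valued $P$-successors diluting the mean; it is precisely the $\leq_n P.\top$ conjuncts that pin $d$'s number of $P$-successors to match $c$'s, neutralising this effect. This point must be handled together with the empty-multiset convention (for predicates that $c$ does not use) and with the fact that MAGNN outputs are invariant under renaming of constants, so that the structural correspondence witnessed by the $\exists_n$ operators between $D_b$ and $D_L^a$ is all that the argument requires.
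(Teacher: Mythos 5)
Your proposal is correct and follows essentially the same route as the paper's proof: reduce to the $L$-hop neighbourhood $D_L^a$, then establish component-wise domination $\mathbf{v}^{d}_\ell \geq \mathbf{v}^{c}_\ell$ by induction on the layer, with the $\leq_n P.\top$ conjuncts forcing equal successor counts so that the means compare and the empty-aggregate/non-negativity conventions handling predicates unused at $c$. The only difference is presentational: the paper first builds an explicit constant mapping $\pi$ with a level assignment and proves the inequality along $\pi$, whereas you fold that bookkeeping into a universally quantified induction hypothesis over all pairs $(c,d)$ with $(D_b,d)\models C_\ell^c$ (using that $C_\ell^c$ entails $C_{\ell-1}^c$ for the self-term), which is an equivalent and if anything slightly cleaner formulation.
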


\begin{proof}
First, we have $A(a) \in T_r(D_a)$, since $A$ is the head of $r$ and $(D_a, a) \models C_L^a$ follows directly from \Cref{lemma:dataset_models_constructed_concept}.
It remains to be shown that $r$ is sound for $\mathcal{M}$.
Let $D_b$ be a dataset: to show soundness, we prove that $T_r(D_b) \subseteq T_\mathcal{M}(D_b)$.
For this purpose, let $A(b) \in T_r(D_b)$ be an arbitrary fact; we prove $A(b) \in T_\mathcal{M}(D_b)$.


\begin{center}
\textbf{Minimal Satisfaction Dataset}
\end{center}

We define a dataset $D_L^a$ in an analogous fashion to the definition of $C_L^a$, to be the $L$-hop neighbourhood of the constant $a$ in $D_a$.

Given $c \in \texttt{con}(D_a)$, $\ell \in \mathbb{N}_0$, we inductively define a dataset $D_\ell^c$ by
$D_\ell^c := \{ A_i(c) ~|~ A_i(c) \in D_a \}$.
Then, if $\ell = 0$, we are done: $D_\ell^c$ has been defined.
Otherwise, for each binary predicate $P$, let $c_1, ..., c_n$ be all the constants $c_i \in \texttt{con}(D_a)$ such that $P(c, c_i) \in D_a$.
Then, if $n > 0$, extend $D_\ell^c$ as follows: $D_\ell^c := D_\ell^c \cup D_{\ell-1}^{c_1} \cup D_{\ell-1}^{c_2} \cup ... \cup D_{\ell-1}^{c_n} \cup \{ P(c, c_i) ~|~ P(c, c_i) \in D_a \}$.
$D_\ell^c$ is extended for each binary predicate $P$.
Note that each $D_{\ell-1}^{c_i}$ is defined inductively.

At this point in the proof, we have three datasets to consider:
\begin{itemize}
    \item $D_a$ --- the original input dataset from which a fact $A(a)$ was produced by the MAGNN $T_\mathcal{M}$
    \item $D_b$ --- an arbitrary dataset instantiated for the proof of soundness
    \item $D_L^a$ --- the $L$-hop neighbourhood of the constant $a$ in $D_a$
\end{itemize}

In the remainder of the proof, we show the following in sequence:

(1) $A(a) \in T_\mathcal{M}(D_L^a)$, since $A(a) \in T_\mathcal{M}(D_a)$ --- follows from definition of $D_L^a$ as the $L$-hop neighbourhood.

(2) $A(b) \in T_\mathcal{M}(D_b)$, since $A(a) \in T_\mathcal{M}(D_L^a)$ --- follows from the fact that any differences between $D_b$ and $D_L^a$ cannot decrease the MAGNN output for $v^a$ / $v^b$, plus a mapping between constants of $D_b$ and $D_L^a$.

\begin{center}
\textbf{(1) Prove $A(a) \in T_\mathcal{M}(D_L^a)$ Using $A(a) \in T_\mathcal{M}(D_a)$}
\end{center}

First note that $D_L^a \subseteq D_a$, since only facts from $D_a$ are included in its definition.
Consider $\mathbf{v}^a_L, {\mathbf{v}^a_L}'$, the output of $\mathcal{M}$ on $D_a$, $D_L^a$ respectively for the vertex corresponding to $a$, and let $p$ be the index of $A$ in the canonical encoding.
Since $A(a) \in T_\mathcal{M}(D_a)$, $\mathbf{v}^a_L[p] \geq t_\mathcal{M}$.
However, since $\mathcal{M}$ has only $L$ layers, the computation of $\mathbf{v}^a_L$ is affected only by vertices in the $L$-hop neighbourhood of $v^a$.

Since $D_L^a$ contains exactly the $L$-hop neighbourhood of $a$, we thus have $\mathbf{v}^a_L = {\mathbf{v}^a_L}'$.
This implies ${\mathbf{v}^a_L}'[p] \geq t_\mathcal{M}$ and thus $A(a) \in T_\mathcal{M}(D_L^a)$.

\begin{center}
\textbf{(2) Prove $A(b) \in T_\mathcal{M}(D_b)$ Using $A(a) \in T_\mathcal{M}(D_L^a)$}
\end{center}

First, we inductively construct a mapping $\pi: \texttt{con}(D_L^a) \to \texttt{con}(D_b)$ from constants to constants.
Intuitively, this will track ``corresponding'' constants from the two datasets in the computation of $\mathcal{M}$.
We also assign a ``level'' to each constant of $\texttt{con}(D_L^a)$ from the set $\{ 0, ..., L \}$.
We perform the induction on $\ell$ from $L$ to $0$, simultaneously proving that for each constant $c \in \texttt{con}(D_L^a)$ of level $\ell$, we have:

\begin{equation} \label{eq:min_sat_datasets_sat}
(D_L^a, c) \models C_\ell^c \text{ and } (D_b, \pi(c)) \models C_\ell^c
\end{equation}

\begin{center}
\textbf{Inductive Construction of Constant Mapping}
\end{center}

For the base case, we define $\pi(a) = b$, and let the level of $a$ be $L$.
Since $A(b) \in T_r(D_b)$, $(D_b, b) \models C_L^a$, the body of $r$.
So we have $(D_b, \pi(a)) \models C_L^a$ and also trivially $(D_L^a, a) \models C_L^a$.

Now consider each constant $c \in \texttt{con}(D_L^a)$ of level $\ell \geq 1$; we define $\pi$ for constants of level $\ell - 1$.
For each binary predicate $P$, let $c_1, ..., c_n$ be all the constants $c_i \in \texttt{con}(D_a)$ such that $P(c, c_i) \in D_a$.
Let the level of each $c_1, ..., c_n$ be $\ell - 1$.

Let $d := \pi(c)$.
By the inductive assumption, $(D_b, \pi(c)) \models C_\ell^c$, so we have that $(D_b, d) \models \exists_n P.(C_{\ell-1}^{c_1}, ..., C_{\ell-1}^{c_n}) ~\sqcap~ \leq_n P.\top$.
So there exist distinct constants $d_1, ..., d_n \in \texttt{con}(D_b)$ such that $(D_b, d_1) \models C_{\ell-1}^{c_1}, ..., (D_b, d_n) \models C_{\ell-1}^{c_n}$ and $P(d, d_1) \in D_b, ..., P(d, d_n) \in D_b$.
We define $\pi(c_1) = d_1, ..., \pi(c_n) = d_n$.
Notice that we have that $(D_b, \pi(c_1)) \models C_{\ell-1}^{c_1}, ..., (D_b, \pi(c_n)) \models C_{\ell-1}^{c_n}$, and trivially from the construction of $D_L^a$ that $(D_L^a, c_1) \models C_{\ell-1}^{c_1}, ..., (D_b, c_n) \models C_{\ell-1}^{c_n}$, as required for the induction proof.

\begin{center}
\textbf{Monotonicity}
\end{center}

For each constant $c \in \texttt{con}(D_L^a)$ and $d \in \texttt{con}(D_b)$, let $\mathbf{v}^c_L$, ${\bar{\mathbf{v}}^d_L}$ be the outputs when $\mathcal{M}$ is applied to $D_L^a$ and $D_b$, respectively.
Likewise, let $E^o, \bar{E}^o$ denote the edges and $N_o, \bar{N}_o$ the neighbours in $\texttt{enc}(D_L^a)$ and $\texttt{enc}(D_b)$ for colour $o$, respectively.
We prove the following claim by induction on $\ell$ from $0$ to $L$.

\textbf{Claim:} For each $\ell \in \{0, ..., L \}$, constant $c \in \texttt{con}(D_L^a)$ of level $\ell' \in \{ \ell, ..., L \}$, and $i \in \{ 1, ..., \delta_\ell \}$, we have $\mathbf{v}^c_\ell[i] \leq {\bar{\mathbf{v}}^{\pi(c)}_\ell}[i]$.

\begin{center}
\textbf{Base Case}
\end{center}

For the base case ($\ell = 0$), let $i \in \{ 1, ..., \delta_0 \}$, let $\ell' \in \{ 0, ..., L \}$, and consider each constant $c \in \texttt{con}(D_L^a)$ of level $\ell'$.

As proven in \Cref{eq:min_sat_datasets_sat}, we have $(D_L^a, c) \models C_{\ell'}^c$ and $(D_b, \pi(c)) \models C_{\ell'}^c$.
Recall that $C_{\ell'}^c$ includes $\top ~\sqcap~ A_1 ~\sqcap~ ... ~\sqcap~ A_k$, where $A_1, ..., A_k$ are all atomic concepts $A'$ such that $A'(c) \in D_a$.
From the definition of $D_L^a$, the unary facts in $D_L^a$ that mention $c$ are precisely $A_1(c), ..., A_k(c)$.
Then since $(D_b, \pi(c)) \models C_{\ell'}^c$, we have $A_1(\pi(c)), ..., A_k(\pi(c)) \in D_b$.

So if $\mathbf{v}^c_0[i] = 1$, then $A_i(c) \in D_L^a$, where $A_i$ is the unary predicate with index $i$ in the canonical encoding.
This implies that $A_i(\pi(c)) \in D_b$, and thus ${\bar{\mathbf{v}}^{\pi(c)}_0}[i] = 1$.
This proves the base case, since $\mathbf{v}^c_0[i] = 1 \implies {\bar{\mathbf{v}}^{\pi(c)}_0}[i] = 1$ and the definition of the canonical encoding let us conclude that $\mathbf{v}^c_0[i] \leq {\bar{\mathbf{v}}^{\pi(c)}_0}[i]$.

\begin{center}
\textbf{Inductive Step}
\end{center}

For the inductive step, we prove the claim for $\ell \geq 1$; to do so, assume the claim holds for $\ell - 1$.
Let $\ell' \in \{  \ell, ..., L \}$, let $i \in \{ 1, ..., \delta_\ell \}$, and consider each constant $c \in \texttt{con}(D_L^a)$ of level $\ell'$.
We prove that $\mathbf{v}^c_\ell[i] \leq {\bar{\mathbf{v}}^{\pi(c)}_\ell}[i]$.
Consider the computation of $\mathbf{v}^c_\ell[i]$ and ${\bar{\mathbf{v}}^{\pi(c)}_\ell}[i]$:

\begin{align}
\mathbf{v}^c_\ell[i] = \sigma_\ell \left( \mathbf{b}_\ell[i]
+ \sum_{j=1}^{\delta_{\ell-1}}
\mathbf{A}_\ell[i, j] \mathbf{v}^c_{\ell - 1}[j]
+ \sum_{o \in \text{$\col$}}
\sum_{j=1}^{\delta_{\ell - 1}}
\mathbf{B}_\ell[i, j]
~\text{mean}(\multisetopen \mathbf{u}_{\ell - 1}[j] ~|~ (v^c, u) \in E^o \multisetclose)
\right)
\end{align}

\begin{align}
\bar{\mathbf{v}}^{\pi(c)}_\ell[i] = \sigma_\ell \left( \mathbf{b}_\ell[i]
+ \sum_{j=1}^{\delta_{\ell-1}}
\mathbf{A}_\ell[i, j] \bar{\mathbf{v}}^{\pi(c)}_{\ell - 1}[j]
+ \sum_{o \in \text{$\col$}}
\sum_{j=1}^{\delta_{\ell - 1}}
\mathbf{B}_\ell[i, j]
~\text{mean}(\multisetopen \bar{\mathbf{u}}_{\ell - 1}[j] ~|~ (v^{\pi(c)}, u) \in \bar{E}^o \multisetclose)
\right)
\end{align}

From the induction hypothesis, since $c$ is of level $\ell' \geq \ell - 1$, we have $\mathbf{v}^c_{\ell - 1}[j] \leq \bar{\mathbf{v}}^{\pi(c)}_{\ell - 1}[j]$.
Since all else in the above equations is equal, it remains to be shown that, for all colours $o \in \col$ and $j \in \{1, ..., \delta_{\ell - 1} \}$:

\begin{align*}
~\text{mean}(\multisetopen \mathbf{u}_{\ell - 1}[j] ~|~ (v^c, u) \in E^o \multisetclose)
\leq
~\text{mean}(\multisetopen \bar{\mathbf{u}}_{\ell - 1}[j] ~|~ (v^{\pi(c)}, u) \in \bar{E}^o \multisetclose)
\end{align*}

So let $o \in \col$, $j \in \{1, ..., \delta_{\ell - 1} \}$, and let $P$ be the binary predict corresponding to colour $o$.
Now if there exists no $c_k \in \texttt{con}(D_a)$ such that $P(c, c_k) \in D_a$, then by the definition of $E^o$ as the edges from $\texttt{enc}(D_L^a)$, we have $E^o = \emptyset$ and thus $\text{mean}(\multisetopen \mathbf{u}_{\ell - 1}[j] ~|~ (v^c, u) \in E^o \multisetclose) = 0$.
So the inequality trivially holds, since each $\bar{\mathbf{u}}_{\ell - 1}[j]$ is non-negative.

Thus, instead consider all $c_1, ..., c_n  \in \texttt{con}(D_a)$ such that $P(c, c_k) \in D_a$.
Note that each such $c_k$ has level $\ell' - 1$.
Then since $\ell' - 1 \geq \ell - 1$, from the inductive hypothesis, we have $\mathbf{v}^{c_k}_{\ell - 1}[j] \leq {\bar{\mathbf{v}}^{\pi(c_k)}_{\ell - 1}}[j]$ for each $c_k$.

Furthermore, since $c \in \texttt{con}(D_L^a)$ has level $\ell'$, we obtain $(D_b, \pi(c)) \models C_{\ell'}^c$ from \Cref{eq:min_sat_datasets_sat}.
Thus, we have $(D_b, \pi(c)) \models \exists_n P.(C_{\ell'-1}^{c_1}, ..., C_{\ell'-1}^{c_n}) ~\sqcap~ \leq_n P.\top$.
So there exist exactly $n$ distinct constants $d_1, ..., d_n \in \texttt{con}(D_b)$ such that $P(\pi(c), d_1) \in D_b, ..., P(\pi(c), d_n) \in D_b$.
However, recall from the definition of $\pi$, that these constants are precisely $d_1 = \pi(c_1), ..., d_n = \pi(c_n)$.

From the canonical encoding, we thus have a one-to-one correspondence between the sets $\{ v^{c_k} ~|~ (v^c, v^{c_k}) \in E^o \}$ and $\{ v^{\pi(c_k)} ~|~ (v^{\pi(c)}, v^{\pi(c_k)}) \in \bar{E}^o \} = \{ v^{d_k} ~|~ (v^{\pi(c)}, v^{d_k}) \in \bar{E}^o \}$.
Also, as stated above, from the inductive hypothesis, we have $\mathbf{v}^{c_k}_{\ell - 1}[j] \leq {\bar{\mathbf{v}}^{\pi(c_k)}_{\ell - 1}}[j]$ for each $c_k$.
Combining these, we obtain the inequality $~\text{mean}(\multisetopen \mathbf{u}_{\ell - 1}[j] ~|~ (v^c, u) \in E^o \multisetclose)
\leq
~\text{mean}(\multisetopen \bar{\mathbf{u}}_{\ell - 1}[j] ~|~ (v^{\pi(c)}, u) \in \bar{E}^o \multisetclose)$.
This allows us to conclude $\mathbf{v}^c_\ell[i] \leq {\bar{\mathbf{v}}^{\pi(c)}_\ell}[i]$, as required.

\begin{center}
\textbf{Conclusion}
\end{center}

We have shown that for each $\ell \in \{0, ..., L \}$, constant $c \in \texttt{con}(D_L^a)$ of level $\ell' \in \{ \ell, ..., L \}$, and $i \in \{ 1, ..., \delta_\ell \}$, we have $\mathbf{v}^c_\ell[i] \leq {\bar{\mathbf{v}}^{\pi(c)}_\ell}[i]$.

Now let $p$ be the index of $A$ in the canonical encoding.
We obtain $\mathbf{v}^a_L[p] \leq {\bar{\mathbf{v}}^{\pi(a)}_L}[p]$, and thus $\mathbf{v}^a_L[p] \leq {\bar{\mathbf{v}}^{b}_L}[p]$, since $b = \pi(a)$ and $a$ is of level $L$.
But since $A(a) \in T_\mathcal{M}(D^a_L)$, $\mathbf{v}^a_L[p] \geq t_\mathcal{M}$, thus ${\bar{\mathbf{v}}^{b}_L}[p] \geq t_\mathcal{M}$, and finally $A(b) \in T_\mathcal{M}(D_b)$.
\end{proof}
\clearpage
\section{Algorithms}

\subsection{\changemarker{Checking Soundness of ELUQ rules}} \label{app:sec:soundness_checking_alg}

\begin{algorithm}[H]

\Input{MAGNN $\mathcal{M}$}
\Output{$S$: set of sound rules for $\mathcal{M}$ of the form given in \Cref{thm:magnn_monotonic_rules} that together subsume every ELUQ rule that is sound for $\mathcal{M}$}

$S$ $\leftarrow \emptyset$ \tcp*{result set}

\ForEach{$n \in \{ 1, ... \delta + |\mathsf{Col}| \}$}{
  \tcp{consider rules whose body contains exactly $n$ concept atoms}
  \ForEach{candidate rule $r$ of form
    \[
      r:\; \exists P_1.\top \;\sqcap\; \dots \;\sqcap\; \exists P_j.\top
      \;\sqcap\; A_1 \;\sqcap\; \dots \;\sqcap\; A_k \;\sqsubseteq\; A_{k+1}
    \]
    with exactly $n$ body concepts
  }{
    \tcp{check subsumption by a sound (smaller) rule}
    \If{$\exists r'\in\text{$S$}$ with head $A_{k+1}$ whose body concepts are contained in those of $r$}{
      $S \leftarrow S \cup \{ r \}$ \;
      \Continue
    }
    \tcp{construct dataset with fresh constants $a\neq b$}
    $D_{\text{base}} \leftarrow \{A_1(a),\dots,A_k(a),\,P_1(a,b),\dots,P_j(a,b)\}$\;
    \tcp{test the soundness of $r$, using \Cref{prop:monotonic_rule_soundness_check}}
    \If{$A_{k+1}(a)\in T_{\mathcal M}(D_{\text{base}})$}{
      $S$ $\leftarrow$ $S$ $\cup \{r\}$\;
    }
  }
}
\Return{$S$}
\caption{Check the soundness of all rules of the form given in \Cref{thm:magnn_monotonic_rules} for  MAGNN $\mathcal M$}
\label{alg:magnn_monotonic_rules}
\end{algorithm}

\subsection{Refining Explanations} \label{app:sec:refining_explanations}
There are two primary issues with the rules defined in \Cref{thm:explanation}.

First, the size of the rules can be large in practice, making them difficult to read, and also limiting their ability to generalise to other datasets.
The size of the rule can be iteratively refined by pruning facts from $D_a$ and then recomputing $C_L^a$.

More precisely, for $D_a' := D_a \setminus \{ \alpha \}$ for some unary fact $\alpha \in D_a$ such that $A(a) \in T_\mathcal{M}(D_a')$, the rule $r': C_L^a \sqsubseteq A$ (dependent on $D_a'$) is still sound for $\mathcal{M}$, by the same argument made in \Cref{thm:explanation}.
This can be done iteratively, until we find that $A(a) \not\in T_\mathcal{M}(D_a')$.
This works similarly for $D_a' := D_a \setminus D_\alpha$, where $D_\alpha = \{ P(c, c_1), ..., P(c, c_n) \}$ for some binary predicate $P$ and constants $c, c_1, ..., c_n$, such that there is no fact $P(c, d) \in D_a$ with $P(c, d) \not\in D_\alpha$, and $A(a) \in T_\mathcal{M}(D_a')$.
Note that there are many different choices for $\alpha$ and $D_\alpha$ in each iteration, so the space should be searched to find explanatory rules with minimal size.
Only facts $\alpha$ and sets $D_\alpha$ within the $L$-hop neighbourhood of $a$ should be considered for pruning, as this is what will decrease the number of concepts in the body $C_L^a$.

The second issue is that the inclusion of the $\leq_n P.\top$ term limits their ability to produce facts on other datasets with higher numbers of neighbours.
As a possible solution, we can substitute $n$ for some $m > n$.
Consider a dataset $D_a$ and fact $A(a) \in T_\mathcal{M}(D_a)$.
Instead of defining $C_\ell^c := \ldots ~\sqcap~ \leq_n P.\top$ in \Cref{def:explanation_concept}, we can define $C_\ell^c := \ldots ~\sqcap~ \leq_m P.\top$, for some $m \geq n$.
Any $m$ can be chosen such that extending $D_a$ with $D_\ell^c := \{ P(c, d_1), ..., P(c, d_{m - n}) \}$ for fresh constants $d_1, ..., d_{m - n}$ still leads to $A(a) \in T_\mathcal{M}(D_a \cup D_\ell^c)$.
If the above is done for multiple constants $c_1, ..., c_k$ and $\ell_1, ..., \ell_k$, it must be ensured that $A(a) \in T_\mathcal{M}(D_a \cup D_{\ell_1}^{c_1} \cup ... \cup D_{\ell_k}^{c_k})$.

\changemarker{
The simple approach we take in this paper is as follows.
Given a predicted fact $A(a)$ on a dataset $D$, we first identify all atoms $A_1, ..., A_n$ such that $A_i(a) \in D$, and check if the rule $A_1 ~\sqcap~ ... ~\sqcap~ A_n \sqsubseteq A$ is sound using \Cref{prop:monotonic_rule_soundness_check}.
If it is not, we then identify all binary predicates $P_1, ..., P_m$ such that $P_i(a, b) \in D$ for any constant $b$, and check if the rule $A_1 ~\sqcap~ ... ~\sqcap~ A_n ~\sqcap~ \exists P_1.\top ~\sqcap~ ... ~\sqcap~ \exists P_m.\top \sqsubseteq A$ is sound using \Cref{prop:monotonic_rule_soundness_check}.
If either of these rules is sound, it explains the prediction $A(a)$ on $D$.
}
\clearpage
\section{Full Results} \label{app:full_results}
Full results for mean, sum, and max aggregation are given in \Cref{tab:results:lubm,tab:results:log_infer,tab:results:benchmarks}.
Randomly sampled sound monotonic rules of the form given in \Cref{thm:magnn_monotonic_rules} are shown in \Cref{tab:results:all_sample_rules}.
Randomly sampled explanatory rules for predictions on LUBM of the form given in \Cref{thm:explanation} are shown in \Cref{tab:results:more_sample_explanations}.

\changemarker{
Randomly sampled explanatory rules that have been reduced in size are shown in \Cref{tab:results:more_reduced_sample_explanations}.
The final entry in the table is an example of a predicted fact where the size reduction process failed and fell back to using \Cref{thm:explanation}.
}

\changemarker{
On average, out of $1990$ true positive predictions, the model only failed $22$ times to explain the predicted fact using a rule from \Cref{thm:magnn_monotonic_rules}.
For $4$ of our $5$ trained MAGNNs, it only failed $10$ times each.
For the other MAGNN, it failed $72$ times.
One factor that brought up the average number of body concepts is that some of the explanations were very large: for example, the rule that explained the prediction $\text{Department}(\text{http://www.Department2.University0.edu})$ had 1914 body concepts.
}

%
%
%
%
%
%
%
%
%
%
%
%
\begin{table}
\centering
\begin{tabular}{lll|rrrrr}
\toprule
Dataset & Agg & Weights & \%Acc & \%Prec & \%Rec & F1 \\

\midrule
LUBM
& Mean
& Standard
& $97.1 \pm 0.0$ & $96.9 \pm 0.0$ & $97.2 \pm 0.0$ & $97.1 \pm 0.0$ \\
&& Non-Neg
& $91.5 \pm 0.0$ & $87.8 \pm 0.0$ & $96.4 \pm 0.0$ & $91.9 \pm 0.0$ \\
& Max
& Standard
& $95.7 \pm 0.0$ & $95.3 \pm 0.0$ & $96.3 \pm 0.0$ & $95.8 \pm 0.0$ \\
&& Non-Neg
& $91.6 \pm 0.0$ & $88 \pm 0.0$   & $96.3 \pm 0.0$ & $92 \pm 0.0$   \\
& Sum
& Standard
& $96.1 \pm 0.0$ & $95.9 \pm 0.0$ & $96.2 \pm 0.0$ & $96.1 \pm 0.0$ \\
&& Non-Neg
& $91 \pm 0.0$   & $88.1 \pm 0.0$ & $94.9 \pm 0.0$ & $91.4 \pm 0.0$ \\

\bottomrule
\end{tabular}
\caption{Results for GNNs across various aggregation functions and standard / non-negative weights, for the LUBM dataset. Metrics are computed on the test set and shown with a $95\%$ CI.}
\label{tab:results:lubm}
\end{table}

%
%
%
%
%
%
%
%
%
%
%
%
\begin{table}
\centering
\begin{tabular}{lll|rrrrr}
\toprule
Dataset & Agg & Weights & \%Acc & \%Prec & \%Rec & F1 \\

\midrule
WN-hier
& Mean
& Standard
& $99.5 \pm 0.0$ & $99.5 \pm 0.0$ & $99.5 \pm 0.0$ & $99.5 \pm 0.0$ \\
&& Non-Neg
& $99.8 \pm 0.0$ & $99.7 \pm 0.0$ & $99.9 \pm 0.0$ & $99.8 \pm 0.0$ \\
& Max
& Standard
& $99.3 \pm 0.0$ & $99.5 \pm 0.0$ & $99.1 \pm 0.0$ & $99.3 \pm 0.0$ \\
&& Non-Neg
& $99.9 \pm 0.0$ & $99.8 \pm 0.0$ & $100 \pm 0.0$  & $99.9 \pm 0.0$ \\
& Sum
& Standard
& $98.9 \pm 0.0$ & $99.5 \pm 0.0$ & $98.3 \pm 0.0$ & $98.9 \pm 0.0$ \\
&& Non-Neg
& $98.8 \pm 0.0$ & $97.6 \pm 0.0$ & $100 \pm 0.0$  & $98.8 \pm 0.0$ \\

\midrule
WN-sym
& Mean
& Standard
& $99.4 \pm 0.0$ & $99.5 \pm 0.0$ & $99.3 \pm 0.0$ & $99.4 \pm 0.0$ \\
&& Non-Neg
& $100 \pm 0.0$  & $100 \pm 0.0$  & $100 \pm 0.0$  & $100 \pm 0.0$  \\
& Max
& Standard
& $99.3 \pm 0.0$ & $99.1 \pm 0.0$ & $99.5 \pm 0.0$ & $99.3 \pm 0.0$ \\
&& Non-Neg
& $100 \pm 0.0$  & $100 \pm 0.0$  & $100 \pm 0.0$  & $100 \pm 0.0$  \\
& Sum
& Standard
& $99.1 \pm 0.0$ & $99.3 \pm 0.0$ & $98.8 \pm 0.0$ & $99.1 \pm 0.0$ \\
&& Non-Neg
& $100 \pm 0.0$  & $100 \pm 0.0$  & $100 \pm 0.0$  & $100 \pm 0.0$  \\

\midrule
WN-hier\_nmhier
& Mean
& Standard
& $86.2 \pm 0.0$ & $84.7 \pm 0.0$ & $88.4 \pm 0.0$ & $86.5 \pm 0.0$ \\
&& Non-Neg
& $71.6 \pm 0.0$ & $80.1 \pm 0.1$ & $58.8 \pm 0.1$ & $67.2 \pm 0.0$ \\
& Max
& Standard
& $86.9 \pm 0.0$ & $85.1 \pm 0.0$ & $89.5 \pm 0.0$ & $87.2 \pm 0.0$ \\
&& Non-Neg
& $69.5 \pm 0.0$ & $70.1 \pm 0.0$ & $68.4 \pm 0.0$ & $69.2 \pm 0.0$ \\
& Sum
& Standard
& $89.5 \pm 0.0$ & $88.1 \pm 0.0$ & $91.4 \pm 0.0$ & $89.7 \pm 0.0$ \\
&& Non-Neg
& $66.9 \pm 0.0$ & $73.5 \pm 0.1$ & $59.3 \pm 0.2$ & $62.7 \pm 0.1$ \\

\bottomrule
\end{tabular}
\caption{Results for GNNs across various aggregation functions and standard / non-negative weights, for the LogInfer datasets. Metrics are computed on the test set and shown with a $95\%$ CI.}
\label{tab:results:log_infer}
\end{table}

%
%
%
%
%
%
%
%
%
%
%
%
\begin{table}
\centering
\begin{tabular}{lll|rrrrr}
\toprule
Dataset & Agg & Weights & \%Acc & \%Prec & \%Rec & F1 \\
\midrule
FB237v1
& Mean
& Standard
& $68.7 \pm 0.0$ & $95.4 \pm 0.0$ & $39.3 \pm 0.0$ & $55.7 \pm 0.0$ \\
&& Non-Neg
& $71.8 \pm 0.0$ & $75.4 \pm 0.0$ & $64.8 \pm 0.0$ & $69.7 \pm 0.0$ \\
& Max
& Standard
& $67 \pm 0.0$   & $94.9 \pm 0.0$ & $35.9 \pm 0.0$ & $52 \pm 0.0$   \\
&& Non-Neg
& $75.8 \pm 0.0$ & $85.9 \pm 0.1$ & $63.4 \pm 0.0$ & $72.6 \pm 0.0$ \\
& Sum
& Standard
& $68.4 \pm 0.0$ & $99.7 \pm 0.0$ & $36.8 \pm 0.0$ & $53.8 \pm 0.0$ \\
&& Non-Neg
& $73.8 \pm 0.0$ & $79.8 \pm 0.1$ & $64.5 \pm 0.0$ & $71.2 \pm 0.0$ \\

\midrule
WN18RRv1
& Mean
& Standard
& $93.7 \pm 0.0$ & $98.5 \pm 0.0$ & $88.8 \pm 0.0$ & $93.4 \pm 0.0$ \\
&& Non-Neg
& $95.5 \pm 0.0$ & $98.1 \pm 0.0$ & $92.7 \pm 0.0$ & $95.3 \pm 0.0$ \\
& Max
& Standard
& $93.6 \pm 0.0$ & $97 \pm 0.0$   & $90.1 \pm 0.0$ & $93.4 \pm 0.0$ \\
&& Non-Neg
& $95.5 \pm 0.0$ & $98.7 \pm 0.0$ & $92.1 \pm 0.0$ & $95.3 \pm 0.0$ \\
& Sum
& Standard
& $93.9 \pm 0.0$ & $96.1 \pm 0.0$ & $91.6 \pm 0.0$ & $93.8 \pm 0.0$ \\
&& Non-Neg
& $94.9 \pm 0.0$ & $97 \pm 0.0$   & $92.7 \pm 0.0$ & $94.8 \pm 0.0$ \\

\midrule
NELLv1
& Mean
& Standard
& $75.2 \pm 0.1$ & $93.8 \pm 0.0$ & $53.4 \pm 0.2$ & $65.7 \pm 0.2$ \\
&& Non-Neg
& $93.4 \pm 0.0$ & $88.8 \pm 0.0$ & $99.4 \pm 0.0$ & $93.8 \pm 0.0$ \\
& Max
& Standard
& $51.2 \pm 0.0$ & $18.8 \pm 0.4$ & $3.5 \pm 0.1$  & $5.9 \pm 0.1$  \\
&& Non-Neg
& $93.2 \pm 0.0$ & $92.4 \pm 0.0$ & $94.1 \pm 0.0$ & $93.2 \pm 0.0$ \\
& Sum
& Standard
& $51.8 \pm 0.0$ & $18.3 \pm 0.4$ & $4.9 \pm 0.1$  & $7.8 \pm 0.2$  \\
&& Non-Neg
& $92 \pm 0.0$   & $86.3 \pm 0.0$ & $100 \pm 0.0$  & $92.6 \pm 0.0$ \\

\bottomrule
\end{tabular}
\caption{Results for GNNs across various aggregation functions and standard / non-negative weights, for the standard benchmark datasets. Metrics are computed on the test set and shown with a $95\%$ CI.}
\label{tab:results:benchmarks}
\end{table}

%
%
%
%
%
%
%
%
%
%
%
%
\begin{table}
\centering
\begin{tabular}{l|p{10.5cm}}
\toprule
Dataset & Sample Sound Rules\\
\midrule
LUBM & $\top \sqsubseteq \text{Publication}$ \\
& $\top \sqsubseteq \text{UndergraduateStudent}$ \\
& $\exists \text{authorOfPublication}.\top \sqsubseteq \text{ResearchAssistant}$ \\
& $\exists \text{advisorOf}.\top \sqsubseteq \text{AssociateProfessor}$ \\
& $\exists \text{courseTakenBy}.\top \sqcap \exists \text{hasTeacher}.\top \sqsubseteq \text{Course}$ \\
& $\text{Department} \sqcap \text{UndergraduateStudent} \sqcap \exists \text{advisorOf}.\top \sqsubseteq \text{FullProfessor}$ \\

\midrule

WN-hier & $\text{\_member\_of\_domain\_usage}(X,Y) \rightarrow \text{\_member\_meronym}(X,Y)$ \\
& $\text{\_derivationally\_related\_form}(X,Y) \land \text{\_similar\_to}(X,Y) \rightarrow \text{\_synset\_domain\_topic\_of}(X,Y)$ \\
& $\text{\_also\_see}(X,Y) \land \text{\_similar\_to}(X,Y) \land \text{\_verb\_group}(X,Y) \rightarrow \text{\_instance\_hypernym}(X,Y)$ \\

\midrule

WN-sym & $\text{\_has\_part}(X,Y) \land \text{\_similar\_to}(X,Y) \rightarrow \text{\_derivationally\_related\_form}(X,Y)$ \\

\midrule

WN-hier\_nmhier & $\text{\_has\_part}(X,Y) \land \text{\_member\_meronym}(X,Y) \rightarrow \text{\_member\_of\_domain\_usage}(X,Y)$ \\
& $\text{\_has\_part}(X,Y) \land \text{\_verb\_group}(X,Y) \rightarrow \text{\_member\_of\_domain\_usage}(X,Y)$ \\
& $\text{\_hypernym}(X,Y) \land \text{\_synset\_domain\_topic\_of}(X,Y) \rightarrow \text{\_also\_see}(X,Y)$ \\

\midrule

FB237v1 & $\top \rightarrow \text{/music/instrument/instrumentalists}(X,Y)$ \\
& $\top \rightarrow \text{/people/person/sibling\_s./people/sibling\_relationship/sibling}(X,Y)$ \\
& $\text{/base/biblioness/bibs\_location/state}(X,Y) \rightarrow \text{/film/film/genre}(X,Y)$ \\
& $\text{/olympics/olympic\_games/sports}(X,Y) \rightarrow \text{/base/popstra/celebrity/dated./base/popstra/dated/participant}(X,Y)$ \\

\midrule

NELLv1 & $\text{organizationterminatedperson}(X,Y) \land \text{topmemberoforganization}(X,Y) \rightarrow \text{organizationhiredperson}(X,Y)$ \\
& $\text{worksfor}(X,Y) \land \text{organizationterminatedperson}(X,Y) \rightarrow \text{organizationhiredperson}(X,Y)$ \\

\bottomrule
\end{tabular}
\caption{Randomly sampled sound monotonic rules of the form given in \Cref{thm:magnn_monotonic_rules}.}
\label{tab:results:all_sample_rules}
\end{table}

%
%
%
%
%
%
%
%
%
%
%
%
\begin{table}
\centering
\begin{tabular}{l|p{12.4cm}}
\toprule

Fact & $\text{Publication}(\text{http://www.Department2.University0.edu/AssociateProfessor7/Publication0})$ \\
Rule & $\top \sqsubseteq \text{Publication}$ \\

\midrule

Fact & $\text{University}(\text{http://www.University370.edu})$ \\
Rule & $\top ~\sqcap~ \exists_{1} \text{grantedUndergraduateDegreeTo}.(\text{GraduateStudent} ~\sqcap~ \exists_{1} \text{authorOfPublication}.($ $ \text{Publication}) ~\sqcap~ \leq_{1} \text{authorOfPublication}.\top) ~\sqcap~ \leq_{1} \text{grantedUndergraduateDegreeTo}.\top \sqsubseteq \text{University}$ \\

\midrule

Fact & $ \text{GraduateStudent}(\text{http://www.Department6.University0.edu/GraduateStudent29})
$ \\
Rule & $ \top ~\sqcap~ \exists_{4} \text{authorOfPublication}.(\top, \top, \text{Publication}, \text{Publication}) ~\sqcap~ \leq_{4} \text{authorOfPublication}.\top \sqsubseteq \text{GraduateStudent}
$ \\

\midrule

Fact & $ \text{TeachingAssistant}(\text{http://www.Department10.University0.edu/GraduateStudent46})
$ \\
Rule & $ \text{GraduateStudent} ~\sqcap~ \exists_{3} \text{authorOfPublication}.(\top, \top, \text{Publication}) ~\sqcap~ \leq_{3} \text{authorOfPublication}.\top \sqsubseteq \text{TeachingAssistant}
$ \\

\midrule

Fact & $ \text{GraduateStudent}(\text{http://www.Department14.University0.edu/GraduateStudent2})
$ \\
Rule & $ \text{ResearchAssistant} ~\sqcap~ \exists_{5} \text{authorOfPublication}.(\top, \text{Publication}, \text{Publication}, \text{Publication}, $ $ \text{Publication}) ~\sqcap~ \leq_{5} \text{authorOfPublication}.\top \sqsubseteq \text{GraduateStudent}
$ \\

\midrule

Fact & $ \text{UndergraduateStudent}(\text{http://www.Department5.University0.edu/UndergraduateStudent87})
$ \\
Rule & $ \top \sqsubseteq \text{UndergraduateStudent}
$ \\

\midrule

Fact & $ \text{TeachingAssistant}(\text{http://www.Department6.University0.edu/GraduateStudent48})
$ \\
Rule & $ \top ~\sqcap~ \exists_{3} \text{authorOfPublication}.(\top, \text{Publication}, \top) ~\sqcap~ \leq_{3} \text{authorOfPublication}.\top \sqsubseteq \text{TeachingAssistant}
$ \\

\midrule

Fact & $ \text{GraduateStudent}(\text{http://www.Department6.University0.edu/GraduateStudent1})
$ \\
Rule & $ \top ~\sqcap~ \exists_{1} \text{authorOfPublication}.(\top) ~\sqcap~ \leq_{1} \text{authorOfPublication}.\top \sqsubseteq \text{GraduateStudent}
$ \\

\midrule

Fact & $ \text{Course}(\text{http://www.Department14.University0.edu/Course39})
$ \\
Rule & $ \top ~\sqcap~ \exists_{14} \text{courseTakenBy}.(\top, \text{UndergraduateStudent}, \text{UndergraduateStudent}, $ $ \text{UndergraduateStudent}, \text{UndergraduateStudent}, \text{UndergraduateStudent}, $ $ \text{UndergraduateStudent}, \text{UndergraduateStudent}, \text{UndergraduateStudent}, \top, \top, $ $ \text{UndergraduateStudent}, \text{UndergraduateStudent}, \text{UndergraduateStudent}) ~\sqcap~ \leq_{14} \text{courseTakenBy}.\top ~\sqcap~ \exists_{1} \text{hasTeacher}.(\text{AssistantProfessor} ~\sqcap~ \exists_{7} \text{advisorOf}.($ $ \text{UndergraduateStudent}, \text{UndergraduateStudent}, \top, \top, \text{TeachingAssistant} ~\sqcap~ \text{GraduateStudent}, $ $ \text{GraduateStudent}, \top) ~\sqcap~ \leq_{7} \text{advisorOf}.\top ~\sqcap~ \exists_{5} \text{authorOfPublication}.(\text{Publication}, \text{Publication}, $ $ \text{Publication}, \text{Publication}, \text{Publication}) ~\sqcap~ \leq_{5} \text{authorOfPublication}.\top) ~\sqcap~ \leq_{1} \text{hasTeacher}.\top \sqsubseteq \text{Course}
$ \\

\midrule

Fact & $ \text{ResearchAssistant}(\text{http://www.Department11.University0.edu/GraduateStudent71})
$ \\
Rule & $ \top ~\sqcap~ \exists_{4} \text{authorOfPublication}.(\text{Publication}, \text{Publication}, \text{Publication}, \text{Publication}) ~\sqcap~ \leq_{4} \text{authorOfPublication}.\top \sqsubseteq \text{ResearchAssistant}
$ \\

\midrule

Fact & $ \text{Course}(\text{http://www.Department14.University0.edu/Course46})
$ \\
Rule & $ \top ~\sqcap~ \exists_{9} \text{courseTakenBy}.(\top, \top, \text{UndergraduateStudent}, \top, \text{UndergraduateStudent}, \top, $ $ \text{UndergraduateStudent}, \top, \text{UndergraduateStudent}) ~\sqcap~ \leq_{9} \text{courseTakenBy}.\top ~\sqcap~ \exists_{1} \text{hasTeacher}.(\text{Lecturer}) ~\sqcap~ \leq_{1} \text{hasTeacher}.\top ~\sqcap~ \exists_{1} \text{hasTeachingAssistant}.(\text{TeachingAssistant} ~\sqcap~ \text{GraduateStudent} ~\sqcap~ $ $ \exists_{5} \text{authorOfPublication}.(\text{Publication}, \top, \text{Publication}, \text{Publication}, \top) ~\sqcap~ \leq_{5} \text{authorOfPublication}.\top) ~\sqcap~ \leq_{1} \text{hasTeachingAssistant}.\top \sqsubseteq \text{Course}
$ \\

\midrule

Fact & $ \text{University}(\text{http://www.University138.edu})
$ \\
Rule & $ \top ~\sqcap~ \exists_{2} \text{grantedUndergraduateDegreeTo}.(\text{GraduateStudent} ~\sqcap~ \exists_{1} \text{authorOfPublication}.($ $ \text{Publication}) ~\sqcap~ \leq_{1} \text{authorOfPublication}.\top, \text{GraduateStudent} ~\sqcap~ \exists_{5} \text{authorOfPublication}.($ $ \top, \text{Publication}, \top, \text{Publication}, \text{Publication}) ~\sqcap~ \leq_{5} \text{authorOfPublication}.\top) ~\sqcap~ \leq_{2} \text{grantedUndergraduateDegreeTo}.\top \sqsubseteq \text{University}
$ \\

\bottomrule
\end{tabular}
\caption{Randomly sampled explanatory rules of the form given in \Cref{thm:explanation}, shown with the corresponding facts produced by the MAGNN.}
\label{tab:results:more_sample_explanations}
\end{table}

%
%
%
%
%
%
%
%
%
%
%
%
\begin{table}
\centering
\begin{tabular}{l|p{12.4cm}}
\toprule

Fact & $\text{Publication}(\text{http://www.Department10.University0.edu/FullProfessor5/Publication3})$ \\
Rule & $\top \sqsubseteq \text{Publication}$ \\

\midrule

Fact & $\text{UndergraduateStudent}(\text{http://www.Department8.University0.edu/UndergraduateStudent170})$ \\
Rule & $\top \sqsubseteq \text{UndergraduateStudent}$ \\

\midrule

Fact & $\text{TeachingAssistant}(\text{http://www.Department5.University0.edu/GraduateStudent45})$ \\
Rule & $\text{GraduateStudent} ~\sqcap~ \exists \text{authorOfPublication}.\top \sqsubseteq \text{TeachingAssistant}$ \\

\midrule

Fact & $\text{ResearchAssistant}(\text{http://www.Department14.University0.edu/GraduateStudent91})$ \\
Rule & $\text{GraduateStudent} ~\sqcap~ \exists \text{authorOfPublication}.\top \sqsubseteq \text{ResearchAssistant}$ \\

\midrule

Fact & $\text{ResearchAssistant}(\text{http://www.Department1.University0.edu/GraduateStudent14})$ \\
Rule & $\text{GraduateStudent} \sqsubseteq \text{ResearchAssistant}$ \\

\midrule

Fact & $\text{ResearchAssistant}(\text{http://www.Department4.University0.edu/GraduateStudent26})$ \\
Rule & $\exists \text{authorOfPublication}.\top \sqsubseteq \text{ResearchAssistant}$ \\

\midrule

Fact & $\text{GraduateStudent}(\text{http://www.Department6.University0.edu/GraduateStudent86})$ \\
Rule & $\exists \text{authorOfPublication}.\top \sqsubseteq \text{GraduateStudent}$ \\

\midrule

Fact & $\text{GraduateStudent}(\text{http://www.Department12.University0.edu/GraduateStudent48})$ \\
Rule & $\text{ResearchAssistant} \sqsubseteq \text{GraduateStudent}$ \\

\midrule

Fact & $\text{GraduateStudent}(\text{http://www.Department3.University0.edu/GraduateStudent44})$ \\
Rule & $\text{ResearchAssistant} ~\sqcap~ \exists \text{authorOfPublication}.\top \sqsubseteq \text{GraduateStudent}$ \\

\midrule

Fact & $\text{GraduateStudent}(\text{http://www.Department0.University0.edu/GraduateStudent67})$ \\
Rule & $\text{TeachingAssistant} ~\sqcap~ \exists \text{authorOfPublication}.\top \sqsubseteq \text{GraduateStudent}$ \\

\midrule

Fact & $\text{GraduateCourse}(\text{http://www.Department4.University0.edu/GraduateCourse31})$ \\
Rule & $\exists \text{courseTakenBy}.\top ~\sqcap~ \exists \text{hasTeacher}.\top \sqsubseteq \text{GraduateCourse}$ \\

\midrule

Fact & $\text{Course}(\text{http://www.Department12.University0.edu/Course1})$ \\
Rule & $\exists \text{courseTakenBy}.\top ~\sqcap~ \exists \text{hasTeacher}.\top \sqsubseteq \text{Course}$ \\

\midrule

Fact & $\text{AssociateProfessor}(\text{http://www.Department7.University0.edu/AssociateProfessor4})$ \\
Rule & $\exists \text{advisorOf}.\top \sqsubseteq \text{AssociateProfessor}$ \\

\midrule

Fact & $\text{AssistantProfessor}(\text{http://www.Department13.University0.edu/AssistantProfessor10})$ \\
Rule & $\exists \text{advisorOf}.\top \sqsubseteq \text{AssistantProfessor}$ \\

\midrule

Fact & $\text{FullProfessor}(\text{http://www.Department6.University0.edu/FullProfessor3})$ \\
Rule & $\exists \text{advisorOf}.\top \sqsubseteq \text{FullProfessor}$ \\

\midrule

Fact & $\text{University}(\text{http://www.University772.edu})$ \\
Rule & $\exists \text{grantedUndergraduateDegreeTo}.\top \sqsubseteq \text{University}$ \\

\midrule

Fact & $\text{University}(\text{http://www.University547.edu})$ \\
Rule & $\text{University} ~\sqcap~ \exists \text{grantedUndergraduateDegreeTo}.\top \sqsubseteq \text{University}$ \\

\midrule

Fact & $\text{University}(\text{http://www.University900.edu})$ \\
Rule & $\text{University} ~\sqcap~ \exists_{2} \text{grantedDoctoralDegreeTo}.(\exists_{4} \text{advisorOf}.(\top, \text{GraduateStudent}, $ $ \text{GraduateStudent}$ $~\sqcap~ \text{ResearchAssistant}, \text{GraduateStudent} ~\sqcap~ \text{TeachingAssistant}) ~\sqcap~ \leq_{4} \text{advisorOf}.\top ~\sqcap~ $ $ \exists_{17} \text{authorOfPublication}.(\top, \text{Publication}, \text{Publication}, \text{Publication}, $ $ \text{Publication}, \text{Publication}, $ $ \top, \top, \text{Publication}, \text{Publication}, \text{Publication}, \top, \text{Publication}, $ $ \text{Publication}, \text{Publication}, \text{Publication}, \text{Publication}) ~\sqcap~ \leq_{17} \text{authorOfPublication}.\top, $ $ \exists_{14} \text{advisorOf}.(\text{UndergraduateStudent}, \text{UndergraduateStudent}, \text{UndergraduateStudent}, $ $ \text{UndergraduateStudent}, \top, \top, $ $ \text{GraduateStudent}, \text{GraduateStudent} ~\sqcap~ \text{ResearchAssistant}, $ $ \text{GraduateStudent}, \top, $ $ \text{TeachingAssistant} ~\sqcap~ \text{GraduateStudent}, \text{GraduateStudent}, $ $ \text{GraduateStudent} ~\sqcap~ $ $ \text{TeachingAssistant}, \text{GraduateStudent} ~\sqcap~ \text{TeachingAssistant}) ~\sqcap~ \leq_{14} \text{advisorOf}.\top $ $ ~\sqcap~ \exists_{9} \text{authorOfPublication}.(\top, \text{Publication}, \text{Publication}, \text{Publication}, $ $ \text{Publication}, \text{Publication}, \text{Publication}, \top, \text{Publication}) ~\sqcap~ \leq_{9} \text{authorOfPublication}.\top) $ $ ~\sqcap~ \leq_{2} \text{grantedDoctoralDegreeTo}.\top ~\sqcap~ \exists_{1} \text{grantedMastersDegreeTo}.(\text{Lecturer}) ~\sqcap~ \leq_{1} \text{grantedMastersDegreeTo}.\top \sqsubseteq \text{University}$ \\

\bottomrule
\end{tabular}
\caption{\changemarker{Randomly sampled explanatory rules that have been reduced in size, shown with the corresponding facts produced by the MAGNN. The rules are grouped by similar bodies / heads. These rules come from 5 MAGNNs trained with different random seeds.}}
\label{tab:results:more_reduced_sample_explanations}
\end{table}

\end{document}